\newtheorem{theorem}{Theorem}
\newtheorem{proposition}[theorem]{Proposition}
\newtheorem{lemma}[theorem]{Lemma}
\newtheorem{corollary}[theorem]{Corollary}
\theoremstyle{definition}
\newtheorem{example}[theorem]{Example}
\newtheorem{remark}[theorem]{Remark}
\newcommand{\RR}{\mathbb{R}}
\newcommand{\CC}{\mathbb{C}}
\newcommand{\ZZ}{\mathbb{Z}}
\newcommand{\NN}{\mathbb{N}}
\newcommand{\PD}{\operatorname{PD}}
\DeclareMathOperator{\diag}{diag}
\newcommand{\Crit}{{\rm Crit}}
\newcommand{\rk}{{\rm rk}}
\newcommand{\Sym}{{\rm Sym}}
\newcommand{\sym}{{\rm sym}}
\newcommand{\eg}{{e.g.}}
\newcommand{\ie}{{i.e.}}
\newif\ifshowcomments
\title{\textbf{Geometry and Optimization\\ of Shallow Polynomial Networks}}
\author{Yossi Arjevani, Joan Bruna, Joe Kileel, Elzbieta Polak, Matthew Trager}
\date{\today}
\begin{document}
\maketitle

\begin{abstract} We study shallow neural networks with monomial activations and output dimension one. The function space for these models can be identified with a set of symmetric tensors with bounded rank. We describe general features of these networks, focusing on the relationship between width and optimization. We then consider \emph{teacher-student} problems, which can be viewed as problems of low-rank tensor approximation with respect to non-standard inner products that are induced by the data distribution. In this setting, we introduce a \emph{teacher-metric data discriminant} which encodes the qualitative behavior of the optimization as a function of the training data distribution. 
Finally, we focus on networks with quadratic activations, presenting an in-depth analysis of the optimization landscape.  In particular, we present a variation of the Eckart-Young Theorem characterizing all critical points and their Hessian signatures for teacher-student problems with quadratic networks and Gaussian training data. 
\end{abstract}
\section{Introduction}

One of the main challenges in the theory of deep learning is to explain why
neural networks can be successfully trained in practice despite the fact that
their loss is highly non-convex. In recent years, researchers have argued that
this behavior is mainly a consequence of \emph{overparameterization}, that is,
the fact that models have a large number of parameters compared to the number of 
training samples. However, while the benefit of overparameterization has been
demonstrated in many settings, precise theoretical results often rely
on unrealistic assumptions, such as infinite 
neurons~\cite{chizat2018,NEURIPS2018_5a4be1fa, pmlr-v99-mei19a},
or only apply to very
restricted models, such as linear networks~\cite{kawaguchi2016}. Additionally, several works provide sufficient conditions for the number of
parameters needed for favorable optimization~\cite{du2018d,venturi2018spurious}, but they do not consider important information such as the training data distribution or the initialization of gradient descent.

In this paper, we seek a more precise understanding of the
conditions under which local minima pose a challenge to gradient descent. Our
general setting is based on \emph{shallow polynomial networks}, that are
functions $f_{\mathcal W}:
\mathbb R^{n} \rightarrow \RR$ of the form
\begin{equation}\label{eq:network_model} f_{\mathcal W}(x) = \sum_{i=1}^r
\alpha_{i} (w_i \cdot x)^d, \qquad \mathcal W = (\alpha,w_1,\ldots,w_{{r}}) \in
\RR^r \times (\RR^{n})^{{r} },
\end{equation} 
where $d \in \NN$. We believe that shallow polynomial networks capture many essential features of the optimization properties for more general
networks --- e.g., with ReLU activations --- while also being very mathematically 
appealing.   In particular, shallow polynomial networks are intimately related
to \emph{symmetric tensor decompositions}: the set $\mathcal F_r$ of all
shallow networks of \emph{width} at most $r$ can be identified with the set of
symmetric tensors of \emph{rank} at most $r$. Though this connection has
been noted in prior works~\cite{venturi2018spurious,kileel2019expressive}, its full implications for optimization have not yet been studied in detail. Our focus on algebraic networks aligns with recent work investigating neural network architectures using algebraic-geometric tools~\cite{kohn2024function,kohn2022geometry,henry2024geometry,Kubjas_2024,finkel2024activation}.
It is worth noting that the fact that polynomial activations are not universal approximators does not represent a limitation, as our study is centered on non-convex optimization. Moreover, polynomial networks of varying degrees are, in fact, universal approximators, {as a consequence of the Stone-Weierstrass theorem (see also~\cite{yu2021arbitrary})}

We begin by revisiting the connection between tensors and shallow networks, describing properties of tensors that offer insight into these models. A key aspect that emerges from this relationship is the existence of three distinct regimes for the network width $r$: 1) for small $r$, the set $\mathcal F_r$ of shallow networks is a
low-dimensional subset of its ambient space; 2) for intermediate $r$, the set
$\mathcal F_r$ is a full-dimensional (semi-algebraic) set and yet strictly contained in its ambient
space; 3) for large $r$, the set $\mathcal F_r$
``fills'' the space of all symmetric tensors completely. 
These settings
lead to very different behaviors when optimizing training losses, with the
filling regime being the ``easiest'' in a precise sense.
The Alexander-Hirschovitz Theorem~\cite{alexander1995} from the theory of tensors effectively classifies 
the sharp transitions between these regimes for all input dimensions $n$ and
activation degrees $d$.

After discussing general properties of polynomial networks, we focus on
\emph{teacher-student} training problems{~\cite{loureiro2021learning,saglietti2022analytical}}. For any $r,s \ge 1$, 
we consider a loss function of the form
\begin{equation}\label{eq:distance_rank} 
L_{\mathcal V}(\mathcal W) =
\|f_{\mathcal W} - f_{\mathcal V}\|^2, \quad \mathcal W \in \RR^r \times
(\RR^{n})^r, \,\,  \mathcal V \in \RR^s \times (\RR^{n})^s,
\end{equation} 
where we assume that the ``teacher'' model $\mathcal V$ is fixed and that we
optimize ``student'' parameters $\mathcal W$. Here $\|\cdot\|$ denotes a functional norm {induced by an inner product}, such as an $L^2$ inner product $\langle f, g
\rangle = \mathbb E_{x\sim \mathcal D}[f(x)g(x)]$ for some distribution of
inputs $\mathcal D$. Minimizing~\eqref{eq:distance_rank} is equivalent to finding the optimal \emph{low-rank approximation} of a
given symmetric tensor, where distance is measured according to the chosen inner
product in tensor space. For $L^2$ inner products, this distance may be expressed in terms of a finite set of moments of the input data distribution, and this
applies to both discrete and continuous distributions. We explain this viewpoint in detail and describe examples of inner products arising from different classes of distributions. We also connect our setting to the broader problem of computing the distance function of points to manifolds or varieties. For that problem, it is known that qualitative changes in the optimization arise when the target point crosses a certain locus  -- known as ED discriminant~\cite{draisma2013} or the focal locus~\cite{milnor2016morse} -- associated with the algebraic variety. We adapt this notion to define a \emph{teacher-metric data discriminant}, which accounts for variations not only in the teacher but also in the metric used for computing distances. Intuitively, this discriminant encodes the qualitative behavior of the optimization landscape as a function of the training data distribution. This is a novel tool for understanding non-convex optimization problems in machine learning, which we hope to investigate further in a separate publication.

When the weights of the teacher model have special symmetries (invariance to certain permutations), critical points of \eqref{eq:distance_rank} have been identified  as examples of \emph{symmetry breaking}  \cite{arjevani2021symmetry} (see generally \cite{arjevani2024symmetry}) under the class of inner products considered in this work, enabling the construction of infinite families of critical points and the characterization of their Hessian spectrum \cite{arjevani2023symmetry}. On the other hand,  when symmetry is absent, the optimization landscape for low-rank tensor approximation is not fully understood. Existing results only consider the Frobenius norm and are typically restricted to rank-$1$ approximation~\cite{ge2017optimization, draisma2018,anandkumar2014,kileel2021landscape}.  Even in the simpler setting of low-rank matrix approximation, the behavior under weighted norms also remains challenging~\cite{gillis2011low}. For this
reason, after some general results for arbitrary activation degrees, we conduct a detailed study for the particular case of quadratic activations. 
Shallow networks with quadratic activations have been the subject of several papers~\cite{du2018d,venturi2018spurious,soltanolkotabi2018,gamarnik2020,mannelli2020}. Compared to these works, our analysis provides a more general and
detailed picture of the loss landscape. 
In particular, we provide a
characterization of \emph{all} critical points and their signatures in the case of
Gaussian (or rotationally invariant) data distributions.  Perhaps surprisingly,
we also find that the teacher-student loss for
``generic'' data distributions behaves very differently, and leads to many more
critical points than the Gaussian case. This result illustrates potential dangers of theoretical
works focusing exclusively on Gaussian data.  Throughout the paper, tools of applied algebraic geometry allow us to achieve novel analysis for shallow networks.

\paragraph{{Summary of main results.}} 

Our main contributions can be summarized as follows:
\begin{itemize}
    \item \textbf{Function space of shallow polynomial networks:}  
    We characterize the function space $\mathcal F_r$ of shallow polynomial networks through its connection with spaces of low-rank tensors. In particular, we identify three distinct geometric regimes—low-dimensional, thick, and filling—as a function of width $r$ (\Cref{subsec:functional-space}). These regimes correspond to sharply different optimization behaviors, and their boundaries are determined by the Alexander-Hirschowitz Theorem.
    \item \textbf{Geometry of loss landscapes:}  
    We analyze the structure of the loss landscape, identifying different types of unfavorable behavior—namely, ``bad minima'' and ``spurious valleys''—which may exist even in very wide networks. We also prove the existence of local minima with basins of positive Lebesgue measure  (\Cref{ex:bad_minima_exist}, \Cref{cor:positive-attraction-basin}), refining and extending prior work~\cite{venturi2018spurious}.
    \item \textbf{Teacher-student problems as low-rank tensor approximation:}  
    We formulate teacher-student training problems as low-rank tensor approximation under nonstandard inner products induced by the data distribution (\Cref{subsec:functional-norms}). These inner products are determined by the moments of the distribution, and we provide examples arising from various distribution families.
    \item \textbf{Data discriminant:}  
    We introduce the \emph{teacher-metric data discriminant}, a generalization of the focal locus (or ED discriminant), which captures qualitative changes in the optimization landscape due to variations in both the teacher model and the data distribution (\Cref{thm:disc-1}, \Cref{thm:focal_points}). We present explicit examples with symbolic computations using Macaulay2 (\Cref{subsec:examples}).
    \item \textbf{Landscape characterization for quadratic activations:}  
    We prove versions of the Eckart-Young Theorem for teacher-student problems with quadratic activations and Frobenius (\Cref{thm:eckart-young-fro}) or Gaussian (\Cref{thm:eckart-young-Gauss}) norms. We prove that for other norms, including those arising from general i.i.d. distributions, there are generally exponentially more critical points (\Cref{thm:iid-case}).
\end{itemize}

\paragraph{Notation and preliminaries.} 
The Frobenius inner product of two tensors $T$ and $S$ of the same shape is given by $\langle T, S \rangle_F = \sum_{(i_1, \ldots, i_d)} T_{i_1 \ldots i_d} S_{i_1 \ldots i_d}$. If $T = v_1 \otimes \cdots \otimes v_d$ and $S = w_1 \otimes \cdots \otimes w_d$, then $\langle T, S \rangle_F = (v_1 \cdot w_1) \cdots (v_d \cdot w_d)$, and in particular, $\langle v^{\otimes d}, w^{\otimes d} \rangle_F = (v \cdot w)^d$. A tensor $T \in \RR^n \otimes \cdots \otimes \RR^n$ (with $d$ factors) is symmetric if it is invariant under any permutation of indices, i.e., $T_{i_1, \ldots, i_d} = T_{i_{\sigma(1)}, \ldots, i_{\sigma(d)}}$ for any $\sigma \in \mathfrak{S}_d$, the symmetric group of degree $d$. We denote the space of symmetric tensors of order $d$ on $\RR^n$ by $\text{Sym}^d(\RR^n)$, a vector space of dimension $\binom{n+d-1}{d}$. This space can be identified with the space $\RR[X_1, \ldots, X_n]_d$ of homogeneous polynomials of degree $d$ in $n$ variables, where any $T \in \text{Sym}^d(\RR^n)$ corresponds uniquely to the polynomial $\langle T, X^{\otimes d} \rangle_F$ with $X = (X_1, \ldots, X_n)^T$. Finally, recall that a symmetric nonzero tensor $T \in \Sym^d(\RR^n)$ has rank $1$ if $T = \lambda v^{\otimes d}$ for some $\lambda \in \RR$ and $v \in \RR^n$. More generally, a tensor $T$ has (real symmetric) rank $r$ if it can be written as a linear combination of $r$ rank-$1$ tensors $T = \lambda_1 v_1^{\otimes d} + \ldots + \lambda_r v_r^{\otimes d}$, but not as a combination of $r-1$ rank-$1$ tensors. For $d=2$, this definition agrees with the usual notion of rank for symmetric matrices.

\section{Shallow Polynomial Networks}
\label{sec:shallow-polynomial-networks}

\textbf{Section overview.} We introduce shallow polynomial networks and their connection to symmetric tensor decompositions. We classify expressivity regimes—low-dimensional, thick, and filling—based on network width, and analyze how these transitions influence the loss landscape. In particular, we describe how critical points, bad minima, and spurious valleys depend on width, extending prior work~\cite{venturi2018spurious} and showing that unfavorable critical points can occur even in wide networks, sometimes with basins of attraction of positive measure.

\vspace{0.8em}

A \emph{shallow polynomial network} of width $r$ and with weights $\mathcal W = (\alpha, w_1,\ldots,w_r) \in \RR^r \times (\RR^n)^r$ is a function $f_{\mathcal W}: \RR^n \rightarrow \RR$ of the form
\begin{equation}\label{eq:shallow_network}
    f_{\mathcal W}(x) = \sum_{i=1}^r \alpha_i \, (w_i\cdot x)^d = \sum_{i=1}^r \alpha_i \, \langle w_i^{\otimes d},x^{\otimes d}\rangle_F = \left\langle \sum_{i=1}^r \alpha_i\, w_i^{\otimes d}, x^{\otimes d} \right\rangle_F,
\end{equation}
where $\langle T, S \rangle_F$ is the Frobenius inner product of two tensors $T, S \in (\RR^n)^{\otimes d}$. {Throughout the paper, we assume that the network has scalar output, \ie, output dimension one.} We will sometimes write the weights compactly as $\mathcal{W} = (\alpha, W)$ where $\alpha \in \mathbb{R}^r$ and $W = \begin{bmatrix} w_1 & \ldots & w_r \end{bmatrix}^T \in \mathbb{R}^{r \times n}$.

\subsection{The function space}
\label{subsec:functional-space}

We indicate the \emph{function space} of all polynomial networks of width $r$ with $\mathcal F_r$. It is clear that any network function $f_{\mathcal W}$ as in~\eqref{eq:shallow_network} is a polynomial of degree $d$ in the input $x \in \RR^n$, so we may view $\mathcal F_r$ as a subset of the space of homogeneous polynomials $\RR[X_1,\ldots,X_n]_d$ or, equivalently, of the space of symmetric tensors $\Sym^d(\RR^n)$.
Furthermore, the tensor $\sum_{i=1}^r \alpha_i w_i^{\otimes d}$ associated with $f_{\mathcal W}$ in~\eqref{eq:shallow_network} has rank at most $r$ and in fact every tensor of rank at most $r$ can be viewed as a shallow network of width $r$. The function space $\mathcal F_r$ thus coincides with the set of symmetric tensors with bounded rank:
\[
\mathcal F_r = \{f_\mathcal W \colon \mathcal W \in \RR^r \times (\RR^n)^r \} = \{\mbox{symmetric tensors of rank at most } r\} \subset \Sym^d(\RR^n).
\]
From this identification we see that the function spaces $\mathcal F_r$ are always semi-algebraic sets. It is also known that these sets {are} \emph{not closed} for $r > 2$~\cite{comon2008} and we will return to this point later in this section. It is sometimes helpful to consider the \emph{Zariski closures} $\overline{\mathcal F_r}$ of $\mathcal F_r$, as these are real algebraic varieties in $\Sym^d(\RR^n)$.

\paragraph{Thick and filling spaces.}
Basic properties of tensor ranks imply that there exist three distinct regimes for the width of a network:
\begin{enumerate}
    \item For small $r$, the set $\mathcal F_r$ is a low-dimensional subset of $\Sym^d(\RR^n)$. More precisely, $\overline{\mathcal F_r}$ is an algebraic variety strictly contained in $\Sym^d(\RR^n)$.
    \item For intermediate $r$, the set $\mathcal F_r$ is full-dimensional and yet strictly contained in its ambient space. That is, $\mathcal F_r \subsetneq \Sym^d(\RR^n)$ but $\overline{\mathcal F_r} = \Sym^d(\RR^n)$ (this does \emph{not} mean that $\mathcal F_r$ is dense for the Euclidean topology).
    \item For large $r$, $\mathcal F_r = \Sym^d(\RR^n)$ holds.
\end{enumerate}

 Similar to~\cite{kileel2019expressive}, we say that $\mathcal F_r$ is \emph{thick} if it has positive Lebesgue measure in $\Sym^d(\RR^n)$ (case $2$ and $3$ above); we say that $\mathcal F_r$ is \emph{filling} if $\mathcal F_r = \Sym^d(\RR^n)$ (case $3$ above). We also let
 \begin{equation}
 \begin{aligned}
 &r_{\rm thick}(d,n) &:=&&& \min\{r \colon \mathcal F_r \mbox{ is thick}\},\\[.25cm]
 &r_{\rm fill}(d,n) &:=&&& \min\{r \colon \mathcal F_r \mbox{ is filling}\}.\\[.15cm]
 \end{aligned}
 \end{equation}
These two widths correspond to \emph{sharp} qualitative changes in the function space. The value of $r_{\textup{thick}}(d,n)$ is known in all cases, but only an upper bound for $r_{\rm fill}(d,n)$ is known in general~\cite{bernardi2018real}.

\begin{theorem}\label{thm:thick_filling}
If $d = 2$, we have $r_{\textup{thick}}(2,n) = r_{\textup{fill}}(2,n) = n$. If $d \ge 3$, then 
    \[
r_{\rm thick}(d,n) = \left \lceil{ \frac{1}{n}\dim(\Sym^d(\RR^n))}\right \rceil = \left \lceil \frac{1}{n} \binom{n+d-1}{d} \right \rceil,
\] 
except for $(d,n) = (4,3),(4,4),(4,5),(3,5)$, when this bound needs to be increased by one.
For all $(d,n)$, it holds that $r_{\rm fill}(d,n) \le 2 r_{\rm thick}(d,n)$.  
\end{theorem}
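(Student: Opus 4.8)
The plan is to translate everything into the language of secant varieties of Veronese varieties and then invoke the Alexander--Hirschowitz theorem as a black box, leaving only elementary arguments. The guiding observation is that the Zariski closure $\overline{\mathcal F_r}$ is the affine cone over the $r$-th secant variety $\sigma_r\bigl(v_d(\PP^{n-1})\bigr)$ of the degree-$d$ Veronese variety, and that $\mathcal F_r$ is \emph{thick} precisely when this cone is full-dimensional, i.e.\ equal to all of $\Sym^d(\RR^n)$.

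To set up this dictionary, I would first note that $\mathcal F_r$, being the image of the polynomial map $\phi\colon(\alpha,w_1,\dots,w_r)\mapsto\sum_i\alpha_i w_i^{\otimes d}$, is semialgebraic by Tarski--Seidenberg, so it has positive Lebesgue measure if and only if $\dim_\RR\mathcal F_r=\binom{n+d-1}{d}$. Since $\phi$ is defined over $\RR$ and $\RR^{r(n+1)}$ is Zariski dense in $\CC^{r(n+1)}$, the real dimension of $\mathcal F_r$ equals the complex dimension of the Zariski closure of $\phi(\CC^{r(n+1)})$, namely the dimension of the affine cone over $\sigma_r(v_d(\PP^{n-1}))$. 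Hence $\mathcal F_r$ is thick if and only if $\sigma_r(v_d(\PP^{n-1}))=\PP^{N}$ with $N+1=\binom{n+d-1}{d}$; and since this dimension is nondecreasing in $r$ and a full-dimensional semialgebraic set is Zariski dense, $r_{\rm thick}(d,n)$ is exactly the least $r$ for which the secant variety fills. The inclusions $\mathcal F_r\subseteq\mathcal F_{r+1}$ together with ``filling implies thick'' also give $r_{\rm thick}\le r_{\rm fill}$.

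It then remains to read off the values. For $d\ge3$, Alexander--Hirschowitz states that $\sigma_r(v_d(\PP^{n-1}))$ has the expected (affine) dimension $\min\bigl\{rn,\binom{n+d-1}{d}\bigr\}$ in all cases except the four families $(d,n,r)\in\{(3,5,7),(4,3,5),(4,4,9),(4,5,14)\}$, where it drops by one; one checks that in each of these the defective value of $r$ is exactly $\bigl\lceil\tfrac1n\binom{n+d-1}{d}\bigr\rceil$ and that the variety fills at the next integer, which yields the stated $+1$ correction, while in every other case filling first occurs at $r=\bigl\lceil\tfrac1n\binom{n+d-1}{d}\bigr\rceil$. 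For $d=2$ I would argue directly: the spectral theorem writes every real symmetric matrix as $\sum_{i=1}^{n}\lambda_i v_i^{\otimes2}$, so $\mathcal F_n=\Sym^2(\RR^n)$ and $r_{\rm fill}(2,n)\le n$; conversely, for $r\le n-1$ the set $\mathcal F_r$ is the determinantal variety cut out by the vanishing of all $(r+1)\times(r+1)$ minors, a proper subvariety of measure zero, so $r_{\rm thick}(2,n)\ge n$, and with $r_{\rm thick}\le r_{\rm fill}$ all three quantities equal $n$.

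For the bound $r_{\rm fill}(d,n)\le 2\,r_{\rm thick}(d,n)$, set $r=r_{\rm thick}(d,n)$; since $\mathcal F_r$ is semialgebraic of full dimension, it contains a Euclidean ball $B(u,\varepsilon)$. Given any $T\in\Sym^d(\RR^n)$, choose $\lambda>0$ with $\|T\|/\lambda<\varepsilon$ and write $T=\lambda u-\lambda\bigl(u-T/\lambda\bigr)$; both summands lie in $\mathcal F_r$, because $\mathcal F_r$ is a cone stable under negation and $u-T/\lambda\in B(u,\varepsilon)\subseteq\mathcal F_r$, so $T$ has rank at most $2r$. Hence $\mathcal F_{2r}=\Sym^d(\RR^n)$, that is $r_{\rm fill}\le2\,r_{\rm thick}$. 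I expect the only genuine difficulty to be the precise invocation of Alexander--Hirschowitz and the bookkeeping of its four exceptional cases --- verifying that the defect sits exactly at the ceiling value and disappears immediately afterward --- since the real-versus-complex dimension comparison and the cone argument are routine.
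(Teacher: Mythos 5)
Your proposal is correct and follows essentially the same route as the paper, which does not spell out a proof but cites these facts as known: the identification of thickness with the filling of the secant variety $\sigma_r(v_d(\PP^{n-1}))$ (via the standard semialgebraic-dimension/real-versus-complex argument, i.e.\ smallest typical real rank equals generic complex rank), the Alexander--Hirschowitz theorem with its four exceptional cases $(d,n,r)=(3,5,7),(4,3,5),(4,4,9),(4,5,14)$, the spectral-theorem argument for $d=2$, and the open-ball/difference cone argument giving $r_{\rm fill}\le 2\,r_{\rm thick}$ as in the cited literature. Your bookkeeping of the exceptional cases and the ceiling values matches the statement, so nothing further is needed.
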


These facts were already briefly discussed in~\cite{kileel2019expressive}, which considered a broader class of deep polynomial networks and mainly studied the transition from low-dimensional to thick function spaces (referred to as ``filling functional varieties'').  Here, we restrict attention to shallow networks but study the nonconvex optimization landscape of these models,  a topic not addressed in~\cite{kileel2019expressive}.

\begin{example} If $n=2$ (two-dimensional input), then $\mathcal F_r$ is a family of homogeneous polynomials in $\RR[X_1,X_2]_d \cong \RR^{d+1}$, or equivalently, a family of non-homogeneous polynomials in {in a single variable}. Assuming $d \ge 3$, then Theorem~\ref{thm:thick_filling} states that $\mathcal F_r$ is a full-dimensional subset of $\RR^{d+1}$ if and only if $r \ge r_{\rm thick}(d,2) = \lceil \frac{d+1}{2} \rceil$. In this case, it is also known that $r_{\rm fill}(d,2) = d$
\cite[Proposition 2.1]{comon2012typical}. 
In the simplest case of $d=3, n=2$, then we have that $\mathcal F_3$ contains all cubic polynomials in one variable, while $\mathcal F_2$ can be characterized explicitly as the set of cubic polynomials that have exactly one real root. 
\end{example}

\begin{remark}
The notions of filling and thick function spaces are also applicable to networks with \emph{non-polynomial activations} (\eg, ReLU). Indeed, while the ambient function spaces for these networks are infinite-dimensional, a finite-dimensional embedding space always exists in the practical setting of empirical
risk minimization (ERM), since functions are mapped to predictions on a
finite set of training data.
\end{remark}

\paragraph{The parameterizing map.}

Since optimization takes place in parameter space, we are interested in the parameterization of the function spaces $\mathcal F_r$:

\begin{equation}\label{eq:parameterization}
    \tau_r: \RR^r \times (\RR^n)^r \rightarrow \mathcal F_r, \quad \mathcal{W} = (\alpha, w_1,\ldots, w_r)\mapsto \sum_{i=1}^r \alpha_i w_i^{\otimes d}.
\end{equation}
We first briefly describe the {fibers} of $\tau_r$ (the preimages 
of single points). This can also be viewed as characterizing 
the {symmetries} of the parameters in the model. We note 
that the parameters of any shallow network as 
in~\eqref{eq:shallow_network} has a set of ``trivial'' 
symmetries, namely scalings among layers and permutations 
among neurons. More precisely, if $D \in \RR^{r \times r}$ is 
an invertible diagonal matrix and $P \in \ZZ^{r \times r}$ is 
a permutation matrix, then any pair of parameters $\mathcal{W} 
= (\alpha, W)$ and $\mathcal{W}' = (\alpha', W')$ that are 
related by $\alpha' = P D^{-d} \alpha$ and $W' = PDW$ {satisfies} 
$\tau_r(\mathcal W)=\tau_r(\mathcal W')$ (\ie, $f_{\mathcal W} 
= f_{\mathcal W'}$). Classical results on 
the identifiability of tensor decomposition imply that in many 
cases there are no other symmetries.

\begin{proposition} \label{prop:kruskal1}
Assume that $d \geq 3$, $r<r_{\rm thick}({d},{n})$ and that $(d,n,r)$ is none of $(6,3,9)$, $(4,4,8)$,  $(3,6,9)$. If $\mathcal{W} \in \mathbb{R}^r \times \left(\mathbb{R}^n \right)^r$ is Zariski-generic (\ie, belongs to an appropriate open dense set of the parameter space) and $\mathcal{W}'\in \mathbb{R}^r \times \left(\mathbb{R}^n \right)^r$ is arbitrary, then $\tau_r(\mathcal W) = \tau_r(\mathcal W')$ holds if and only if $\mathcal W$ and $\mathcal W'$ are related by trivial symmetries.
\end{proposition}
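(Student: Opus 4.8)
The plan is to deduce \Cref{prop:kruskal1} from the theory of generic identifiability of symmetric tensors of subgeneric rank. The key observation is that $\tau_r(\mathcal W) = \sum_{i=1}^r \alpha_i w_i^{\otimes d}$ is a symmetric tensor of rank at most $r$, and the fiber $\tau_r^{-1}(T)$ over a rank-$r$ tensor $T$ is exactly the set of length-$r$ Waring decompositions of $T$; the trivial symmetries (rescalings $w_i \mapsto t_i w_i$, $\alpha_i \mapsto t_i^{-d}\alpha_i$ together with permutations of the summands) are precisely the tautological ambiguity of such a decomposition. So the proposition is equivalent to the assertion that a Zariski-generic $\mathcal W$ produces a tensor that is \emph{identifiable}: its length-$r$ Waring decomposition is unique up to trivial symmetry.

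First I would clear the bookkeeping that reduces the problem to genuine rank-$r$ decompositions. For Zariski-generic $\mathcal W$ the tensor $\tau_r(\mathcal W)$ has rank exactly $r$: since $r < r_{\rm thick}(d,n)$ and secant varieties strictly increase until they fill (\Cref{thm:thick_filling}), we have the proper inclusion $\overline{\mathcal F_{r-1}} \subsetneq \overline{\mathcal F_r}$, so $\tau_r^{-1}(\overline{\mathcal F_{r-1}})$ is a proper Zariski-closed subset of parameter space, as are the loci where some $\alpha_i = 0$, some $w_i = 0$, or some $w_i \parallel w_j$. Then, for an arbitrary $\mathcal W' = (\alpha', w_1', \ldots, w_r')$ with $\tau_r(\mathcal W') = \tau_r(\mathcal W) =: T$, the same accounting forces all $\alpha_i' \ne 0$, all $w_i' \ne 0$, and the $w_i'$ pairwise non-proportional --- otherwise two summands would collapse into one (or vanish) and $T$ would have rank below $r$. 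Hence $\mathcal W'$ is itself a bona fide length-$r$ Waring decomposition of $T$, and it suffices that $T$ admit only one such up to trivial symmetry.

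The engine is the Chiantini--Ottaviani--Vannieuwenhoven generic identifiability theorem: for $d \ge 3$ and $r$ below the generic symmetric rank, a generic symmetric tensor of rank $r$ is identifiable, the failures forming a short explicit list which, under the hypotheses here, reduces to the triples $(6,3,9),(4,4,8),(3,6,9)$ excluded in the statement. Since $r < r_{\rm thick}(d,n)$ is exactly the subgeneric condition and $\tau_r$ is dominant onto the irreducible $\overline{\mathcal F_r}$, the $\tau_r$-preimage of the identifiable locus is Zariski-open and dense; intersecting it with the finitely many open dense sets from the previous paragraph proves the statement over $\CC$. Working over $\RR$, I would append the routine remark that two real length-$r$ decompositions related by a complex trivial symmetry are related by a real one (the scalars $t_i$ are ratios of nonzero real coordinates), so complex generic identifiability yields the real statement. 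For the range where Kruskal-type bounds already apply --- such as $r \le n$, where generic weights have Kruskal rank $r$ and Kruskal's criterion $3r \ge 2r + 2$ holds for $r \ge 2$, extended somewhat by flattening --- one could argue more elementarily, but this covers only a small part of the subgeneric range, so the deep input is genuinely needed.

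The main obstacle lies outside our argument: the identifiability theorem is itself deep, proved via Terracini-type tangency analysis (non-weak-defectivity) and an intricate induction, which I would cite rather than reprove. On our side the only delicate points are imposing all the genericity conditions at once --- immediate from irreducibility of the parameter space and dominance of $\tau_r$ --- and the complex-to-real transfer. I would also double-check the exception list against the literature: the Alexander--Hirschowitz-defective subgeneric triples, such as $(d,n,r) = (4,3,5)$ where $\overline{\mathcal F_r}$ is defective and a generic fiber of $\tau_r$ strictly contains a trivial-symmetry orbit, may also need to be excluded, in which case the hypothesis on $(d,n,r)$ should be supplemented accordingly.
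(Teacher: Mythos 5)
Your proof takes the same route as the paper, whose entire argument is the citation of \cite[Theorem 1]{chiantini2017generic} and \cite[Remark 4]{chiantini2017effective}; the reductions you spell out (generic rank exactly $r$, forcing an arbitrary $\mathcal W'$ in the fiber to be a genuine length-$r$ Waring decomposition, and the complex-to-real transfer) are precisely what that citation leaves implicit. Your closing caveat is correct and worth keeping: since $r_{\rm thick}(d,n)$ is the true Alexander--Hirschowitz generic rank (the expected value plus one in the defective cases), the hypothesis $r<r_{\rm thick}(d,n)$ admits the defective triples $(d,n,r)=(4,3,5),(4,4,9),(4,5,14),(3,5,7)$, where a dimension count (parameter space of dimension $r(n+1)$ mapping into the cone over a defective secant variety) shows the generic fiber of $\tau_r$ has dimension strictly larger than the $r$-dimensional trivial-symmetry orbit, so identifiability genuinely fails there; the cited identifiability theorem only covers $r<\lceil \binom{n+d-1}{d}/n\rceil$, so these four triples should be added to the exclusion list (equivalently, the subgeneric hypothesis should be stated with the expected rank rather than $r_{\rm thick}$).
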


\begin{proof}
This follows from \cite[Theorem 1]{chiantini2017generic} and \cite[Remark 4]{chiantini2017effective}.
\end{proof}

We note that 
polynomial networks with quadratic activations ($d=2$) behave 
differently, since the parameters of the model have always 
more symmetries in addition to the trivial ones (for any $r$). 
To see this, consider $(\alpha, W) \in \RR^r \times \RR^{r 
\times n}$ such that 
$\alpha = 
(1,\ldots,1,-1,\ldots,-1)$, where $p$ elements {are}  $1$'s and $q$ 
elements are $-1$'s (any full rank matrix in $\Sym^2(\RR^n)$ 
has an element of this form in its fiber). Then we have that 
$W^T \diag(\alpha) W = \tau_r(\alpha, W) = \tau_r(\alpha, M 
W)$ for any $M \in \RR^{r \times r}$ that belongs to the 
\emph{indefinite orthogonal group} $O(p,q) = \{M  \colon M^T 
\diag(\alpha) M = \diag(\alpha)\}$. This set is a manifold 
(Lie group) of dimension $r(r-1)/2$ that contains $O(p) \times 
O(q)$.
In the Appendix, we give a description of the topologically connected components of the fiber for networks with quadratic activations (\Cref{thm:fiber_quadratic}). 

We also remark that, in general, optimization landscapes in parameter space and function space are related but distinct~\cite{trager2019pure, levin2024effect}.   Our paper derives results in both spaces.

\paragraph{Critical parameters and branch functions.}

We next give some properties of the critical locus of the 
parameterization, which is the set $\Crit({\tau_r}) \subset \RR^r 
\times \RR^{r \times n}$ where the rank of the differential of 
$\tau_r$ is not maximal:

\[
{\rm Crit}(\tau_r) = \left\{\mathcal W \colon {\rm rk}(d \tau_{{r}}(\mathcal W)) <
\dim(\mathcal F_r) \right\} \subset \RR^r \times \RR^{r \times n}.
\]
This is an algebraic subset of positive codimension in $\RR^r \times \RR^{r \times n}$. Its importance is illustrated by the following basic fact.

\begin{lemma}\label{lemma:pure-spurious} Assume $r \ge r_{\rm thick}(d,n)$ and let $L: \RR^r \times (\RR^n)^r\rightarrow \RR$ be a function of the form $L = \ell \circ \tau_r$ where $\ell: \Sym^d(\RR^n)\rightarrow \RR$ is smooth. If $\mathcal W \in \Crit(L)$, then either $\tau_k(\mathcal W) \in Crit(\ell)$ or $\mathcal W \in \Crit({\tau_r})$.
In particular, if $\ell$ is convex, all non-global local minima of $L$ must belong to the critical parameter set.
\end{lemma}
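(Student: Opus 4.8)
The plan is to apply the chain rule to $L = \ell\circ\tau_r$ and then exploit that, in the thick regime, being a non-critical point of $\tau_r$ means precisely that the differential $d\tau_r$ is \emph{surjective} onto the ambient tensor space. Concretely: for a scalar-valued function, $\Crit(L)$ is exactly the vanishing locus of $dL$, and the chain rule gives $dL(\mathcal W) = d\ell\big(\tau_r(\mathcal W)\big)\circ d\tau_r(\mathcal W)$ as linear functionals on $\RR^r\times(\RR^n)^r$. Hence if $\mathcal W\in\Crit(L)$ then $d\ell\big(\tau_r(\mathcal W)\big)\circ d\tau_r(\mathcal W) = 0$.

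Now suppose in addition that $\mathcal W\notin\Crit(\tau_r)$. By definition of the critical locus this means $\rk\big(d\tau_r(\mathcal W)\big) = \dim\mathcal F_r$. Since $r\ge r_{\rm thick}(d,n)$, the semi-algebraic set $\mathcal F_r$ is thick, hence full-dimensional in $\Sym^d(\RR^n)$, so $\dim\mathcal F_r = \dim\Sym^d(\RR^n)$. Identifying the tangent space of the affine space $\Sym^d(\RR^n)$ with $\Sym^d(\RR^n)$ itself, this says that $d\tau_r(\mathcal W)\colon \RR^r\times(\RR^n)^r\to\Sym^d(\RR^n)$ is onto. A surjective linear map can be cancelled on the right, so $d\ell\big(\tau_r(\mathcal W)\big)\circ d\tau_r(\mathcal W) = 0$ forces $d\ell\big(\tau_r(\mathcal W)\big) = 0$, i.e.\ $\tau_r(\mathcal W)\in\Crit(\ell)$. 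This establishes the dichotomy.

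For the final assertion, recall that a smooth convex $\ell$ has $\Crit(\ell)$ equal to its set of global minimizers over $\Sym^d(\RR^n)$. Suppose $\mathcal W$ is a local minimum of $L$ that does not lie in $\Crit(\tau_r)$; then by the dichotomy $\tau_r(\mathcal W)\in\Crit(\ell)$, so $\ell\big(\tau_r(\mathcal W)\big) = \min_{\Sym^d(\RR^n)}\ell$. Since $L(\mathcal W') = \ell\big(\tau_r(\mathcal W')\big)\ge \min_{\Sym^d(\RR^n)}\ell = L(\mathcal W)$ for every $\mathcal W'$, the point $\mathcal W$ is in fact a global minimum of $L$, a contradiction; hence every non-global local minimum of $L$ belongs to $\Crit(\tau_r)$. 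The argument has no serious obstacle; the one place where the hypothesis $r\ge r_{\rm thick}(d,n)$ is genuinely used is the passage from ``$d\tau_r(\mathcal W)$ has maximal rank'' to ``$d\tau_r(\mathcal W)$ is surjective''. Without thickness one only obtains that $d\ell\big(\tau_r(\mathcal W)\big)$ annihilates the image of $d\tau_r(\mathcal W)$ (a tangent space to $\mathcal F_r$), which gives the weaker conclusion that $\tau_r(\mathcal W)$ is critical for the restriction of $\ell$ to the function space.
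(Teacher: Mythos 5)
Your proof is correct and follows essentially the same route as the paper's: the chain rule $dL(\mathcal W) = d\ell(\tau_r(\mathcal W))\circ d\tau_r(\mathcal W)$ combined with the observation that, off $\Crit(\tau_r)$ and in the thick regime, $d\tau_r(\mathcal W)$ is surjective onto $\Sym^d(\RR^n)$, so $dL(\mathcal W)=0$ forces $d\ell(\tau_r(\mathcal W))=0$. Your explicit handling of the convex case (critical points of a smooth convex function are global minimizers, hence such a $\mathcal W$ would be a global minimum of $L$) just spells out the ``in particular'' clause the paper leaves implicit.
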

\begin{proof} If $r \ge r_{\rm thick}(d,n)$, then the 
differential $d\tau_r(\mathcal W)$ is surjective on generic points. From $d 
L(\mathcal W) = d \ell(\tau_r(\mathcal W)) \circ d 
\tau_r(\mathcal W)$, we see that in this case $d L(\mathcal W) = 0$ implies $d 
\ell(\tau_r(\mathcal W))=0$.
\end{proof}

When $r \ge r_{\rm thick}(d,n)$ the critical parameter set ${\rm Crit}(\tau_r)$ can be seen as a low-dimensional subset of ``bad'' parameters where {all} critical points except global minima are confined. Importantly, this set does not depend on the particular functions $L$ or $\ell$ but only on the architecture of the model. This provides a criterion for whether a critical point $\mathcal W^*$ reached by a local optimization method is a global minimum: if $\mathcal W^* \not 
\in {\rm Crit}(\tau_r)$, it is necessarily a global minimum; 
if $\mathcal W^* \in {\rm Crit}(\tau_r)$, then it is a not a global minimum unless $\ell$ 
admits a global minimum on ${\rm Crit}(\tau_r)$ (which would not occur for generic data).

Algebraic equations for ${\rm Crit}(\tau_r)$ can 
be obtained from minors of a Jacobian matrix or using the 
explicit form of the differential:
\begin{equation}\label{eq:differential_parameterization}
    d\tau_r(\mathcal W)(\dot \alpha, \dot W) = \sum_{i=1}^r \dot \alpha_i w_i^{\otimes d} + \sum_{i=1}^r \alpha_i \, {\rm sym}\left(w_i^{\otimes (d-1)} \otimes \dot w_i\right).
\end{equation}
where ${\rm sym}(w_i^{\otimes (d-1)} \otimes \dot w_i) := w_i^{\otimes (d-1)} \otimes \dot w_i + w_i^{\otimes (d-2)} \otimes \dot w_i \otimes w_i + \cdots + \dot w_i \otimes w_i^{\otimes (d-1)}$.

The \emph{branch locus} of the parameterization is the image of the critical locus ${\rm Br}(\tau_r) = \tau_r(\Crit(\tau_r))$. It can be interpreted as the set of ``bad functions'' that may correspond to non-global minima. Generally, the set ${\rm 
Crit}(\tau_r)$ is strictly contained in $\tau_r^{-1}({\rm 
Br}(\tau_r))$, so the fact that $\tau_r(\mathcal W) \in {\rm 
Br}(\tau_r)$ provides a necessary but not a sufficient 
condition for $\mathcal W \in {\rm Crit}(\tau_r)$. However, a necessary 
condition is useful for practical purposes.

\begin{proposition}\label{prop:branch_quadratic} Assume that $d=2$ (quadratic activations) and $r \ge n$. Then it holds that
\[
{\rm Br}(\tau_r) = \mathcal F_{n-1} = \{S \colon {\rm rk}(S) \le r_{\rm thick}(2,n) - 1 = n - 1\} \subset \Sym^2(\RR^n).
\]
In particular, if $\ell: \Sym^2(\RR^n)\rightarrow \RR$ is smooth and convex, any non-global local minimum $\mathcal W$ of $L = \ell \circ \tau_r$ satisfies $\rk(S) < n$ where $S = \tau_r(\mathcal W)$.
\end{proposition}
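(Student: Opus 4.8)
The plan is to work directly in the coordinates $d=2$, where $\tau_r(\alpha,W)=\sum_{i}\alpha_i w_iw_i^\top = W^\top\diag(\alpha)W$ for $W\in\RR^{r\times n}$ with rows $w_i^\top$. By \Cref{thm:thick_filling} we have $r_{\rm thick}(2,n)=n$, so for $r\ge n$ the function space is $\mathcal F_r=\Sym^2(\RR^n)$, a vector space of dimension $\binom{n+1}{2}$; consequently $\Crit(\tau_r)$ is exactly the locus where $d\tau_r$ fails to surject onto $\Sym^2(\RR^n)$. Specializing \eqref{eq:differential_parameterization}, the differential at $(\alpha,W)$ sends $(\dot\alpha,\dot W)$ to $W^\top\diag(\dot\alpha)W + \dot W^\top\diag(\alpha)W + W^\top\diag(\alpha)\dot W$. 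I will establish the two inclusions $\Br(\tau_r)\subseteq\mathcal F_{n-1}$ and $\mathcal F_{n-1}\subseteq\Br(\tau_r)$, and then read the ``in particular'' off \Cref{lemma:pure-spurious}.

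For $\Br(\tau_r)\subseteq\mathcal F_{n-1}$, the key point is a one-sided implication: if $\rk(W^\top\diag(\alpha)W)=n$ then $d\tau_r(\alpha,W)$ is already surjective using only $\dot W$. Indeed, put $V:=\diag(\alpha)W$; then $W^\top V$ is invertible, so $V\in\RR^{r\times n}$ has full column rank, whence $\dot W\mapsto\dot W^\top V$ is onto $\RR^{n\times n}$ and $\dot W\mapsto\dot W^\top V+V^\top\dot W=\dot W^\top\diag(\alpha)W+W^\top\diag(\alpha)\dot W$ is onto $\Sym^2(\RR^n)$. Taking the contrapositive, any $\mathcal W\in\Crit(\tau_r)$ has $\rk\big(\tau_r(\mathcal W)\big)=\rk(W^\top\diag(\alpha)W)<n$, i.e.\ $\tau_r(\mathcal W)\in\mathcal F_{n-1}$.

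For the reverse inclusion, given $S$ with $m:=\rk(S)\le n-1$ I would exhibit one preimage lying in $\Crit(\tau_r)$: by the spectral theorem write $S=\sum_{i=1}^m\lambda_iu_iu_i^\top$ with $\lambda_i\ne0$ and $\{u_i\}$ orthonormal, and (since $r\ge n>m$) set $\alpha=(\lambda_1,\dots,\lambda_m,0,\dots,0)$ and let $W$ have rows $u_1^\top,\dots,u_m^\top,0,\dots,0$, so $\tau_r(\alpha,W)=S$ and $\rk(W)=m<n$. Then $\ker W\ne\{0\}$, and for any $v,v'\in\ker W$ each of the three terms of $d\tau_r(\alpha,W)(\dot\alpha,\dot W)$ satisfies $v^\top(\cdot)v'=0$; hence the image of $d\tau_r(\alpha,W)$ lies in the proper subspace $\{M\in\Sym^2(\RR^n):v^\top Mv'=0\text{ for all }v,v'\in\ker W\}$, so $(\alpha,W)\in\Crit(\tau_r)$ and $S\in\Br(\tau_r)$. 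Together with the identity $\mathcal F_{n-1}=\{S:\rk(S)\le n-1\}$ (valid for $d=2$) this gives $\Br(\tau_r)=\mathcal F_{n-1}$; the final assertion then follows because \Cref{lemma:pure-spurious}, applied with $r\ge r_{\rm thick}(2,n)$ and $\ell$ smooth and convex, forces every non-global local minimum of $L=\ell\circ\tau_r$ into $\Crit(\tau_r)$, which by the above has $\rk(\tau_r(\mathcal W))<n$.

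The main thing to watch is that $\Crit(\tau_r)$ is \emph{not} cut out by a single clean rank condition: it strictly contains $\{\rk(W)<n\}$ but is strictly smaller than $\{\rk(W^\top\diag(\alpha)W)<n\}$ (with an indefinite $\alpha$ one can have $\rk(W)=n$ and $W^\top\diag(\alpha)W$ singular while $d\tau_r$ is still surjective, thanks to the $\diag(\dot\alpha)$ term). So the proof deliberately uses only the easy implication ``$\rk(W^\top\diag(\alpha)W)=n\Rightarrow$ surjective'' for one inclusion, and a hand-picked preimage with deficient $\rk(W)$ for the other; the remaining work is elementary linear algebra on the explicit differential, so I do not anticipate a serious obstacle.
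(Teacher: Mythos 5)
Your proof is correct and follows essentially the same route as the paper's: you show that full rank of $\tau_r(\mathcal W)$ forces surjectivity of $d\tau_r$ using only the $\dot W$ directions (the paper phrases this via the span of $\sym(w_{i_k}\otimes u)$, you via the full-column-rank matrix $\diag(\alpha)W$), and for the reverse inclusion you exhibit the same zero-padded preimage and observe that $vv^\top$ for $v\in\ker W$ annihilates the image of the differential, before invoking Lemma~\ref{lemma:pure-spurious} for the final claim. The differences are only in matrix-versus-tensor phrasing, not in substance.
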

\begin{proof} Let $\mathcal W$ be such that $\rk(\tau_r(\mathcal W)) = n$. There exist $i_1,\ldots,i_n$ such that $\alpha_{i_k} \ne 0$ and $w_{i_k}$ are linearly independent, for otherwise one could find $v \in \RR^n$ such that $(\sum_{i=1}^r \alpha_i w_i w_i^T)v=0$. The image of $d \tau_r(\mathcal W)$ now contains
\[
{\rm Span}\{\sym(w_{i_k} \otimes u) \colon u \in \RR^n, k=1,\ldots,n\} = \sym({\rm Span}\{w_{i_k}\} \otimes \RR^n) = \Sym^2(\RR^n).
\]
This implies that $\mathcal W \not \in {\rm Crit}(\tau_r)$ and ${\rm Br}(\tau_r) \subset \mathcal F_{n - 1} = \{S \colon \rk(S) \le n-1\}$. 

Conversely, let $S \in \Sym^2(\RR^n)$ be such that $\rk(S) = s < n$. We can find $\mathcal W \in \RR^r \times \RR^{r \times n}$ such that $\tau_r(\mathcal W) = S$ and $\alpha_j=0$, $w_{j} = 0$ and for all $j = s+1,\ldots,r$. Such $\mathcal W$ belongs to ${\rm Crit}(\tau_r)$. Indeed, if $v \in \RR^n$ is such that $v \cdot w_i = 0$ for $i=1,\ldots, s$, then it holds that
\[
\langle {\rm sym}(w_i \otimes u), vv^T \rangle = 2(u \cdot v)(w_i\cdot  v) = 0, \quad \forall u \in \RR^n.
\]
This implies that $\langle M,vv^T\rangle = 0$ for all $M \in {\rm Im}(d \tau_r(\mathcal W))$, so ${\rm Im}(d \tau_r(\mathcal W)) \subsetneq \Sym^2(\RR^n)$. We conclude that $\mathcal W \in {\rm Crit}(\tau_r)$ and $\mathcal F_{n-1} \subset {\rm Br}(\mathcal F_{n-1})$
\end{proof}

The second claim of Proposition~\ref{prop:branch_quadratic} is similar to results that have previously appeared (see, \eg, \cite[Theorem 2.4]{gamarnik2020}), although these works do not mention the critical locus of the parameterization. To the best of our knowledge, the following result is new.

\begin{proposition}\label{prop:branch_n_2} Assume $n = 2$ (two-dimensional input) and $r \ge \lceil \frac{d+1}{2} \rceil$. 
Then it holds that
\[
{\rm Br}(\tau_r) = \mathcal F_{\left \lfloor \frac{d}{2} \right \rfloor} = \left \{T \colon {\rm rk}(T) \le r_{\rm thick}(d,2) - 1 = \left \lfloor \frac{d}{2} \right \rfloor \right \} \subset \Sym^d(\RR^2).
\]
In particular, if $\ell: \Sym^d(\RR^2)\rightarrow \RR$ is smooth and convex, any function $f_{\mathcal W}$ associated with a non-global local minimum $\mathcal W$ of $L = \ell \circ \tau_r$ can be expressed with at most $\left \lfloor \frac{d}{2} \right \rfloor$ neurons.
\end{proposition}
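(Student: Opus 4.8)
The plan is to mimic the proof of Proposition~\ref{prop:branch_quadratic}, working throughout with the polynomial model $\Sym^d(\RR^2)\cong\RR[X_1,X_2]_d$ and the explicit differential~\eqref{eq:differential_parameterization}. Since $r\ge\lceil\tfrac{d+1}{2}\rceil=r_{\rm thick}(d,2)$, Theorem~\ref{thm:thick_filling} gives $\dim\mathcal F_r=d+1$, so $\mathcal W\in\Crit(\tau_r)$ if and only if $d\tau_r(\mathcal W)$ fails to surject onto $\RR[X_1,X_2]_d$. The first step is to describe $\operatorname{Im} d\tau_r(\mathcal W)$ via its orthogonal complement: pairing~\eqref{eq:differential_parameterization} against a form $G\in\RR[X_1,X_2]_d$ with the Frobenius pairing and using $\langle w^{\otimes d},G\rangle_F=G(w)$ together with $\langle\sym(w^{\otimes(d-1)}\otimes\dot w),G\rangle_F=(\dot w\cdot\nabla)G(w)$ shows that $G\perp\operatorname{Im} d\tau_r(\mathcal W)$ precisely when $G(w_i)=0$ for all $i$ and $\nabla G(w_i)=0$ for every $i$ with $\alpha_i\neq 0$.

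For the inclusion $\Br(\tau_r)\subseteq\mathcal F_{\lfloor d/2\rfloor}$ I would prove the contrapositive: if $\rk(\tau_r(\mathcal W))\ge\lfloor d/2\rfloor+1$, then $d\tau_r(\mathcal W)$ is onto. Writing $f_{\mathcal W}=\sum_{i\,:\,\alpha_i\neq 0}\alpha_i(w_i\cdot X)^d$ and grouping terms by the direction of $w_i$ exhibits $f_{\mathcal W}$ as a combination of at most $m$ $d$-th powers of pairwise non-proportional linear forms, where $m$ is the number of distinct directions occurring; hence $\rk(f_{\mathcal W})\le m$, so $m\ge\lfloor d/2\rfloor+1$. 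If now $G\perp\operatorname{Im} d\tau_r(\mathcal W)$, then $\nabla G$ vanishes at a nonzero point $p_k$ in each of those $m$ directions. Invoking the standard fact that, for a binary form, $\nabla G(p)=0$ with $p\neq0$ if and only if $\ell_p^2\mid G$ — where $\ell_p$ is the linear form vanishing at $p$, which is immediate from a Taylor expansion after a linear change of coordinates normalizing $p=(1,0)$ — and that the $\ell_{p_k}$ are pairwise coprime, we get $\big(\prod_{k=1}^m\ell_{p_k}\big)^2\mid G$, a polynomial of degree $2m\ge d+1>d=\deg G$. Thus $G=0$, so $d\tau_r(\mathcal W)$ is surjective and $\mathcal W\notin\Crit(\tau_r)$.

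For the reverse inclusion $\mathcal F_{\lfloor d/2\rfloor}\subseteq\Br(\tau_r)$, take $T$ with $s:=\rk(T)\le\lfloor d/2\rfloor<r$ and write $T=\sum_{i=1}^s\lambda_i v_i^{\otimes d}$; set $\mathcal W$ with $(\alpha_i,w_i)=(\lambda_i,v_i)$ for $i\le s$ and $(\alpha_i,w_i)=(0,0)$ otherwise, so $\tau_r(\mathcal W)=T$. Since the coordinates with $w_i=0$ contribute nothing to~\eqref{eq:differential_parameterization}, $\operatorname{Im} d\tau_r(\mathcal W)=\sum_{i=1}^s(v_i\cdot X)^{d-1}\cdot\RR[X_1,X_2]_1$, of dimension at most $2s\le 2\lfloor d/2\rfloor\le d<d+1$, so $\mathcal W\in\Crit(\tau_r)$ and $T\in\Br(\tau_r)$. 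Finally, the ``in particular'' statement follows from Lemma~\ref{lemma:pure-spurious}: any non-global local minimum $\mathcal W$ of $L=\ell\circ\tau_r$ with $\ell$ smooth and convex lies in $\Crit(\tau_r)$, hence $\tau_r(\mathcal W)\in\Br(\tau_r)=\mathcal F_{\lfloor d/2\rfloor}$, i.e. $f_{\mathcal W}$ can be written with at most $\lfloor d/2\rfloor$ neurons.

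I expect the only real (and still minor) obstacle to be the first step — cleanly computing the orthogonal complement of $\operatorname{Im} d\tau_r(\mathcal W)$ and matching the condition $\nabla G(w_i)=0$ with ``$\ell_{w_i}^2\mid G$'' — together with being careful that collapsing repeated weight directions never increases the rank, so that a large rank genuinely forces many distinct linear summands; everything else reduces to a dimension count.
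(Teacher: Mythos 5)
Your proof is correct and follows essentially the same route as the paper's: your dual description of $\operatorname{Im} d\tau_r(\mathcal W)$ is exactly Lemma~\ref{lemma:singular_point}, and your criterion ``$\nabla G(p)=0$ with $p\neq 0$ iff $\ell_p^2 \mid G$'' is the same double-root observation the paper uses after factoring $P$ into linear forms, just run contrapositively via coprime divisibility. The only cosmetic difference is the reverse inclusion, where you bound $\dim \operatorname{Im} d\tau_r(\mathcal W)\le 2s\le d$ directly rather than invoking the converse part of Lemma~\ref{lemma:singular_point}; the two arguments are equivalent.
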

\begin{proof} 
Let $\mathcal W = (\alpha,W) \in {\rm Crit}(\tau_r)$. By Lemma~\ref{lemma:singular_point} below, there exists a non-zero $P \in \RR[X_1,X_2]_{d}$ such that $\nabla P(w_i) = 0$ for all $i$ such that $\alpha_i \ne 0$ and $w_i \ne 0$. This can only occur for $\lfloor \frac{d}{2} \rfloor = \lceil\frac{d+1}{2} \rceil - 1$ pairwise independent $w_i$. Indeed, if $P$ decomposes into linear factors as $P = \prod_{i=1}^d (a_i X_1 + b_i X_2)$ where $a_i, b_i \in \CC$ (using the Fundamental Theorem of Algebra), then $\nabla P(w) = 0$ holds if and only if $w = \lambda_i (b_i,-a_i) = \lambda_j (b_j,-a_j)$ for $i \ne j$ and $\lambda_i, \lambda_j \in \RR$ (this corresponds to a double root of $P$). If $\mathcal W$ has at most $\lfloor \frac{d}{2} \rfloor$ non-zero and pairwise independent ``neurons'', then $\rk(\tau_r(\mathcal W)) \le \lfloor \frac{d}{2} \rfloor$. This shows that ${\rm Br}(\tau_r) \subset \mathcal F_{\lfloor \frac{d}{2} \rfloor}$.
For the converse, we note that if $\rk(T) = s \le \lfloor \frac{d}{2} \rfloor$ then we can find $\mathcal W \in \RR^r \times \RR^{r \times n}$ such that $\tau_r(\mathcal W) = T$ and $\alpha_j=0$, $w_{j} = 0$ and for all $j = s+1,\ldots,r$. From the second claim of Lemma~\ref{lemma:singular_point}, such $\mathcal W$ belongs to ${\rm Crit}(\tau_r)$.
\end{proof}

\begin{lemma}\label{lemma:singular_point}
If $\mathcal W = (\alpha, W)$ is such that $\rk(d\tau_r(\mathcal W))< \dim(\Sym^d(\RR^n))$, then there exists $P \in \RR[X_1,\ldots,X_n]_{d}$ such that $\nabla P(w_i) = 0$ for all $w_i$ such that $\alpha_i \ne 0$. If $w_i = 0$ holds for all $i$ such that $\alpha_i = 0$, then the converse also holds.
\end{lemma}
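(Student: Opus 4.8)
The plan is to work in the ambient space $\Sym^d(\RR^n)$, which I identify with $\RR[X_1,\ldots,X_n]_d$ via the correspondence $P \leftrightarrow T_P$, where $P(x) = \langle T_P, x^{\otimes d}\rangle_F$, and to exploit the fact that the Frobenius inner product makes $\Sym^d(\RR^n)$ self-dual. Thus $\rk(d\tau_r(\mathcal W)) < \dim(\Sym^d(\RR^n))$ holds precisely when the subspace ${\rm Im}(d\tau_r(\mathcal W))$ is proper, equivalently when there exists a nonzero $P \in \RR[X_1,\ldots,X_n]_d$ with $\langle T_P, M\rangle_F = 0$ for every $M \in {\rm Im}(d\tau_r(\mathcal W))$. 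The whole argument is then a matter of rewriting this orthogonality condition in terms of $P$ and the $w_i$.

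First I would compute the pairing $\langle T_P, d\tau_r(\mathcal W)(\dot\alpha,\dot W)\rangle_F$ using the explicit form \eqref{eq:differential_parameterization} of the differential. The two ingredients are $\langle T_P, w^{\otimes d}\rangle_F = P(w)$, which is just the definition of $T_P$, and $\langle T_P, {\rm sym}(w^{\otimes(d-1)}\otimes v)\rangle_F = \nabla P(w)\cdot v$, which follows by differentiating $t \mapsto \langle T_P, (w+tv)^{\otimes d}\rangle_F = P(w+tv)$ at $t=0$, since ${\rm sym}(w^{\otimes(d-1)}\otimes v)$ is exactly $\tfrac{d}{dt}\big|_{0}(w+tv)^{\otimes d}$ as defined in the excerpt (the factors of $d$ from the symmetrization and from the gradient cancel). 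Substituting gives
\[
\langle T_P,\, d\tau_r(\mathcal W)(\dot\alpha,\dot W)\rangle_F = \sum_{i=1}^r \dot\alpha_i\, P(w_i) + \sum_{i=1}^r \alpha_i\, \nabla P(w_i)\cdot \dot w_i,
\]
and, varying $\dot\alpha$ and $\dot W$ independently, this vanishes identically if and only if $P(w_i) = 0$ for all $i$ and $\alpha_i\,\nabla P(w_i) = 0$ for all $i$ — the latter being equivalent to $\nabla P(w_i) = 0$ for every $i$ with $\alpha_i \ne 0$. The first claim of the lemma is now immediate: a rank drop produces a nonzero such $P$, which in particular satisfies $\nabla P(w_i) = 0$ whenever $\alpha_i \ne 0$.

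For the converse, I would assume $w_i = 0$ for all $i$ with $\alpha_i = 0$ and start from a nonzero $P$ with $\nabla P(w_i) = 0$ for all $i$ with $\alpha_i \ne 0$. Euler's identity $d\cdot P(w) = \nabla P(w)\cdot w$ gives $P(w_i) = 0$ for those indices; for the remaining indices $\alpha_i = 0$ forces $w_i = 0$, whence $P(w_i) = P(0) = 0$ by homogeneity, and also $\alpha_i\,\nabla P(w_i) = 0$ trivially. Both conditions of the displayed criterion therefore hold, so $T_P$ is a nonzero element orthogonal to ${\rm Im}(d\tau_r(\mathcal W))$, i.e. $\rk(d\tau_r(\mathcal W)) < \dim(\Sym^d(\RR^n))$.

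Since the proof is essentially a computation there is no single hard step; the two points to be careful about are the normalization in the identity $\langle T_P, {\rm sym}(w^{\otimes(d-1)}\otimes v)\rangle_F = \nabla P(w)\cdot v$ (making sure no stray constant survives), and the role of the condition $P(w_i)=0$ for the indices with $\alpha_i = 0$. The latter is exactly the place where the hypothesis ``$w_i = 0$ whenever $\alpha_i = 0$'' enters in the converse direction, and it cannot be dropped: without it one could have $\nabla P(w_i)=0$ on the support of $\alpha$ while $P(w_i)\ne 0$ at some zero coordinate of $\alpha$, in which case $T_P$ need not annihilate the image.
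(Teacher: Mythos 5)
Your proof is correct and follows essentially the same route as the paper's: both reduce the rank condition to the existence of a nonzero $P$ that is Frobenius-orthogonal to the image of $d\tau_r(\mathcal W)$ and then use the identity $\langle P, \mathrm{sym}(w^{\otimes(d-1)}\otimes u)\rangle_F = \nabla P(w)\cdot u$ to translate orthogonality into the gradient conditions. Your appeal to Euler's identity in the converse only makes explicit what the paper handles implicitly, namely that $w_i^{\otimes d}$ already lies in the span of the $\mathrm{sym}$ directions when $\alpha_i \ne 0$, so the two arguments are the same in substance.
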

\begin{proof}   The image of the differential $d\tau_{r}(\mathcal W)$ contains
\[
V = \textup{Span}\{\textup{sym}(w_i^{\otimes (d-1)} \otimes u) : \alpha_i \ne 0, u \in \mathbb{R}^n\} \subset \Sym^d(\mathbb{R}^n).
\]
If $\rk(d\tau_r(\mathcal W))<\dim(\Sym^d(\RR^n))$ then $V \ne \Sym^d(\mathbb{R}^n)$ and there exists $P \in \Sym^d(\mathbb{R}^n)\setminus\{0\}$ such that $P \in V^\perp$, where $V^\perp$ is the orthogonal space to $V$ with respect to the Frobenius inner product. It is now sufficient to note that $\langle P, \sym({X}^{\otimes (d-1)} \otimes u) \rangle_F = \nabla_{{X} } \langle P, {X}^{\otimes d} \rangle_{{F}} \cdot u $. Finally, if $w_i = 0$ whenever $\alpha_i = 0$, then $V$ is the image of $d\tau_{r}(\mathcal W)$, so the existence of $P$ means that $\rk(d\tau_r(\mathcal W))< \dim(\Sym^d(\RR^n))$.
\end{proof}

For general $n \ge 2$ and $d \ge 2$, it still holds that $\mathcal F_{r_{\rm thick}-1} \subset {\rm Br}(\tau_r)$ for $r \ge r_{\rm thick}$. This can be seen by considering parameterizations of points in $\mathcal F_{r_{\rm thick}-1}$ such that $\alpha_i=0$ and $w_i=0$ for all $i> r_{\rm thick}-1$. We do not know whether the reverse containment holds in general.

\subsection{Optimization landscapes}

We next discuss some general properties of the optimization 
landscape for polynomial networks. Specifically, we consider functions $L: \RR^r \times \RR^{r \times n} 
\rightarrow \RR$ of the form $L = \ell \circ \tau_r$ where 
$\ell: \Sym^d(\RR^n) \rightarrow \RR$ is smooth and 
convex and $\tau_r$ is the parameterization map 
from~\eqref{eq:parameterization}. 

\paragraph{Favorable landscapes.}
We spell out two possible ways in which the landscape of a non-convex function can be favorable:

\begin{itemize}
    \item We say that a continuous function $f: \RR^N \rightarrow \RR$ has \emph{no bad local minima} if every local minimum of $f$ is also a global minimum.
    \item We say that a continuous function $f: \RR^N \rightarrow \RR$ has \emph{no spurious valleys} if, for any $t \in \RR$, the image under $f$ of every connected component of the sublevel set $f^{-1}((-\infty,t]) \subset \RR^N$ is the same interval in $\RR$. If $f$ has a global minimum, this condition is equivalent to requiring that each connected component of the sublevel set contains a global minimum.
\end{itemize}

{The absence of spurious valleys ensures that from any point in the parameter space there exists a path to a global minimum along which the loss is non-increasing, a property that can be beneficial for gradient-based or other local search methods.} Most works on the loss landscape of neural networks aim to show the absence of bad local minima in the loss landscape~\cite{soltanolkotabi2018,du2018d,kawaguchi2016}. One exception is~\cite{venturi2018spurious}, which discusses conditions on the width of a network that guarantee that the loss has no spurious valleys. In particular, this work shows the following useful result, here rephrased and adapted to our setting.

\begin{theorem}[\cite{venturi2018spurious}, Theorem 8]\label{thm:venturi_spurious} If $r \ge \dim(\Sym^d(\RR^n))$, then any function $L: \RR^r \times \RR^{r \times n} \rightarrow \RR$ of the form $L = \ell \circ \tau_r$ where $\ell: \Sym^d(\RR^n)\rightarrow \RR$ is convex has no spurious valleys.
\end{theorem}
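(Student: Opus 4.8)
The plan is to reduce the statement to a convexity fact in the output weights $\alpha$. Set $N=\dim(\Sym^d(\RR^n))$. Two consequences of the hypothesis $r\ge N$ will be used. First, $\tau_r$ is surjective onto $\Sym^d(\RR^n)$: the powers $v^{\otimes d}$ span $\Sym^d(\RR^n)$, so one may fix vectors $u_1,\dots,u_N\in\RR^n$ with $\{u_j^{\otimes d}\}_{j=1}^N$ a basis, and then every $T$ has real symmetric rank at most $N\le r$. In particular $\inf L=\inf_{\Sym^d(\RR^n)}\ell=:m$. Second, for any fixed weight matrix $W$ whose rows satisfy ${\rm Span}\{w_i^{\otimes d}\}=\Sym^d(\RR^n)$, the restricted loss $\alpha\mapsto L(\alpha,W)=\ell\!\left(\sum_i\alpha_i w_i^{\otimes d}\right)$ is convex on $\RR^r$ (it is $\ell$ composed with a surjective linear map), with infimum $m$.

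I would then recall that it suffices to prove: every connected component $C$ of every sublevel set $L^{-1}((-\infty,c])$ contains points with $L$-value arbitrarily close to $m$ (when $m$ is attained, this is the ``contains a global minimum'' formulation noted in the text). Given the second fact above, it is enough to show that $C$ contains some parameter $\mathcal W_1=(\alpha^1,W^1)$ with spanning directions: from there, sliding $\alpha$ along the straight segment from $\alpha^1$ toward a minimizing sequence of the convex map $\alpha\mapsto L(\alpha,W^1)$ — with $W^1$ held fixed — keeps $L$ non-increasing (a convex function restricted to a segment running toward a minimizer is non-increasing), never leaves $\{L\le c\}$, and pushes $L$ down to $m$.

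Hence the real task is to join an arbitrary $\mathcal W_0\in C$ to some parameter with spanning directions by a path along which $L$ does not increase. My approach here is a two-stage move. Stage one is a fiber move: inside $\tau_r^{-1}(T_0)$ with $T_0=\tau_r(\mathcal W_0)$ — where $L$ is literally constant — deform $\mathcal W_0$ to the padded normal form realizing $T_0$ by at most $N$ neurons placed at $u_1,\dots,u_N$ (with coefficients the coordinates of $T_0$ in the basis $\{u_j^{\otimes d}\}$) and the remaining $r-N\ge 0$ neurons carrying zero output weight; this normal form has spanning directions. It is reached by elementary fiber-preserving moves — within-neuron rescalings $w_i\mapsto t w_i$, $\alpha_i\mapsto t^{-d}\alpha_i$; merging neurons that are, or have been made, collinear; and the scaling/permutation symmetries (and, for $d=2$, the indefinite-orthogonal symmetries of \Cref{thm:fiber_quadratic}) recorded after \Cref{prop:kruskal1}. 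When the fiber is too disconnected for this, stage two takes over: a generic small perturbation of only the \emph{directions} of $\mathcal W_0$, keeping $\alpha^0$ fixed, lands in the Zariski-dense open locus where the directions span, and by continuity of $L$ it stays inside $\{L\le c\}$ as long as $L(\mathcal W_0)<c$.

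The main obstacle I anticipate is making this connection step uniform over all — possibly highly degenerate — parameterizations, and in particular handling the boundary case in which an entire component $C$ sits at the top level $\{L=c\}$, where the perturbation of stage two has no slack. The tool for that case is \Cref{lemma:pure-spurious} (valid since $r\ge r_{\rm thick}(d,n)$): it confines every non-global local minimum of $L$ to $\Crit(\tau_r)$, which has positive codimension, so convexity of $\ell$ forbids such a plateau component from being isolated from the full-dimensional region to which stage two applies. Executing this carefully is exactly the content of \cite[Theorem 8]{venturi2018spurious}, whose argument carries over verbatim to monomial activations.
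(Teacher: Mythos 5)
First, a point of comparison: the paper does not prove this statement at all—it is imported (rephrased) from \cite{venturi2018spurious}—so your proposal has to stand on its own. Your global plan is indeed the skeleton of the cited argument: once the directions $\{w_i^{\otimes d}\}$ span $\Sym^d(\RR^n)$, the loss is convex in $\alpha$ with infimum $\inf \ell$, so everything reduces to connecting an arbitrary point of a sublevel-set component to a spanning configuration by a path along which $L$ does not increase. The gap is in how you make that connection. The moves you allow in stage one (within-neuron rescalings, permutations, trivial scalings, merging collinear neurons, the $d=2$ symmetries) cannot in general free up a neuron: if all $\alpha_i \ne 0$ and the $w_i$ are pairwise non-proportional but $\{w_i^{\otimes d}\}$ does not span, none of these moves helps, and the claim that you can reach the specific padded normal form inside the fiber is both unsupported (fibers need not be connected, and nothing you list produces such a path) and stronger than needed. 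The step that actually uses the hypothesis $r \ge \dim \Sym^d(\RR^n)$ is missing: if the features fail to span, then $r$ vectors lying in a proper subspace of a space of dimension at most $r$ are linearly dependent, say $w_{i_0}^{\otimes d} = \sum_{j \ne i_0} c_j\, w_j^{\otimes d}$; the path $\alpha_{i_0} \mapsto (1-t)\alpha_{i_0}$, $\alpha_j \mapsto \alpha_j + t\,\alpha_{i_0} c_j$ keeps $\tau_r$ (hence $L$) exactly constant, after which the freed neuron can be re-aimed at some $u^{\otimes d}$ outside the current span (possible since powers span $\Sym^d(\RR^n)$), again at constant $L$; iterating at most $\dim \Sym^d(\RR^n)$ times yields a spanning configuration. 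This weight-transfer move is the real content of Theorem 8 of \cite{venturi2018spurious}, and because it keeps $L$ constant it needs no slack whatsoever.

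Your stage two does not repair the omission: perturbing the directions requires $L(\mathcal W_0) < c$, and the case it cannot handle—a connected component of $\{L \le c\}$ on which $L \equiv c$—is exactly where a spurious valley would live. The appeal to \Cref{lemma:pure-spurious} does not close this: that lemma only locates non-global local minima of $L$ inside $\Crit(\tau_r)$, and "convexity of $\ell$ forbids such a plateau component from being isolated" is an assertion, not an argument for a non-increasing escape path. Finally, ending with "the argument of \cite{venturi2018spurious} carries over verbatim" effectively defers the key step to the very reference the theorem is quoted from; that is what the paper itself does, but then the two-stage construction you interpolate is neither needed nor, as written, correct. Replace stages one and two by the dependence/weight-transfer argument above and the proof goes through.
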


\begin{figure}[htbp]
    \centering
    \begin{subfigure}[b]{0.2\textwidth}
    \includegraphics[width=.9\textwidth]{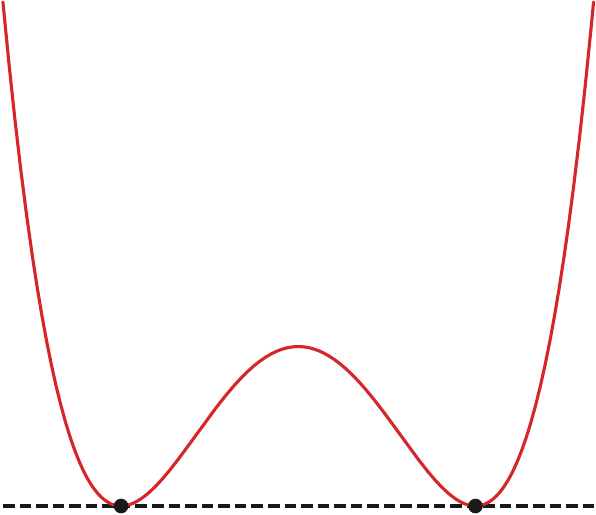} \,\,
    \caption{}
    \end{subfigure}
    \begin{subfigure}[b]{0.2\textwidth}
    \includegraphics[width=0.9\textwidth]{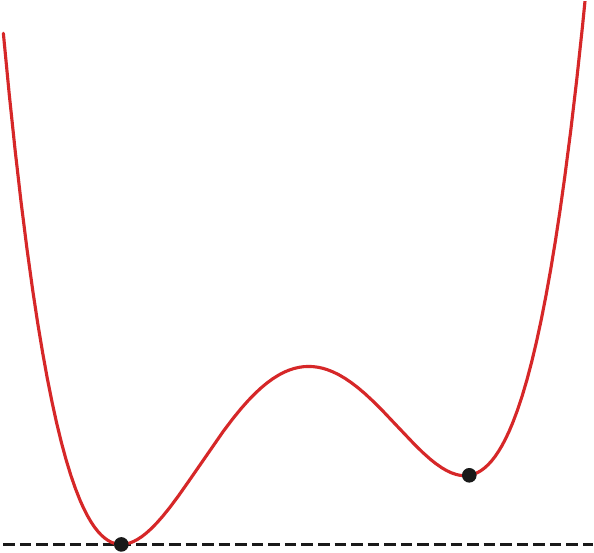} \,\,
    \caption{}
    \end{subfigure}
    \begin{subfigure}[b]{0.25\textwidth}
    \includegraphics[width=0.9\textwidth]{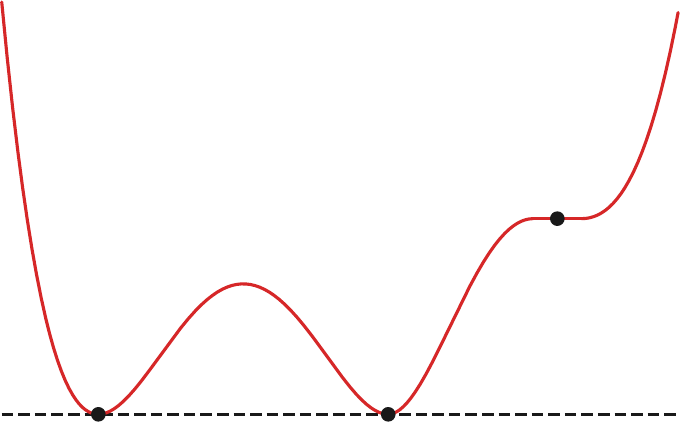} \,\,
    \caption{}
    \end{subfigure}
    \begin{subfigure}[b]{0.2\textwidth}
    \includegraphics[width=0.9\textwidth]{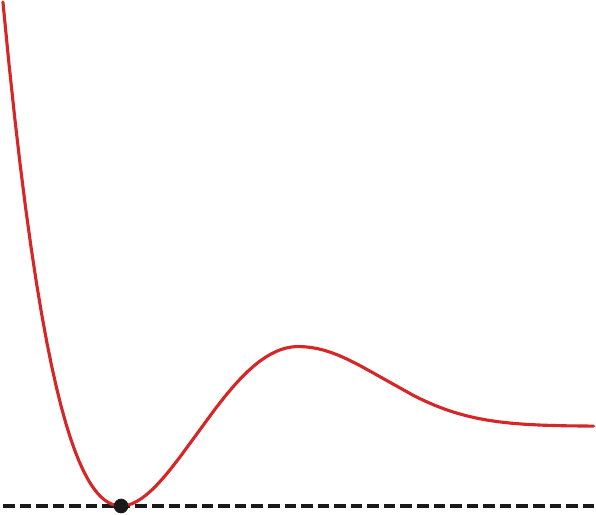}
    \caption{}
    \end{subfigure}
    \caption{Landscapes of non-convex functions: (a) no bad minima and no spurious valleys; (b) bad minima and spurious valleys; (c) bad minima and no spurious valleys (all depicted points are local minima); (d) no bad minima and spurious valleys.}
    \label{fig:landscapes}
\end{figure}

It is important to remark that there is \emph{no general implication} between the absence of bad local minima and the absence of spurious valleys (see Figure~\ref{fig:landscapes}). Indeed, a function might have a bad local minimum without a spurious valley, if the minimum is non-strict (Figure~\ref{fig:landscapes}, (c)). Conversely, and perhaps less obviously, a function can have a spurious valley without a bad local minimum, if the minimum of the valley is at infinity (Figure~\ref{fig:landscapes}, (d)).\footnote{Lemma 2 in~\cite{venturi2018spurious} makes a small mistake and claims that ``presence of spurious valleys implies existence of spurious minima.'' This is true only if all minima are finite.} We also introduce {a} third notion that is related to the absence of spurious valleys:

\begin{itemize}
    \item We say that a point $x \in \RR^N$ is \emph{escapable} for a continuous $f: \RR^N \rightarrow \RR$ if there exists a path $\gamma: [0,1] \rightarrow {\RR^N}$ such that $\gamma(0) = x$ along which $f$ does not increase (\ie, $f(\gamma(t_1)) \ge f(\gamma(t_2))$ for all $t_1 < t_2$) and such that $f(\gamma(1)) < f(x)$. Note that if $f$ is smooth, any point that does not belong to $\Crit(f)$ is escapable. 
\end{itemize}
A function that has no spurious valleys is such that all points in its domain are escapable {or global minima}. However, there exist functions that have spurious valleys but for which all points are escapable {or global minima}. An example is the function shown in Figure~\ref{fig:landscapes} (d). 

\paragraph{Landscape and width.}
We refine some results from~\cite{venturi2018spurious} that relate the optimization landscape to network width. We start by discussing a negative result: when the network architecture is not large enough, bad local minima may exist. 
According to~\cite[Corollary 14]{venturi2018spurious}, if 
$d$ is even and $L$ is the quadratic loss, then $L$ may have spurious valleys when $r \le 
\frac{1}{2}r_{\rm fill}(d,n-1)$ if $d>2$ and when $r \le n$ if 
$d=2$. We provide a more general result that considers arbitrary loss functions and applies to even and odd $d$.

\begin{proposition}\label{prop:spurious_exist} Assume that $cl(\mathcal F_r) \neq \Sym^d(\RR^n)$, where $cl$ denotes closure with respect to the Euclidean topology. This always holds for $r< r_{\rm thick}(d,n)$. Then there exists a loss function $L: \RR^r \times \RR^{r \times n}\rightarrow \RR$ of the form $L = \ell \circ \tau_r$ where $\ell: \Sym^d(\RR^n)\rightarrow \RR$ is convex such that $L$ has bad minima and spurious valleys.
\end{proposition}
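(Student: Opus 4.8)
The plan is to realize the bad behavior through a squared-Euclidean (Frobenius) distance loss $\ell(S)=\lVert S-T^\ast\rVert^2$ for a carefully chosen target tensor $T^\ast \in \Sym^d(\RR^n)$, exploiting the non-convexity of the closed set $C:=cl(\mathcal F_r)$. First I would check that $C$ is genuinely non-convex: $\mathcal F_r$, and hence $C$, is a cone, is invariant under $S\mapsto -S$ (negate the coefficients $\alpha_i$), and spans $\Sym^d(\RR^n)$ because it already contains all rank-one tensors $v^{\otimes d}$, which span; a convex set with these three properties is a linear subspace, hence all of $\Sym^d(\RR^n)$, contradicting $C\ne\Sym^d(\RR^n)$. (When $r<r_{\rm thick}(d,n)$ one moreover has $\dim\overline{\mathcal F_r}<\dim\Sym^d(\RR^n)$, which again precludes convexity.)

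Next, by Motzkin's theorem --- a closed subset of Euclidean space is convex if and only if every point has a unique nearest point in it --- the non-convex closed set $C$ admits a point $T_0$ with two distinct nearest points $S_1\ne S_2\in C$, both at distance $\delta:=\operatorname{dist}(T_0,C)>0$. I would then upgrade this to a \emph{generic} configuration. Since $\overline{\mathcal F_r}$ is a real algebraic variety with finite Euclidean-distance degree, its ``medial locus'' (targets with a non-unique or degenerate nearest point) is a semi-algebraic hypersurface, along whose generic points the two competing nearest points are smooth points of $\overline{\mathcal F_r}$ of rank exactly $r$ --- hence honest elements of $\mathcal F_r$ --- at which the reduced Hessian of $S\mapsto\lVert S-T_0\rVert^2$ along $\overline{\mathcal F_r}$ is positive definite. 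Perturbing $T_0$ within the medial locus, I may thus assume $S_1,S_2\in\mathcal F_r$ are non-degenerate (strict) local minima of $\lVert\cdot-T_0\rVert^2$ restricted to $\overline{\mathcal F_r}$.

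Now I would break the tie. Put $T^\ast:=T_0+\epsilon(S_1-T_0)$ for small $\epsilon>0$ and $\ell(S):=\lVert S-T^\ast\rVert^2$, which is convex. Then $\lVert T^\ast-S_1\rVert^2=(1-\epsilon)^2\delta^2$, while a direct expansion gives
\[
\lVert T^\ast-S_2\rVert^2-\lVert T^\ast-S_1\rVert^2=2\epsilon\bigl(\delta^2-\langle S_1-T_0,\,S_2-T_0\rangle\bigr)>0,
\]
because $S_1\ne S_2$ and $\lVert S_1-T_0\rVert=\lVert S_2-T_0\rVert=\delta$; so $S_1$ becomes strictly the closer point. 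Since $\ell-\lVert\cdot-T_0\rVert^2$ is affine, its restriction to $\overline{\mathcal F_r}$ is $C^2$-small for $\epsilon$ small, so by stability of non-degenerate local minima $S_2$ remains a strict local minimum of $\ell$ restricted to $\overline{\mathcal F_r}$ (a fortiori to $\mathcal F_r$), now with $\ell(S_2)>\ell(S_1)\ge\inf_{\mathcal F_r}\ell=\inf L$. Pulling back through $\tau_r$: any $\mathcal W_2\in\tau_r^{-1}(S_2)$ is a local minimum of $L:=\ell\circ\tau_r$ with $L(\mathcal W_2)=\ell(S_2)>\inf L$, i.e.\ a bad local minimum; and for $t$ slightly above $\ell(S_2)$ the connected component of $\{\ell\le t\}\cap\overline{\mathcal F_r}$ containing $S_2$ has infimum $\ell(S_2)$ by strictness, while $\tau_r$ maps the component of $\{L\le t\}$ through $\mathcal W_2$ into that component, so $L\ge\ell(S_2)>\inf L$ on it --- a spurious valley.

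The main obstacle is the genericity upgrade in the second step: Motzkin's theorem by itself produces only two \emph{tied global} minima, which could be non-strict and would then fail to survive the perturbation. Establishing that the medial locus of the real variety $\overline{\mathcal F_r}$ contains points where both competing minima are non-degenerate and sit at smooth rank-$r$ points --- so that a generic crossing of the medial locus converts one of two tied global minima into a persistent non-global one --- is precisely where the algebraic-geometric input (finiteness of the ED degree and genericity of critical points, cf.~\cite{draisma2013}) enters, and it is the same mechanism that the teacher-metric data discriminant introduced later in the paper is designed to capture.
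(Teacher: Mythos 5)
Your first step (non-convexity of $cl(\mathcal F_r)$) is fine and essentially matches the paper's, but from that point on you are attempting to re-prove the result that the paper simply cites (\cite[Proposition 6]{kileel2019expressive} states that non-convexity of the closure already yields a convex $\ell$ with bad minima and spurious valleys on $\mathcal F_r$), and your re-proof has a genuine gap --- the very one you flag yourself. Motzkin's theorem only gives a target $T_0$ with two \emph{tied global} nearest points in $cl(\mathcal F_r)$; the ``genericity upgrade'' asserting that one can move $T_0$ along the medial locus so that both nearest points become non-degenerate critical points of $\lVert\cdot-T_0\rVert^2$ at smooth, rank-exactly-$r$ points of $\overline{\mathcal F_r}$ is precisely the hard content, and it is asserted rather than proved. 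Without non-degeneracy (or at least strictness) of $S_2$, the tie-breaking perturbation can destroy the would-be bad minimum, and without $S_2\in\mathcal F_r$ (the nearest point may be a border-rank tensor in $cl(\mathcal F_r)\setminus\mathcal F_r$) it cannot be pulled back through $\tau_r$ at all.

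There is also a structural problem with the framing: you conflate the Euclidean closure $cl(\mathcal F_r)$ with the Zariski closure $\overline{\mathcal F_r}$. The proposition explicitly covers the thick-but-not-closed regime (e.g.\ $n=2$ and $\lceil\frac{d+1}{2}\rceil\le r<d$, cf.\ Remark~\ref{rmk:escapable}), where $\overline{\mathcal F_r}=\Sym^d(\RR^n)$ while $cl(\mathcal F_r)\subsetneq\Sym^d(\RR^n)$. In that regime the ED-degree/medial-locus machinery for the variety $\overline{\mathcal F_r}$ says nothing: the restriction of $\ell$ to $\overline{\mathcal F_r}$ is just $\ell$ on the whole space, whose only critical point is its global minimizer, so there is no ``reduced Hessian along $\overline{\mathcal F_r}$'' to be positive definite at $S_1,S_2$. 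The competing nearest points then sit on the semialgebraic boundary of $\mathcal F_r$ inside the ambient space, where they are constrained minima rather than smooth critical points, and your Hessian-stability argument under the $\epsilon$-perturbation has no meaning. So the construction would at best need a separate (and genuinely different) argument for the thick regime, whereas the paper's route --- non-convexity of $cl(\mathcal F_r)$ plus the cited general construction --- handles both regimes uniformly.
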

\begin{proof} If $cl(\mathcal F_r) \neq \Sym^d(\RR^n)$, then $cl(\mathcal F_r)$ is not a convex subset of $\Sym^d(\RR^n)$. Indeed, for any $T \in \Sym^d(\RR^n) \setminus cl(\mathcal F_r)$ that decomposes as $T = \sum_{i=1}^s \alpha_i w_i^{\otimes d}$ (with $s>r$), we have that $T = \frac{1}{s}\sum_{i=1}^s s \alpha_i w_i^{\otimes d}$ is a convex combination of elements in $\mathcal F_1 \subset \mathcal F_r$. The fact that $cl(\mathcal F_r)$ is not convex easily implies that we can find a convex function $\ell: \Sym^d(\RR^n) \rightarrow \RR^n$ whose restriction to $\mathcal F_r$ has (arbitrarily bad) minima and spurious valleys (see~\cite[Proposition 6]{kileel2019expressive}). 
\end{proof}

Note that $\frac{1}{2}r_{\rm fill}(d,n-1) \le r_{\rm thick}(d,n-1) < r_{\rm thick}(d,n)$ so compared to~\cite{venturi2018spurious} we show that bad minima may exist for larger networks. On the positive side --- when the network is wide the landscape is favorable --- the following two results improve Theorem~\ref{thm:venturi_spurious} in two special cases.

\begin{theorem}\label{thm:escapable_n_2} Assume $n=2$ and $r \ge \lceil \frac{d+1}{2} \rceil$ (recall $r_{\rm thick}(d,2) = \lceil \frac{d+1}{2} \rceil$). Then any function of the form $L = {\ell \circ \tau_r}$ where $\ell: \Sym^d(\RR^2)$ is smooth and convex is such that all points {that are not global minima} are escapable.
\end{theorem}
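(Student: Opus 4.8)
The plan is to show that any parameter $\mathcal W$ which is not a global minimum of $L$ is escapable, by exhibiting a non-increasing path that ends strictly below $L(\mathcal W)$. Write $T=\tau_r(\mathcal W)$ and let $\ell$ be the smooth convex function with $L=\ell\circ\tau_r$. Since $L$ is smooth, if $\mathcal W\notin\Crit(L)$ then $\mathcal W$ is escapable by the general remark on smooth functions, so I may assume $\mathcal W\in\Crit(L)$. As $r\ge r_{\rm thick}(d,2)$, \Cref{lemma:pure-spurious} applies and yields two alternatives: either $T\in\Crit(\ell)$, or $\mathcal W\in\Crit(\tau_r)$. In the first case, convexity of $\ell$ makes $T$ a global minimizer of $\ell$ on all of $\Sym^d(\RR^2)$, hence of $\ell$ on $\mathcal F_r$, so $\mathcal W$ would be a global minimum of $L$, contrary to assumption. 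Thus the whole argument reduces to the case $\mathcal W\in\Crit(\tau_r)$, and the escaping path will be built in two stages: a constant-loss stage inside a fiber of $\tau_r$ that exposes a neuron with vanishing output weight, followed by a stage that reactivates that neuron so as to strictly decrease $\ell$.

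For the first stage, I would invoke \Cref{lemma:singular_point}: since $\mathcal W\in\Crit(\tau_r)$ and $\dim\mathcal F_r=\dim\Sym^d(\RR^2)=d+1$, there is a nonzero binary form $P$ of degree $d$ with $\nabla P(w_i)=0$ for every $i$ with $\alpha_i\neq0$; by Euler's identity each such $w_i\neq0$ is a repeated root of $P$ in $\PP^1$, and a degree-$d$ binary form has at most $\lfloor d/2\rfloor$ repeated roots, so the directions of the active neurons with $w_i\neq0$ take at most $\lfloor d/2\rfloor$ distinct values. Because $r\ge\lceil\tfrac{d+1}{2}\rceil=\lfloor d/2\rfloor+1$, an elementary case analysis then produces a path within the fiber $\tau_r^{-1}(T)$ — along which $L\equiv\ell(T)$ — from $\mathcal W$ to a parameterization $\mathcal W_0$ with some $\alpha_i=0$: if some $\alpha_i$ already vanishes we are done; if some active neuron has $w_i=0$ we shrink its $\alpha_i$ to $0$ (that term is identically $0$ along the path); otherwise more than $\lfloor d/2\rfloor$ active neurons have $w_i\neq0$, so two of them share a direction, say $w_j=\mu w_i$, and the path $t\mapsto\bigl(\alpha_j\mapsto(1-t)\alpha_j,\ \alpha_i\mapsto\alpha_i+t\mu^d\alpha_j\bigr)$ keeps $\alpha_i w_i^{\otimes d}+\alpha_j w_j^{\otimes d}$, hence $T$, fixed while driving $\alpha_j$ to $0$.

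For the second stage, note $\nabla\ell(T)\neq0$: otherwise $T$ would again be a global minimizer of $\ell$, contradicting that $\mathcal W$ is not a global minimum of $L$. Since the powers $\{v^{\otimes d}:v\in\RR^2\}$ span $\Sym^d(\RR^2)$, I can choose $v\in\RR^2$ with $\langle\nabla\ell(T),v^{\otimes d}\rangle_F\neq0$ and a small scalar $\epsilon$ of the opposite sign, so $\langle\nabla\ell(T),\epsilon\,v^{\otimes d}\rangle_F<0$. Starting from $\mathcal W_0$, first move the direction of the neuron with zero output weight to $v$ (this leaves $\tau_r$ equal to $T$, as that term stays $0$), and then raise its output weight from $0$ to $t\epsilon$; along this path $\tau_r$ traces the segment $t\mapsto T+t\epsilon\,v^{\otimes d}$, and $t\mapsto\ell(T+t\epsilon\,v^{\otimes d})$ is convex with negative derivative at $t=0$, hence non-increasing on an initial interval and strictly below $\ell(T)$ there. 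Concatenating the two stages gives a non-increasing path from $\mathcal W$ ending strictly below $L(\mathcal W)$, so $\mathcal W$ is escapable.

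I expect the main obstacle to be the bookkeeping in the first stage: ensuring that a constant-loss path to a parameterization with a vanishing output weight exists in every configuration (degenerate neurons, repeated directions, all neurons active). This is precisely where the hypothesis $r\ge r_{\rm thick}(d,2)=\lfloor d/2\rfloor+1$ and the $n=2$ description of $\Crit(\tau_r)$ from \Cref{lemma:singular_point} and \Cref{prop:branch_n_2} enter. The second stage is a routine consequence of convexity of $\ell$ and of the Veronese spanning property.
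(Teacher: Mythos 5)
Your proof is correct and follows essentially the same route as the paper: reduce (via \Cref{lemma:pure-spurious}) to $\mathcal W \in \Crit(\tau_r)$, use \Cref{lemma:singular_point} and the double-root count for binary forms to bound the number of independent active directions by $\lfloor d/2\rfloor$, and then escape by freeing a neuron and pointing it along a direction where $\nabla\ell(T)$ has nonzero pairing. The only difference is presentational: you reprove the content of \Cref{lemma:low_rank_escapable} inline (the constant-loss merge followed by reactivation along $T+t\epsilon v^{\otimes d}$) rather than citing it, which is exactly the step the paper delegates to that lemma.
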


\begin{theorem}\label{thm:valleys_d_2} Assume $d=2$ and $r \ge n$ (recall $r_{\rm fill}(2,n) = r_{\rm thick}(2,n)=n$). Then any function of the form $L = {\ell \circ \tau_r}$ where $\ell: \Sym^2(\RR^n) \rightarrow \RR$ is smooth and convex has no spurious valleys.
\end{theorem}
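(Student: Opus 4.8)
The plan is to prove the stronger statement that every point $\mathcal W_0 \in \RR^r\times\RR^{r\times n}$ can be joined to a global minimizer of $L$ by a continuous path along which $L$ is non-increasing. By the characterization of ``no spurious valleys'' recorded above this is enough: such a path remains in the sublevel set $\{L\le L(\mathcal W_0)\}$, hence in the connected component of $\mathcal W_0$ in that set, so every connected component of every sublevel set contains a global minimizer. If $\inf\ell$ is not attained, the same construction applied to a minimizing sequence joins $\mathcal W_0$ to points of $L$-value arbitrarily close to $\inf L$ (recall $\inf L=\inf\ell$ since $\mathcal F_r=\Sym^2(\RR^n)$ for $r\ge n=r_{\rm fill}(2,n)$), which likewise precludes spurious valleys.

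Fix $\mathcal W_0$, put $S_0=\tau_r(\mathcal W_0)\in\Sym^2(\RR^n)$, and fix a minimizer $S^*$ of $\ell$; since $\mathcal F_r=\Sym^2(\RR^n)$, the fiber $\tau_r^{-1}(S^*)$ is exactly the set of global minimizers of $L$. Convexity of $\ell$ together with $\ell(S^*)\le\ell(S_0)$ makes $s\mapsto\ell\big((1-s)S_0+sS^*\big)$ non-increasing on $[0,1]$, so it suffices to \emph{lift} the segment $\sigma(s)=(1-s)S_0+sS^*$ to a path in parameter space through $\mathcal W_0$. This is where the $d=2$ structure enters: $\sigma$ is a real-analytic path of real symmetric matrices, so by Rellich's theorem on analytic perturbation of symmetric matrices there are real-analytic functions $\lambda_1(s),\dots,\lambda_n(s)$ and a real-analytic family $U(s)\in O(n)$ with $\sigma(s)=U(s)\,\diag\!\big(\lambda_1(s),\dots,\lambda_n(s)\big)\,U(s)^\top$. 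Since $r\ge n$, define $\mathcal W(s)=(\alpha(s),W(s))$ with $\alpha_j(s)=\lambda_j(s)$ and the $j$-th row of $W(s)$ the $j$-th column of $U(s)$ for $j\le n$, and $\alpha_j(s)=0$, $w_j(s)=0$ for $j>n$. Then $\tau_r(\mathcal W(s))=\sigma(s)$, so $L(\mathcal W(s))=\ell(\sigma(s))$ is non-increasing, and $\tau_r(\mathcal W(1))=S^*$, i.e.\ $\mathcal W(1)$ is a global minimizer of $L$.

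It remains to connect $\mathcal W_0$ to the starting point $\mathcal W(0)$ of this lift. Both lie in the fiber $\tau_r^{-1}(S_0)$, on which $L\equiv\ell(S_0)$, so any path between them inside this fiber is non-increasing and may be prepended to the lift. The fiber need not be connected — for $r=n$ and $S_0$ positive definite, for instance, it has two components — but, using the description of its connected components (\Cref{thm:fiber_quadratic}), together with the extra $O(p,q)$ symmetries of the quadratic parameterization discussed after \Cref{prop:kruskal1}, one checks that every component of $\tau_r^{-1}(S_0)$ contains a zero-padded eigendecomposition representative of $S_0$. Since the Rellich lift can be started at \emph{any} such representative — by permuting the indices $1,\dots,n$ and flipping signs $w_j\mapsto-w_j$, operations that are global symmetries of $\tau_r$ preserving $L$ and that turn one lift of $\sigma$ into another — we may assume $\mathcal W(0)$ lies in the same component of $\tau_r^{-1}(S_0)$ as $\mathcal W_0$, and join the two within that (connected) component. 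Concatenating yields the desired non-increasing path from $\mathcal W_0$ to a global minimizer.

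The step I expect to be the main obstacle is this last one: identifying the connected components of the fibers of $\tau_r$ and verifying that the available symmetries are rich enough to start a descending lift in the same component as an arbitrary $\mathcal W_0$. This is precisely what the fiber analysis summarized in \Cref{thm:fiber_quadratic} supplies, and it is the reason the statement is special to quadratic activations. A secondary technical point is that the lift of $\sigma$ must pass continuously through the rank-deficient locus, where $\tau_r$ is critical (cf.\ \Cref{prop:branch_quadratic}), which a naive path-lifting argument does not give; the analyticity of the straight segment $\sigma$ is exactly what makes Rellich's theorem applicable and the lift clean.
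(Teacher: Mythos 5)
Your proof is correct, but it takes a genuinely different route from the paper's. The paper argues by contradiction and genericity: given a putative spurious valley $H$, it uses a Sard-type argument to pick $S_1\in\tau_r(H)$ and $S_2$ in the image of a neighborhood of a global minimizer so that the segment joining them avoids the codimension-two locus of symmetric matrices with repeated eigenvalues, and lifts that segment using the local smoothness of eigendecompositions at simple spectra; it therefore needs neither analytic perturbation theory nor any control of the fibers of $\tau_r$. You instead prove the stronger statement that \emph{every} parameter is joined to a global minimizer by a non-increasing path, lifting the specific segment from $\tau_r(\mathcal W_0)$ to a minimizer via Rellich's one-parameter analytic eigendecomposition (which is exactly what lets the lift pass through eigenvalue collisions and the rank-deficient locus), and you then face the fiber-connectivity issue head-on. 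That issue only arises when $r=n$ and $S_0$ is invertible (otherwise the fiber is connected by \Cref{thm:fiber_quadratic}), and your claim there is true: permutations of indices and sign flips of the Rellich eigenvectors yield lifts starting in every component, because the components are indexed by the sign pattern of $\alpha$ together with a connected component of $O(s_+,s_-)$, and flipping one row in the positive (resp.\ negative) block switches the corresponding determinant sign. Note, though, that the bare statement of \Cref{thm:fiber_quadratic} (component counts) is not quite sufficient for this check; you need the explicit description of the components from its proof. Two minor justification slips: monotonicity of $s\mapsto\ell((1-s)S_0+sS^*)$ holds because $S^*$ is a \emph{global minimizer} of the convex $\ell$ (a convex function attaining its minimum at the right endpoint of a segment is non-increasing along it), not merely because $\ell(S^*)\le\ell(S_0)$; and in the non-attained case the segment toward a point of a minimizing sequence need not be non-increasing, but convexity still keeps it inside $\{L\le L(\mathcal W_0)\}$, which is all the conclusion requires. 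In exchange for the heavier inputs (Rellich plus the fiber analysis), your argument buys more than the paper's: it shows that from any initialization there is a descent-type path all the way to a global minimum, whereas the paper's lighter argument only rules out spurious valleys.
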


The bounds in \Cref{thm:escapable_n_2} and~\Cref{thm:valleys_d_2} improve over~\Cref{thm:venturi_spurious} by a factor of $(n+1)/2$ and $n$, respectively (\Cref{thm:escapable_n_2} only shows ``escapability'', but this is necessary in light of~\Cref{rmk:escapable} below). Moreover, these bounds are tight given \Cref{prop:spurious_exist}.

\begin{remark}\label{rmk:escapable} For $n=2$, it is known that $cl(\mathcal F_r) \ne \Sym^d(\RR^n)$ holds if and only if $r < d$ (this is because all ranks between $\lceil \frac{d+1}{2} \rceil$ and $d$ are ``typical''~\cite[Theorem 2.4]{blekherman2012}). Combining \Cref{thm:escapable_n_2} and \Cref{prop:spurious_exist}, we deduce that for $n=2$ and $\lceil \frac{d+1}{2} \rceil \le r < d$ there exist convex functions $\ell: \Sym^d(\RR^n) \rightarrow \RR$ with spurious valleys, however even for such functions all (critical) points are escapable. The landscape in these cases is analogous to the one shown in Figure~\ref{fig:landscapes}, (d). On the other hand, not all convex functions $\ell: \Sym^d(\RR^n) \rightarrow \RR$ arise as standard loss functions such as the quadratic loss considered in the next sections. It would be interesting to find concrete examples of functions with this kind of landscape (see also the related Example~\ref{ex:minimum_infinity} below).
\end{remark}

Our proof of \Cref{thm:escapable_n_2} makes use of the following simple fact.

\begin{lemma}\label{lemma:low_rank_escapable} Assume that $\mathcal W \in \RR^r \times 
\RR^{r \times n}$ is such that there exists $i\ne j$ such that 
$\alpha_i w_i$ and $\alpha_j w_j$ are scalar multiples of each other.
Then, for any $L: \RR^r \times \RR^{r \times n} \rightarrow \RR$ of the form $L = \ell \circ \tau_r$ where $\ell: \Sym^d(\RR^n)\rightarrow \RR$ is convex, $\mathcal W$ is escapable.
\end{lemma}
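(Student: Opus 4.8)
The plan is to use the degeneracy hypothesis to manufacture a \emph{dead neuron} --- one with $\alpha_i = 0$ and $w_i = 0$ --- and then to exploit that dead neuron as a free slot along which $\tau_r(\mathcal W)$ can be moved in an arbitrary rank-one direction while all other parameters stay frozen. Write $T_0 = \tau_r(\mathcal W)$. Since $L = \ell\circ\tau_r$, any path contained in a single fiber $\tau_r^{-1}(T_0)$ is constant for $L$, hence a fortiori non-increasing. I will establish the slightly sharper claim that either $T_0$ is a global minimizer of $\ell$ on all of $\Sym^d(\RR^n)$ --- in which case $\mathcal W$ is a global minimizer of $L$ and there is nothing to escape --- or $\mathcal W$ is escapable; this is the form in which the lemma gets used (e.g.\ in \Cref{thm:escapable_n_2}).

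\emph{Producing a dead neuron.} The hypothesis says $\alpha_i w_i$ and $\alpha_j w_j$ are linearly dependent. If one of them, say $\alpha_i w_i$, vanishes, the $i$-th neuron already contributes $\alpha_i w_i^{\otimes d} = 0$, and I bring $(\alpha_i,w_i)$ to $(0,0)$ by a straight segment keeping $\tau_r$ constant: shrink $w_i$ to $0$ if $\alpha_i = 0$, or shrink $\alpha_i$ to $0$ if $w_i = 0$. Otherwise $\alpha_i,\alpha_j\neq 0$ and $w_i,w_j\neq 0$, so $w_j = \mu w_i$ with $\mu\neq 0$ and the combined contribution of neurons $i$ and $j$ equals $(\alpha_i + \mu^d\alpha_j)\,w_i^{\otimes d}$. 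Keeping $w_i,w_j$ and every other parameter frozen, I move $(\alpha_i,\alpha_j)$ along the line on which $\alpha_i + \mu^d\alpha_j$ retains its initial value --- along which $\tau_r$ is therefore constant --- until the coordinate $\alpha_i$ reaches $0$, and then shrink $w_i$ to $0$ as before. In either case I have travelled along a constant-$L$ path from $\mathcal W$ to a parameter $\mathcal W'$ with $\tau_r(\mathcal W') = T_0$, $\alpha_i' = 0$, $w_i' = 0$.

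\emph{Using the free slot and descending.} Since $\alpha_i' w_i'^{\otimes d} = 0$, from $\mathcal W'$ varying only $(\alpha_i,w_i)$ gives $\tau_r = T_0 + \alpha_i\,w_i^{\otimes d}$; hence for any $v\in\RR^n$ and $t\in\RR$ the tensor $T_0 + t\,v^{\otimes d}$ is reached from $\mathcal W'$ by first moving $w_i$ from $0$ to $v$ (with $\alpha_i = 0$, so $\tau_r\equiv T_0$ and $L$ is constant) and then moving $\alpha_i$ from $0$ to $t$, along which $L$ traces the convex function $\phi(t) = \ell(T_0 + t\,v^{\otimes d})$. Now assume $\mathcal W$ is not a global minimizer of $L$; then $T_0$ is not a global minimizer of the (smooth) convex function $\ell$ on $\Sym^d(\RR^n)$ --- otherwise $\mathcal W$ would minimize $L$ --- so $\nabla\ell(T_0)\neq 0$. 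Identifying $\nabla\ell(T_0)$ with the homogeneous degree-$d$ polynomial $q$ determined by $\langle\nabla\ell(T_0),v^{\otimes d}\rangle_F = q(v)$, and using that a nonzero real polynomial cannot vanish identically on $\RR^n$, there is a $v$ with $q(v)\neq 0$, i.e.\ $\phi'(0) = \langle\nabla\ell(T_0),v^{\otimes d}\rangle_F\neq 0$. As $\alpha_i$ may be taken of either sign we may assume $\phi'(0)<0$, and convexity of $\phi$ then furnishes a $t^\ast\neq 0$ with $\phi$ non-increasing between $0$ and $t^\ast$ and $\phi(t^\ast)<\phi(0)$. Concatenating the constant-$L$ path of the previous step with these two moves (the second ending at $\alpha_i = t^\ast$) yields a non-increasing path from $\mathcal W$ terminating at a point of value $\ell(T_0 + t^\ast v^{\otimes d}) < \ell(T_0) = L(\mathcal W)$, so $\mathcal W$ is escapable.

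The only point that needs genuine care is the non-degenerate case $\alpha_i,\alpha_j\neq 0$ of the first step: one must check that sliding $(\alpha_i,\alpha_j)$ along the stated affine line with $w_i,w_j$ frozen really does keep $\tau_r$ --- and hence $L$ --- constant, so that the path stays inside $\tau_r^{-1}(T_0)$. Everything else is bookkeeping on top of one structural fact, namely that the rank-one symmetric tensors $v^{\otimes d}$ span $\Sym^d(\RR^n)$; that is precisely what prevents the nonzero gradient $\nabla\ell(T_0)$ from being orthogonal to all of them, and hence what makes the free slot enough to find a descent direction.
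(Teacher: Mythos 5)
Your proof is correct and follows essentially the same route as the paper's: travel along the fiber of $\tau_r$ (using the parallel pair of neurons) to free up a neuron with $\alpha_i=0$ and $w_i=v$, then use the fact that the rank-one tensors $v^{\otimes d}$ span $\Sym^d(\RR^n)$ to get $\frac{\partial}{\partial\alpha_i}L\neq 0$ and descend. The only differences are cosmetic: you handle the degenerate case $\alpha_iw_i=0$ and the explicit descent segment (via convexity of $t\mapsto\ell(T_0+t\,v^{\otimes d})$) in more detail, where the paper simply notes the resulting point is non-critical and hence escapable.
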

\begin{proof} We assume $d \ell(\tau_r(\mathcal W)) \ne 0$, as otherwise $\mathcal W$ is a global minimum. Let $v \in \RR^n$ be such that $d \ell(\tau_r(\mathcal W))(v^{\otimes d}) \ne 0$. Such $v$ exists since ${\rm Span}\{v^{\otimes d} \colon v \in \RR^n\} = \Sym^d(\RR^n)$. We observe that it is always possible to continuously change $\alpha_i, w_i, \alpha_j, w_j$ without affecting $\tau_r(\mathcal W)$ so that in the end $\alpha_i = 0$ and $w_i = v$. Indeed, we can continuously scale $\alpha_i$ to zero by modifying $\alpha_j, w_j$ so that $\alpha_i w_i^{\otimes d} + \alpha_j w_j^{\otimes d}$ (and also $\tau_r(\mathcal W)$) remains constant. Once $\alpha_i=0$, we can change $w_i$ freely without affecting $\tau_r(\mathcal W)$.  We now observe that $\frac{d}{d \alpha_i}L({\mathcal W}) = d\ell(\tau_r(\mathcal W))(v^{\otimes d}) \ne 0$ so $\mathcal W$ is not a critical point for $L$ and is thus escapable.
\end{proof}

Note that any $\mathcal W$ as in Lemma~\ref{lemma:low_rank_escapable} is such that $\tau_r(\mathcal W)$ has rank strictly less than $r$. An analogous statement, with an almost identical proof, holds for networks with ReLU or more general (positively) homogeneous activations.

\begin{proof}[Proof of Theorem~\ref{thm:escapable_n_2}] Any non-escapable point is a critical point so we consider $\mathcal W = (\alpha,W) \in \Crit(L)$. According to Lemma~\ref{lemma:singular_point}, there exists a non-zero $P \in \RR[X_1,X_2]_{d}$ such that $\nabla P(w_i) = 0$ for all $i$ such that $\alpha_i \ne 0$ and $w_i \ne 0$. As argued in the proof of Proposition~\ref{prop:branch_n_2}, this can only occur for $\lfloor \frac{d}{2} \rfloor = \lceil\frac{d+1}{2} \rceil - 1$ pairwise independent $w_i$. In particular, if $r \ge \lceil\frac{d+1}{2} \rceil $, then the conditions of Lemma~\ref{lemma:low_rank_escapable} are satisfied at $\mathcal W$ and we conclude that $\mathcal W$ is escapable.
\end{proof}

\begin{proof}[Proof of Theorem~\ref{thm:valleys_d_2}] 
We first remark that the proof of the (weaker) fact that all critical points are escapable
follows easily from \Cref{thm:fiber_quadratic} in the Appendix and \Cref{prop:branch_quadratic}. Indeed, any critical point that is not a global minimum must have rank less than
$n$. At these points, the fiber of the parameterization is connected. This implies that we find a path in the fiber to a parameter $\mathcal W$ that satisfies the conditions of \Cref{lemma:low_rank_escapable}.

This argument however does not exclude the existence of spurious valleys
without any local minima. To prove that no spurious valleys exist, we show the
following:
\begin{enumerate}
    \item \emph{Path lifting property:} If $S_1, S_2 \in \Sym^2(\RR^n)$ are
    such that all matrices in the path $\gamma(t) = (1-t) S_1 + t S_2$ have
    distinct eigenvalues, then for any $\mathcal W_1$ such that
    $\tau_r(\mathcal W_1) = S_1$ there exists a path $\mathcal W(t)$ in
    parameter space such that $\tau_r(\mathcal W(t)) = \gamma(t)$.
    \item For almost all pairs $S_1, S_2$ in $\Sym^2(\RR^n)$ the path
    $\gamma(t) = (1-t) S_1 + t S_2$ contains only matrices with distinct
    eigenvalues.
\end{enumerate}
Before arguing these two points we observe that they imply the claim. Indeed,
let $\mathcal W_{\min}$ be a global minimum of $L$ and let $H \subset \RR^r
\times \RR^{r \times n}$ be a connected component of a sublevel set of $L$
such that $\lim \inf_{\mathcal W \in H} L(\mathcal W) > L(\mathcal W_{\min})$
(so $H$ is a spurious valley). Let $V$ be a neighborhood of $\mathcal
W_{\min}$ such that $L(\mathcal W') < \lim \inf_{\mathcal W \in H} L(\mathcal
W)$ for all $\mathcal W'
\in V$. The image of $V$ and $H$ under $\tau_r$ are full measure sets in
$\Sym^2(\RR^n)$ (by the strong version of Sard's Theorem).\footnote{We can assume
without loss of generality that $H$ is open, by replacing a closed sublevel
set $L^{-1}([-\infty,t])$ with an open sublevel set
$L^{-1}([-\infty,t+\epsilon))$ for $\epsilon$ sufficiently small.} Using the
second point above, there exists $S_1 \in \tau_r(H)$ and $S_2 \in \tau_r(V)$
such that the path $\gamma(t) = (1-t) S_1 + t S_2$ does not contain matrices
with repeated eigenvalues. By 1, this path can be lifted to a path
$\mathcal W(t)$ such that $\mathcal W(0) \in H$, and $\tau_r(\mathcal W(t)) =
\gamma(t)$. Since $\ell$ is convex, $\ell(\gamma(t))$ is either monotone
decreasing between $0$ and $1$ (because $\ell(S_1)>\ell(S_2)$ by assumption),
or it achieves a global minimum at $t_0 \in (0,1)$, where the value is less
than $\ell(S_2)$. Either way, the lifted path $\mathcal W(t)$ for $t \in
[0,1]$ or $t \in [0, t_0]$ is a descent path from a point on $H$ to a point
where $L$ is less than $\lim \inf_{\mathcal W \in H} L(\mathcal W)$, which is
a contradiction.

The two points above follow from known properties of symmetric matrices and
their eigenvalues. The first point can be argued by noting that when eigenvalues are distinct,
both eigenvalues and eigenvectors are described locally by continuous functions (by the
implicit function theorem, see also \cite[Section 7]{alekseevsky1998}). The second point follows from the
fact that the set of (real) matrices with repeated eigenvalues is an algebraic set
of codimension two in $\Sym^2(\RR^n)$~\cite[Section 7.5]{sturmfels}. In particular, a generic line
will not meet this set.
\end{proof}

\paragraph{Bad minima for wide networks.}

We now observe that the loss landscape can have bad local minima for networks with \emph{arbitrarily large} width $r$. Since by Theorem~\ref{thm:venturi_spurious}, when $r \ge \dim \Sym^d(\RR^n)$ the landscape has no spurious valleys, we deduce that any bad local minimum must be escapable. The landscape in this case resembles the one shown in Figure~\ref{fig:landscapes},~(c). A similar example was given for the case $d = 2$ (quadratic activations) in~\cite{kazemipour2019no}.

\begin{example}\label{ex:bad_minima_exist} Let $d = 2k$ and choose $r\ge r_{\rm fill}$ {to} be arbitrarily large. Consider a function $L$ of the form
\[
L(\mathcal W) = \mathbb E_{x\sim \mathcal D} |f_{\mathcal W}(x) - g(x)|^2,
\]
where $\mathcal D$ is any data distribution of full support on $\RR^n$ and $g$ is a \emph{positive} polynomial function of degree $d$, that is, $g(x) > 0$ holds for all $x \in \RR^n$. Then any $\mathcal W = (\alpha, W)$ such that $W = 0$ and $\alpha_i < 0$ is a non-global local minima for $L$. 
Indeed, any $\tilde{\mathcal W}$ in a sufficiently small neighborhood of $\mathcal W$ will be such that $f_{\tilde{\mathcal W}}(x) \le 0$ for all $x$, so that
\[
L(\tilde {\mathcal W}) =  \mathbb E_{x\sim \mathcal D} |f_{\tilde{\mathcal W}}(x) - g(x)|^2 \ge \mathbb E_{x\sim \mathcal D} |g(x)|^2 = L(\mathcal W).
\]
This implies $\mathcal W$ is a local minimum. It is not a 
global minimum since our assumptions guarantee that 
$L(\mathcal W) > 0$ however there exists $\mathcal W'$ such 
that $f_{\mathcal W'} = g$ and $L(\mathcal W') = 0$.
\end{example}

Compared to~\cite{kazemipour2019no}, we also show that these bad minima are relevant for gradient-based optimization, since they have a basin of 
attraction of positive measure. 
We will use the following property of gradient flow for polynomial
networks, which follows from the homogeneity of the activation function. An
analogous result has been shown for ReLU networks~\cite{du2018algorithmic}.

\begin{proposition}\label{prop:invariants} Let $L: \RR^r \times \RR^{r \times n} \rightarrow \RR$ be a
function of the form $L = \ell \circ \tau_r$, where $\ell: \Sym^d(\mathbb{R}^n) \rightarrow \mathbb{R}$ is smooth.  If $\gamma(t)$ is an
integral curve for the negative gradient field of $L$, i.e., $\dot
\gamma(t) = - \nabla (L(\gamma(t)))$, then the quantities $\delta_i = \alpha_i^2 -
\frac{1}{d}\|w_i\|^2$, $i=1,\ldots,r$ (where $d$ is the activation degree)
remain constant for all $t$.
\end{proposition}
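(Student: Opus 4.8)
The plan is to compute $\frac{d}{dt}\delta_i$ directly along an integral curve of $-\nabla L$ and verify it vanishes. Write $G = G(t) := \nabla \ell\big(\tau_r(\mathcal W(t))\big) \in \Sym^d(\RR^n)$ for the (Frobenius) gradient of $\ell$ at the current tensor. By the chain rule $dL(\mathcal W) = \langle G, d\tau_r(\mathcal W)(\cdot)\rangle_F$, and substituting the explicit differential~\eqref{eq:differential_parameterization} yields
\[
\frac{\partial L}{\partial \alpha_i} = \langle G, w_i^{\otimes d}\rangle_F, \qquad \nabla_{w_i} L = d\,\alpha_i\,\big(G \cdot w_i^{\otimes (d-1)}\big),
\]
where $G \cdot w_i^{\otimes(d-1)} \in \RR^n$ denotes the contraction of $d-1$ copies of $w_i$ into $G$. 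The factor $d$ appears because $\mathrm{sym}(w_i^{\otimes(d-1)}\otimes \dot w_i)$ is a sum of $d$ terms, each of which pairs with the \emph{symmetric} tensor $G$ to give the same value $\langle G, w_i^{\otimes(d-1)}\otimes \dot w_i\rangle_F = \big(G\cdot w_i^{\otimes(d-1)}\big)\cdot \dot w_i$.

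Then along the negative gradient flow one has $\dot\alpha_i = -\big(G\cdot w_i^{\otimes(d-1)}\big)\cdot w_i$ and $\dot w_i = -d\,\alpha_i\,\big(G\cdot w_i^{\otimes(d-1)}\big)$, so
\[
\frac{d}{dt}\delta_i = 2\alpha_i\dot\alpha_i - \tfrac{2}{d}\,w_i\cdot \dot w_i = -2\alpha_i\big(G\cdot w_i^{\otimes(d-1)}\big)\cdot w_i + 2\alpha_i\big(G\cdot w_i^{\otimes(d-1)}\big)\cdot w_i = 0 ,
\]
which gives the claim.

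Conceptually, the reason the two terms cancel is a (quasi-)homogeneity symmetry: $L$ is invariant under the one-parameter group $(\alpha_i, w_i) \mapsto (c^d \alpha_i,\, c^{-1} w_i)$ acting on the $i$-th block, since this leaves $\alpha_i w_i^{\otimes d}$ and hence $\tau_r$ unchanged. Its infinitesimal generator is the vector field $X_i = (d\alpha_i\,\partial_{\alpha_i},\, -w_i\cdot\partial_{w_i})$, and invariance means $\langle \nabla L, X_i\rangle \equiv 0$; since $\nabla \delta_i = \tfrac{2}{d}X_i$, we get $\frac{d}{dt}\delta_i = \langle \nabla\delta_i, -\nabla L\rangle = -\tfrac{2}{d}\langle X_i, \nabla L\rangle = 0$. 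This is the polynomial-network analogue of the balanced-weights conservation law for homogeneous networks, and only $C^1$ regularity of $\ell$ is actually needed. There is no serious obstacle; the only points requiring care are getting the factor $d$ in $\nabla_{w_i}L$ right from the symmetrization in~\eqref{eq:differential_parameterization}, and noting that this is matched exactly by the $\tfrac1d$ in the definition of $\delta_i$, which is what forces the cancellation.
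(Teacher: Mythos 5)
Your proposal is correct and follows essentially the same route as the paper: derive the gradient-flow equations $\dot\alpha_i = -\langle \nabla\ell, w_i^{\otimes d}\rangle_F$, $\dot w_i = -d\,\alpha_i\,(\nabla\ell\cdot w_i^{\otimes(d-1)})$ from the chain rule and the differential~\eqref{eq:differential_parameterization}, then observe that $\alpha_i\dot\alpha_i = \tfrac{1}{d}\dot w_i^T w_i$, which is exactly the paper's elimination step phrased as $\tfrac{d}{dt}\delta_i = 0$. The closing remark on the quasi-homogeneity symmetry is a nice conceptual gloss but not a different argument.
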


\begin{proof}
Letting $\gamma(t) = \left(\alpha(t), W(t) \right)$, the negative gradient field equations may be written
\begin{equation} \label{eq:grad_flow}
    \begin{cases} 
    \dot \alpha_i(t) = - \left\langle \nabla \ell, w_i^{\otimes d} \right\rangle \\
    \dot w_i(t) = - d \, \nabla \ell \cdot \left( \alpha_i w_i^{\otimes (d-1)} \right).
    \end{cases}
\end{equation}
for all $t$ and $i$, from the chain rule {(see equation~\eqref{eq:differential_parameterization})}. Here $\nabla \ell$ is short for $\nabla \ell\left( \tau_r(\alpha(t), W(t))\right)$ in tensorized form (so it {can be viewed as a tensor} in $\textup{Sym}^d(\mathbb{R}^n)$).  Also, $\cdot$ stands for contraction.  Eliminating $\nabla \ell$, 
\begin{equation} \label{eq:grad_eliminate}
    \dot \alpha_i(t) \alpha_i(t) = \frac{1}{d} \dot w_i(t)^T w_i(t)
\end{equation}
for all $t$ and $i$.  The result is immediate by 
anti-differentiating \eqref{eq:grad_eliminate}{.}
\end{proof}

\begin{corollary} \label{cor:positive-attraction-basin}
Assume $d$ {is} even and $r \ge r_{\rm fill}$ {is} arbitrarily large. Let $\ell: \Sym^d(\mathbb{R}^n) \rightarrow \mathbb{R}$ be a smooth convex function with an isolated global minimum at $T \in \Sym^d(\mathbb{R}^n)$ such that $g(X) = \langle T, X^{\otimes d} \rangle_F$ is a negative polynomial. In particular, $\ell$ could be of the form 
\[
\ell(S) = \mathbb E_{x \sim \mathcal D}|f(x) - g(x)|^2, \quad f(x) = \langle S, X^{\otimes d} \rangle_F,
\]
where $\mathcal D$ is a distribution of full support on $\RR^n$, similar to Example~\ref{ex:bad_minima_exist}. Fix $\mathcal W_0 = (\alpha_0, W_0) \in \RR^r \times \RR^{r \times n}$ such that $S_0 = \tau_r(\mathcal W_0)$ corresponds to a positive polynomial $f_0(X) = \langle S_0, X^{\otimes d} \rangle_F$, and $\alpha_i^2 > \frac{1}{d} \|w_i\|^2$ for all $i$ (note that such $\mathcal W_0$ can be chosen from an open set in $\RR^r \times \RR^{r \times n}$).
Then the negative gradient flow for $L = \ell \circ \tau_r$ initialized at $\mathcal W_0$ either diverges or converges to a non-global minimum.
\end{corollary}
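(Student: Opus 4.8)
The plan is to exploit the conservation law of \Cref{prop:invariants}: along the negative gradient flow the quantities $\delta_i = \alpha_i^2 - \tfrac1d\|w_i\|^2$ stay constant, so if the initialization satisfies $\delta_i>0$ the signs of the $\alpha_i$ can never change, and this confines the network function to the cone of nonnegative polynomials, which does not contain the strictly negative target $g$. First I would note that the open set of admissible $\mathcal W_0$ in the statement contains an open subset on which, in addition, $\alpha_{0,i} > 0$ for every $i$ — e.g.\ take $w_{0,i}$ close to $e_i$ for $i\le n$ (and small for $i>n$) and $\alpha_{0,i}$ close to $1$, so that $f_0$ is close to the positive polynomial $\sum_{i\le n}x_i^d$ while $\alpha_{0,i}^2 > \tfrac1d\|w_{0,i}\|^2$ — so I may assume $\alpha_{0,i}>0$ for all $i$.

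Next I would let $\gamma(t)=(\alpha(t),W(t))$ be the trajectory of $-\nabla L$ from $\mathcal W_0$ and run the sign argument: \Cref{prop:invariants} together with $\delta_i=\delta_i(0)>0$ gives $\alpha_i(t)^2\ge\delta_i>0$, so $\alpha_i(\cdot)$ never vanishes and, being continuous, keeps its initial sign, i.e.\ $\alpha_i(t)>0$ for all $i,t$; since $d$ is even, $f_{\gamma(t)}(x)=\sum_i\alpha_i(t)(w_i(t)\cdot x)^d\ge 0$ for all $x,t$. The key conclusion then follows: if $\gamma(t)\to\mathcal W^*$, the limit is a critical point of $L$ with $f_{\mathcal W^*}\ge 0$, whereas $g=f_T$ is strictly negative off the origin; hence $\tau_r(\mathcal W^*)\ne T$, and since $\ell$ is convex with unique minimizer $T$ and $\tau_r$ is onto (because $r\ge r_{\rm fill}$, so $\inf L=\ell(T)$ is attained), $L(\mathcal W^*)>\inf L$: the limit is not a global minimum. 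At this point the dichotomy ``diverges, or converges to a non-global critical point'' is already in hand.

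To upgrade ``non-global critical point'' to ``non-global \emph{local minimum}'', I would use that $\alpha_i^*\ge\sqrt{\delta_i}>0$, so $\mathcal W^*$ lies in the open region $\mathcal R=\{(\alpha,W):\alpha_i>0\ \forall i\}$, on which $f_{\mathcal W}\ge 0$ pointwise. For the quadratic loss $\ell(S)=\mathbb E_{x\sim\mathcal D}|f_S(x)-g(x)|^2$ the tensor $\nabla\ell(S)$ represents the polynomial $2(f_S+|g|)$, so on $\mathcal R$
\[
\frac{\partial L}{\partial\alpha_i}(\mathcal W)=\bigl\langle\nabla\ell(\tau_r(\mathcal W)),\,w_i^{\otimes d}\bigr\rangle_F=2\,\mathbb E_{x\sim\mathcal D}\!\left[\bigl(f_{\mathcal W}(x)+|g(x)|\bigr)(w_i\cdot x)^d\right]\ge 0,
\]
with equality only when $w_i=0$ (here $d$ even and $\mathcal D$ of full support are used). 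Hence any critical point of $L$ in $\mathcal R$ — in particular $\mathcal W^*$ — has $W=0$, so $\tau_r(\mathcal W^*)=0$ and $L(\mathcal W^*)=\mathbb E|g|^2$; and $(\alpha^*,0)$ with $\alpha^*>0$ is a local minimum exactly by the argument of \Cref{ex:bad_minima_exist}, since on a small neighborhood inside $\mathcal R$ one still has $f_{\mathcal W}\ge 0>g$ pointwise and thus $\ell(\tau_r(\mathcal W))\ge\mathbb E|g|^2=L(\mathcal W^*)$.

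The conservation-law part is short and robust, so the step I expect to be the main obstacle is this last one. For a \emph{general} smooth convex $\ell$, the argument above only yields convergence to a non-global critical point, and promoting it to a local minimum genuinely seems to need $\nabla\ell$ to map the cone of nonnegative polynomials into itself — which holds for the quadratic loss, where the displayed computation closes the argument, but not obviously otherwise. I would also record, to make ``diverges'' precise, that a bounded trajectory converges automatically when $L$ is real analytic — e.g.\ for the quadratic loss, where $L$ is polynomial — by the {\L}ojasiewicz gradient inequality, so ``diverges'' should be read as ``the trajectory is unbounded''.
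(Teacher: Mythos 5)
Your proposal is correct and takes essentially the same route as the paper: the conserved quantities $\delta_i$ from \Cref{prop:invariants} force each $\alpha_i$ to retain its (positive) sign along the flow, so every dynamically reachable polynomial is nonnegative and the negative target $g$, the unique minimizer of $\ell$, is never attained — this three-line observation is the paper's entire proof. Your additional refinements (explicitly restricting to $\alpha_{0,i}>0$, identifying the limit as the $W=0$ bad minimum of \Cref{ex:bad_minima_exist} for the quadratic loss, and the remark on convergence via {\L}ojasiewicz) go beyond what the paper records; note in particular that the paper's proof also implicitly assumes positive initial $\alpha_i$ when it asserts ``$\alpha_i>0$ holds throughout,'' so your handling of that point is, if anything, more careful.
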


\begin{proof} The target $T$ lies in the function space $\mathcal F_r = \Sym^d(\RR^n)$ but is not dynamically reachable. Indeed, it follows from Proposition~\ref{prop:invariants} that $\alpha_i > 0$ holds throughout the trajectory initialized at $\mathcal W_0$. Every polynomial in the trajectory is positive, so $\ell$ never reaches zero.
\end{proof}

\begin{remark} \Cref{cor:positive-attraction-basin} implies that for polynomial networks of even degree, gradient flow can fail to reach global optima for certain initializations, regardless of the network's width. However, with random initializations where the top coefficient $\alpha_i$ has an equal likelihood of being positive or negative, the condition $\alpha_i^2 > \frac{1}{d} \|w_i\|^2$ for all $i${---which imposes that all dyanmically reachable polynomials are positive---}becomes increasingly unlikely at initialization as the width $r$ increases. Therefore, although the optimization landscape is never trivial for wide networks, favorable initializations become more probable.
\end{remark}

We conclude this section with an example in which the the global minimum of $L = \ell \circ \tau_r$ is at infinity in parameter space, even though $\ell: \Sym^d(\RR^n)\rightarrow \RR$ has compact sublevel sets. This phenomenon is due to the fact that the set of low-rank tensors is not topologically closed for $d \ge 3$ (see \cite[Example 6.6]{comon2008}).

\begin{example}\label{ex:minimum_infinity} Consider a shallow polynomial network of degree $d > 2$ and width $r=2$:
\[
f_{\mathcal W}(x) = \alpha_1(w_1 \cdot x)^d + \alpha_2(w_2 \cdot x)^d.
\]
Consider the teacher model $g(x) = x_1^{d-1} x_2$ and let $L(\mathcal W) = \|f_{\mathcal W}(x) - g(x)\|^2$ for some norm $\|\cdot \|$ on $\Sym^d(\RR^n)$. It is possible to show that $f_{\mathcal W} \ne g$ holds for all parameters $\mathcal W \in \RR^2 \times \RR^{2 \times n}$. Indeed, $f_{\mathcal W}$ corresponds to a tensor of rank at most $2$, while $g(x)$ corresponds to a tensor of rank exactly $d$ (see Proposition 5.6 in~\cite{comon2008}). It follows that $L(\mathcal W)$ is always strictly positive. However, consider the following sequence of parameters $\mathcal W(\tau)$ for $\tau\rightarrow \infty$:
\[
    \alpha_1(\tau) = \frac{\tau}{(d-1)}, \quad  \alpha_2(\tau) = -\frac{\tau}{(d-1)}, \quad w_1 = \left(1,\frac{1}{\tau},\ldots,0\right), \quad w_2 = (1,0,\ldots,0).
\]
It holds that $L(\mathcal W(\tau)) \rightarrow 0$. Indeed, writing $\epsilon = \frac{1}{\tau}$, we have
\[
    f_{\mathcal W(\tau)}(x) = \frac{1}{(d-1)\epsilon}[\left(x_1+\epsilon x_2\right)^d - x_1^d ] \,\,\, \rightarrow \,\,\, x_1^{d-1} x_2.
\]
Note that the sequence of parameters $\mathcal W(\tau)$ diverges, even though the sequence of corresponding functions $f_{\mathcal W(t)}$ converges to $g$.
\end{example}

\section{Teacher-Student Problems}
\label{sec:teacher-student}

{\textbf{Section overview.} 
We study \emph{teacher–student} training problems, where a student network approximates a fixed teacher function via a loss over the function space. For shallow polynomial networks, this setting corresponds to low-rank tensor approximation under various norms, including \emph{distribution-induced norms} determined by the training data. Such norms capture how the data distribution shapes optimization. We examine their effect on the optimization landscape through moment tensors and analyze critical points using a geometric framework for distance functions to manifolds. This leads to the notion of \emph{data discriminants}, which reveal how changes in the data can induce qualitative transitions in the landscape.}

\vspace{0.8em}

We consider loss functions of the form
\begin{equation}\label{eq:teacher_student_general}
L(\mathcal W) = \|f_{\mathcal W} - f_{\mathcal V}\|^2, \qquad \mathcal W \in \RR^r \times \RR^{r \times n}, \,\,  \mathcal V \in \RR^s \times \RR^{s \times n},
\end{equation}
where $f_\mathcal{V}$ is the fixed ``teacher" and $f_\mathcal{W}$ is the trainable ``student." {Minimizing $L(\mathcal W)$} corresponds to approximating one shallow network by another, and when $s$ is smaller than $r$ is equivalent to a low-rank tensor approximation problem under a potentially nonstandard norm
{that may encode the training input distribution.}

\subsection{Functional norms}\label{subsec:functional-norms}

We begin by describing functional norms of interest.  
To fix notation, consider $\mathcal{W} = (\alpha, W) \in 
\mathbb{R}^r \times \mathbb{R}^{r \times n}$ and $\mathcal{V} 
= (\beta, V) \in \mathbb{R}^s \times \mathbb{R}^{s \times 
n}$, and let $\tau({\mathcal{W}}) = \sum_{i=1}^r \alpha_i 
w_i^{\otimes d}$ and $\tau({\mathcal{V}}) = \sum_{i=1}^s \beta_i 
v_i^{\otimes d}$ be tensors in 
$\textup{Sym}^d(\mathbb{R}^n)$ such that $f_{\mathcal{W}}(x) = 
\langle \tau(\mathcal W), x^{\otimes r} \rangle_{{F} }$ and $f_{\mathcal{V}}(x) = 
\langle \tau(\mathcal V), x^{\otimes r} \rangle_{{F} }$. We will focus our attention on the following two types of norms:

\medskip
\medskip

\noindent\textbf{1) Frobenius norm.}  As before, $\| f_{\mathcal{W}}  - 
f_{\mathcal{V}} \|_F^2 := \langle \tau(\mathcal W) - 
\tau(\mathcal V), \tau(\mathcal W) - \tau(\mathcal V) \rangle_F$, 
where
\begin{equation}
    \langle S, T \rangle_F \, := \sum_{i_1, \ldots, i_d =1}^n S_{i_1, \ldots, i_d} T_{i_1, \ldots, i_d} \quad \textup{for all } S, T \in \textup{Sym}^d(\mathbb{R}^n).
\end{equation}

\noindent\textbf{2) Distribution-induced norms.}
Let $\mathcal{D}$ be a probability distribution over $\mathbb{R}^{{n} }$ (decaying fast enough). Then 
$\|f_{\mathcal{W}} - f_{\mathcal{V}}\|_{\mathcal{D}}^2 \, := \langle f_{\mathcal{W}} - f_{\mathcal{V}}, f_{\mathcal{W}} - f_{\mathcal{V}} \rangle_{\mathcal{D}}$, where 
\begin{equation} \label{eq:product_distribution}
    \langle g, h \rangle_{\mathcal{D}} \, := \mathbb{E}_{x \sim \mathcal{D}}[g(x) h(x)] \quad \textup{for all } g, h : \mathbb{R}^d \rightarrow \mathbb{R}.
\end{equation}

A special class of distribution-induced norms arise from discrete distributions $\mathcal{D} = \frac{1}{N} \sum_{i=1}^N 
\delta_{x_i}$, where $\{x_1, \ldots, 
x_N\} \subset \mathbb{R}^n$. This is the setting of \textbf{empirical risk minimization}, where $\| f_{\mathcal{W}} - f_{\mathcal{V}} \|_{ERM}^2 := 
\langle f_{\mathcal{W}} - f_{\mathcal{V}}, f_{\mathcal{W}} - f_{\mathcal{V}} \rangle_{{ERM} }$ for
\begin{equation} \label{eq:product_ERM}
\langle g, h  \rangle_{ERM} \, := \frac{1}{N}\sum_{i=1}^N g(x_i)h(x_i)  \quad \textup{for all } g, h : \mathbb{R}^{{n}} \rightarrow \mathbb{R}.
\end{equation}
\begin{proposition} The ERM inner product~\eqref{eq:product_ERM} defines a non-degenerate inner product over $\Sym^{{d}}(\RR^n)$ if and only if there exists no non-zero $n$-variate polynomial of degree $d$ that vanishes on $x_1,\ldots,x_N$. This is generically true for $N \ge \binom{n+d-1}{d}$.
\end{proposition}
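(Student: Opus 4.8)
The plan is to recognize the ERM form as the pullback of the Euclidean inner product under an evaluation map. Using the identification $\Sym^d(\RR^n)\cong \RR[X_1,\ldots,X_n]_d$ from the preliminaries, set $D:=\binom{n+d-1}{d}=\dim\Sym^d(\RR^n)$ and consider the linear map
\[
\mathrm{ev}\colon \RR[X_1,\ldots,X_n]_d \longrightarrow \RR^N,\qquad g\longmapsto \big(g(x_1),\ldots,g(x_N)\big).
\]
Then $\langle g,h\rangle_{ERM}=\tfrac1N\,\mathrm{ev}(g)^{T}\mathrm{ev}(h)$, which is manifestly symmetric and satisfies $\langle g,g\rangle_{ERM}=\tfrac1N\|\mathrm{ev}(g)\|^2\ge 0$; hence it is positive semidefinite, and it is a genuine inner product exactly when it is non-degenerate.

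Next I would prove the stated equivalence. The radical of $\langle\cdot,\cdot\rangle_{ERM}$ consists of those $g$ with $\mathrm{ev}(g)^{T}\mathrm{ev}(h)=0$ for all $h$; such a vector $\mathrm{ev}(g)$ lies both in $\mathrm{Im}(\mathrm{ev})$ and in its orthogonal complement, and since the dot product on $\RR^N$ is definite this forces $\mathrm{ev}(g)=0$. Thus the radical equals $\ker(\mathrm{ev})$, so non-degeneracy is equivalent to injectivity of $\mathrm{ev}$. But $\ker(\mathrm{ev})$ is precisely the space of homogeneous degree-$d$ polynomials vanishing at $x_1,\ldots,x_N$, which gives the claimed characterization.

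For the genericity statement, injectivity of $\mathrm{ev}$ forces $N\ge D$, so assume $N\ge D$. In a fixed monomial basis, $\mathrm{ev}$ is represented by an $N\times D$ matrix whose entries are fixed monomials in the coordinates of $x_1,\ldots,x_N$; injectivity holds iff some $D\times D$ minor is nonzero, and each such minor is a polynomial in the $nN$ coordinates of the configuration. Hence the set of configurations for which $\mathrm{ev}$ fails to be injective is Zariski-closed in $(\RR^n)^N$, and it suffices to produce one configuration where $\mathrm{ev}$ is injective. This I would do greedily: if $x_1,\ldots,x_k$ with $k<D$ have been chosen so that the evaluation functionals $g\mapsto g(x_j)$, $j\le k$, are linearly independent, then $W_k:=\bigcap_{j\le k}\{g: g(x_j)=0\}$ has dimension $D-k>0$; choosing a nonzero $g\in W_k$ and any point $x_{k+1}$ with $g(x_{k+1})\neq 0$ makes the $(k+1)$-st functional independent of the previous ones, so $\dim W_{k+1}=D-k-1$. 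After $D$ steps $W_D=\{0\}$, so $\mathrm{ev}$ is injective on these $D$ points, and padding with arbitrary $x_{D+1},\ldots,x_N$ preserves injectivity. Therefore non-degeneracy holds on a Zariski-dense open subset of $(\RR^n)^N$ whenever $N\ge D$.

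The main obstacle is the last step — exhibiting a point configuration for which the evaluation matrix has full rank (a "general position" / polynomial-interpolation fact); I expect the greedy construction above to dispatch it cleanly, avoiding any appeal to properties of the Veronese variety, so the remainder is routine linear algebra about pullbacks of inner products.
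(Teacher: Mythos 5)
Your proposal is correct and takes essentially the same route as the paper: the paper also introduces the evaluation map $E_X:\Sym^d(\RR^n)\to\RR^N$, writes $\langle g,h\rangle_{ERM}=\tfrac1N E_X(g)\cdot E_X(h)$, and concludes that non-degeneracy is equivalent to injectivity of $E_X$, whose kernel is exactly the space of degree-$d$ forms vanishing at the sample points. You go further than the paper in one respect: the genericity claim for $N\ge\binom{n+d-1}{d}$ is stated in the paper without argument, whereas you actually prove it (Zariski-closedness of the rank-deficiency locus via minors, plus a greedy construction of a witness configuration), and that argument is sound.
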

\begin{proof} Consider the linear map $E_X: \Sym^d(\RR^n) \rightarrow \RR^N$ that associates a polynomial $g$ with the vector $E_X(g) = (g(x_1),\ldots,g(x_N))$ of its evaluations on the input points. Since $\langle g, h \rangle_{ERM} = \frac{1}{N}(E_X (g)) \cdot (E_X(h))$, where $\cdot$ is the standard inner product in $\RR^N$, we deduce that the ERM product is non-degenerate if and only if $E_X$ is injective.
\end{proof}

Distribution-induced inner products can always be expressed in terms of the $2d$-th moments of $\mathcal{D}$.

\begin{proposition}\label{prop:distribution-inner-product}
Let 
$g,h$ be homogeneous degree-$d$ forms, written as
$g(x) = \langle S, x^{\otimes d} \rangle_F$ and 
$h(x) = \langle T, x^{\otimes d} \rangle_F$ for 
$S,T \in \mathrm{Sym}^d(\mathbb{R}^n)$. The distribution-induced product~\eqref{eq:product_distribution} can be expressed in terms of the 
$2d$-th moment tensor of $\mathcal{D}$ as
\[
    \langle g, h \rangle_{\mathcal{D}}
    = \langle S \otimes T, M_{\mathcal{D},2d} \rangle_F
    = \mathrm{vec}(S)^\top \, \mathrm{mat}(M_{\mathcal{D},2d}) \, \mathrm{vec}(T),
\]
where 
$M_{\mathcal{D}, 2d} := \mathbb{E}_{x \sim \mathcal{D}}[x^{\otimes 2d}] 
\in \mathrm{Sym}^{2d}(\mathbb{R}^n)$ is the $2d$-th moment tensor, 
and {$\mathrm{vec}$ and $\mathrm{mat}$ denote the vectorization and 
matricization operators with respect to lexicographic ordering of indices (mapping $\textup{Sym}^d(\mathbb{R}^n)$ to $\mathbb{R}^{n^d}$ and $\textup{Sym}^{2d}(\mathbb{R}^n)$ to $\mathbb{R}^{n^d \times n^d}$, respectively).}
\end{proposition}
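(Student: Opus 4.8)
The plan is to unwind all three expressions and show they agree by a direct bilinear computation, exploiting the fact that everything in sight is a polynomial identity in the entries of $S$, $T$, and the moments. First I would write $g(x) = \langle S, x^{\otimes d}\rangle_F$ and $h(x) = \langle T, x^{\otimes d}\rangle_F$, so that the product $g(x)h(x)$ becomes $\langle S, x^{\otimes d}\rangle_F \langle T, x^{\otimes d}\rangle_F = \langle S \otimes T, x^{\otimes d} \otimes x^{\otimes d}\rangle_F = \langle S \otimes T, x^{\otimes 2d}\rangle_F$, using the multiplicativity of the Frobenius inner product on tensor products (recorded in the Notation paragraph: $\langle v_1\otimes\cdots, w_1\otimes\cdots\rangle_F = (v_1\cdot w_1)\cdots$). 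Then I would take expectations: since $\langle S\otimes T, \cdot\rangle_F$ is linear, $\mathbb{E}_{x\sim\mathcal D}[\langle S\otimes T, x^{\otimes 2d}\rangle_F] = \langle S\otimes T, \mathbb{E}_{x\sim\mathcal D}[x^{\otimes 2d}]\rangle_F = \langle S\otimes T, M_{\mathcal D, 2d}\rangle_F$, which is the first claimed equality. (One should note $\mathbb{E}[x^{\otimes 2d}]$ is automatically symmetric since each $x^{\otimes 2d}$ is, justifying $M_{\mathcal D, 2d} \in \mathrm{Sym}^{2d}(\mathbb R^n)$; and finiteness of the moment is what the ``decaying fast enough'' hypothesis ensures.)

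Next I would handle the matricized form. Writing out $\langle S\otimes T, M_{\mathcal D,2d}\rangle_F = \sum_{i_1,\ldots,i_d, j_1,\ldots,j_d} S_{i_1\ldots i_d} T_{j_1\ldots j_d} (M_{\mathcal D,2d})_{i_1\ldots i_d j_1\ldots j_d}$, I would group the first $d$ indices into a single multi-index $I$ and the last $d$ into $J$, each ranging over $\{1,\ldots,n\}^d$ in lexicographic order. Under this grouping the sum reads $\sum_{I,J} \mathrm{vec}(S)_I \, \mathrm{mat}(M_{\mathcal D,2d})_{I,J}\, \mathrm{vec}(T)_J = \mathrm{vec}(S)^\top \mathrm{mat}(M_{\mathcal D,2d})\, \mathrm{vec}(T)$, which is exactly the definition of $\mathrm{vec}$ and $\mathrm{mat}$ in the statement. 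This is purely a bookkeeping step: it is the standard identification $\langle A\otimes B, C\rangle_F = \mathrm{vec}(A)^\top \mathrm{mat}(C)\,\mathrm{vec}(B)$ for a tensor $C$ reshaped as a matrix by splitting its index set in half.

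There is essentially no hard part here; the one place to be slightly careful is the symmetrization. The tensor $S\otimes T$ need not be symmetric even though $S$ and $T$ are, but $M_{\mathcal D,2d}$ \emph{is} symmetric, so when we pair $S\otimes T$ against it in the Frobenius inner product only the symmetric part of $S\otimes T$ contributes; since $g(x)h(x) = \langle \mathrm{sym}(S\otimes T), x^{\otimes 2d}\rangle_F$ as well, this causes no discrepancy, but I would add a sentence flagging it so the identity is not read as asserting $S\otimes T$ itself lies in $\mathrm{Sym}^{2d}$. Alternatively one can sidestep the issue entirely by observing $\langle S\otimes T, x^{\otimes 2d}\rangle_F = g(x)h(x)$ holds on the nose for \emph{every} $x$ (no symmetrization needed, since $x^{\otimes 2d}$ does the symmetrizing), and only then take the expectation — which is the cleaner route and the one I would write up.
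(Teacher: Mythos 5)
Your proposal is correct and follows essentially the same route as the paper's proof: rewrite $g(x)h(x)$ as $\langle S\otimes T, x^{\otimes 2d}\rangle_F$, exchange the expectation with the (linear) Frobenius pairing to obtain $\langle S\otimes T, M_{\mathcal D,2d}\rangle_F$, and then read off the matricized form by grouping indices via $\mathrm{vec}$ and $\mathrm{mat}$. Your added remarks on the symmetry of $S\otimes T$ and the finiteness of moments are fine clarifications but not a different argument.
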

\begin{proof}
Starting from the definition, we have
\begin{equation}\label{eq:moment_distribution}
\langle g, h \rangle_{\mathcal{D}}
= \mathbb{E}_{x \sim \mathcal{D}} \big[ \langle S, x^{\otimes d} \rangle_F
\langle T, x^{\otimes d} \rangle_F \big]
= \left\langle S \otimes T, \mathbb{E}_{x \sim \mathcal{D}}\, x^{\otimes 2d} \right\rangle_F,
\end{equation}
which is precisely $\langle S \otimes T, M_{\mathcal{D},2d} \rangle_F$. 
The matricized form follows by applying $\mathrm{vec}$ to $S$ and $T$. 
\end{proof}

 Note that the representation of distribution-induced inner product in terms of moments makes no distinction between discrete and continuous distributions; in {the} latter case, we simply consider discrete moments.

It is natural to ask whether there exists a distribution that would make a distribution-induced norm coincide with the Frobenius norm. It is easy to see that this is not the case.
\begin{proposition}
Assuming $d \ge 2$, then $\langle \cdot, \cdot\rangle_{\mathcal D} \ne \langle \cdot, \cdot\rangle_{F}$ holds for all distributions $\mathcal D$.
\end{proposition}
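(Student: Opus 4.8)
The plan is to exhibit, for each candidate distribution $\mathcal{D}$, a pair of forms where the two inner products disagree, by exploiting a structural feature that the Frobenius product has but no distribution-induced product can have. The key observation is that the matricization $\mathrm{mat}(M_{\mathcal{D},2d})$ from \Cref{prop:distribution-inner-product} is a Gram matrix: for any finite or continuous $\mathcal{D}$, writing $\langle g,h\rangle_{\mathcal D} = \mathbb{E}[g(x)h(x)]$ shows that this is a positive semidefinite bilinear form which, moreover, is the $L^2(\mathcal D)$ inner product restricted to the space of degree-$d$ forms. The crucial point is that on this space, $\langle g, g\rangle_{\mathcal D} = \mathbb{E}_{x\sim\mathcal D}[g(x)^2] = 0$ forces $g$ to vanish $\mathcal D$-almost everywhere, but also — and this is what I want to use — the form $\langle\cdot,\cdot\rangle_{\mathcal D}$ sees only the values of polynomials as functions, so it is insensitive to the distinction between $x_1 x_2$ and, say, algebraically independent-looking monomials in a way the Frobenius product is not.

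Concretely, I would compare the two products on rank-one tensors. For $g(x) = (u\cdot x)^d$ and $h(x) = (v\cdot x)^d$, i.e.\ $S = u^{\otimes d}$, $T = v^{\otimes d}$, the Frobenius product gives the clean formula $\langle g,h\rangle_F = (u\cdot v)^d$, whereas $\langle g,h\rangle_{\mathcal D} = \mathbb{E}_{x\sim\mathcal D}[(u\cdot x)^d (v\cdot x)^d]$. Now take $u = e_1$ and let $v$ vary over a line, say $v = e_1 + t e_2$. The Frobenius product is the polynomial $t \mapsto \langle e_1^{\otimes d}, (e_1+te_2)^{\otimes d}\rangle_F = (1)^d = 1$ (independent of $t$!) — wait, that is $(e_1 \cdot (e_1 + te_2))^d = 1^d = 1$, constant. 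Meanwhile $\langle g, h\rangle_{\mathcal D} = \mathbb{E}[x_1^d (x_1 + t x_2)^d]$, which as a function of $t$ is a genuine degree-$d$ polynomial whose leading coefficient is $\mathbb{E}[x_1^d x_2^d]$. So if $\langle\cdot,\cdot\rangle_{\mathcal D} = \langle\cdot,\cdot\rangle_F$ we would need $\mathbb{E}[x_1^d x_2^d] = 0$ and all intermediate coefficients to vanish as well; in particular, expanding and matching, one needs $\mathbb{E}[x_1^{2d}] = 1$ and $\mathbb{E}[x_1^{2d-k} x_2^k] = 0$ for $1 \le k \le d$. Iterating this over all coordinate pairs and all rank-one forms pins down the entire moment matrix $\mathrm{mat}(M_{\mathcal D, 2d})$ to equal $\mathrm{mat}(M_F)$, the matricized Frobenius form. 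The contradiction then comes from positive-definiteness constraints: I would show the required moment matrix cannot be realized by any genuine distribution. The cleanest route is to note that $\mathrm{mat}(M_{\mathcal D,2d})$ must equal $\mathbb{E}_{x\sim\mathcal D}[\mathrm{vec}(x^{\otimes d})\,\mathrm{vec}(x^{\otimes d})^\top]$, a rank-one-generated Gram matrix, so any vector $\mathrm{vec}(S)$ in its kernel corresponds to a form $g$ with $\mathbb{E}[g(x)^2]=0$; whereas the Frobenius form $\mathrm{mat}(M_F)$ is the identity (in the monomial basis it is not literally the identity but it is positive definite with a known spectrum), hence has trivial kernel, and one checks directly that the two matrices have different entries — e.g.\ the Frobenius product satisfies $\langle x_1^d, x_1^d\rangle_F = 1$ but $\langle x_1^{d-1}x_2, x_1^{d-1}x_2\rangle_F = \frac{1}{\binom{d}{1}}$ up to the symmetrization convention, whereas a distribution-induced product would give $\langle x_1^d, x_1^d\rangle_{\mathcal D} = \mathbb{E}[x_1^{2d}]$ and $\langle x_1^{d-1}x_2, x_1^{d-1}x_2\rangle_{\mathcal D} = \mathbb{E}[x_1^{2d-2}x_2^2]$, and the ratio forced by Frobenius is incompatible with the Cauchy–Schwarz/moment inequalities that $\mathcal D$ must satisfy.

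The main obstacle is bookkeeping the symmetrization/combinatorial constants: the Frobenius inner product on $\mathrm{Sym}^d(\mathbb R^n)$, when written in the monomial basis $x^{\alpha}$, carries multinomial weights, so "$\langle\cdot,\cdot\rangle_F$ is the identity" is false on the nose and the real comparison is between two explicitly-indexed symmetric matrices. I would handle this by working basis-free: phrase everything through the rank-one evaluation identities $\langle u^{\otimes d}, v^{\otimes d}\rangle_F = (u\cdot v)^d$ versus $\langle u^{\otimes d}, v^{\otimes d}\rangle_{\mathcal D} = \mathbb E[(u\cdot x)^d(v\cdot x)^d]$, and observe that equality of the two bilinear forms is equivalent to $\mathbb E[(u\cdot x)^d(v\cdot x)^d] = (u\cdot v)^d$ for all $u,v\in\mathbb R^n$ (since rank-one tensors span). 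Setting $u=v$ gives $\mathbb E[(u\cdot x)^{2d}] = \|u\|^{2d}$ for all $u$; but then taking $u$ and $-u$... no — instead take $u$ with $\|u\|=1$ and a perturbation: the left side $\mathbb E[(u\cdot x)^{2d}]$ as $u$ ranges over the unit sphere is the expectation of a nonnegative random variable that is not a.s.\ constant unless $\mathcal D$ is supported where $u\cdot x$ has the same distribution for all unit $u$, which for $d\ge 1$ together with matching all moments forces $\mathcal D$ to be a specific Gaussian — and a Gaussian does not satisfy $\mathbb E[(u\cdot x)^{2d}] = \|u\|^{2d}$ because the Gaussian $2d$-th moment is $(2d-1)!! \,\|u\|^{2d} \ne \|u\|^{2d}$ for $d\ge 2$. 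That last numerical mismatch (using $d\ge 2$, so $(2d-1)!!\ge 3$) is the contradiction, and it is exactly where the hypothesis $d\ge 2$ enters.
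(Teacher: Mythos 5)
There is a genuine gap at the step you rely on for the final contradiction. You correctly reduce equality of the two products to the identity $\mathbb{E}_{x\sim\mathcal D}[(u\cdot x)^d(v\cdot x)^d]=(u\cdot v)^d$ for all $u,v\in\mathbb{R}^n$, but you then specialize to $u=v$ and assert that $\mathbb{E}[(u\cdot x)^{2d}]=\|u\|^{2d}$ for all $u$ forces $\mathcal D$ to be a specific Gaussian, deriving the contradiction from $(2d-1)!!\neq 1$. That forcing claim is unsupported and false: the condition constrains only the $2d$-th moment tensor (it says $M_{\mathcal D,2d}$ is the suitably normalized symmetrization of $I^{\otimes d}$), and it is satisfied by non-Gaussian distributions --- for instance the uniform distribution on a sphere of appropriately chosen radius, since any rotationally invariant $\mathcal D$ gives $\mathbb{E}[(u\cdot x)^{2d}]=c\,\|u\|^{2d}$ and the radius can be scaled to make $c=1$. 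Consequently the diagonal case $u=v$ can never produce a contradiction: knowing the quadratic form only on the (non-linear, though spanning) set of rank-one tensors does not determine the bilinear form, and the discrepancy between $\langle\cdot,\cdot\rangle_{\mathcal D}$ and $\langle\cdot,\cdot\rangle_F$ is invisible there.

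The repair is to use genuinely off-diagonal pairs, and your own intermediate computation already contains it if carried through instead of being replaced by the Gaussian step: matching $\langle x_1^{d},\,x_1^{d-2}x_2^{2}\rangle$ under both products forces $\mathbb{E}[x_1^{2d-2}x_2^{2}]=0$ (the Frobenius pairing of $e_1^{\otimes d}$ with the symmetric tensor of $x_1^{d-2}x_2^{2}$ vanishes), while matching $\|x_1^{d-1}x_2\|^2$ forces $\mathbb{E}[x_1^{2d-2}x_2^{2}]=\|x_1^{d-1}x_2\|_F^2=1/d>0$; the same moment cannot take both values, and this is exactly where $d\ge 2$ (and, implicitly in both your argument and the paper's, $n\ge 2$) enters. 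This completed version is essentially the paper's proof, which observes that the entries of $\mathrm{mat}(M_{\mathcal D,2d})$ depend only on the multiset of indices, so a moment matrix can never represent the Frobenius form. Your appeal to ``incompatibility with Cauchy--Schwarz/moment inequalities'' was left as a gesture rather than an argument, and the Gaussian-forcing step you substituted for it does not work.
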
 
\begin{proof} We can write~\eqref{eq:moment_distribution} as
\[
\langle g, h\rangle_{\mathcal D} = \sum_{\substack{(i_1,\ldots,i_d) \in [n]^d\\(j_1,\ldots,j_d) \in [n]^d}} \mathbb E_{x \sim \mathcal D}[x_{i_1}\ldots x_{i_d} x_{j_1} \ldots x_{j_d}] s_{i_1,\ldots,i_d} t_{j_1,\ldots,j_d},
\]
where $[n]:=\{1,\ldots,n\}$ and $S = (s_{i_1,\ldots,i_d})$, $T = (t_{i_1,\ldots,i_d})$ and $x = (x_i)$. From this it follows that $\textup{mat}({M}_{\mathcal{D},2d})$ is never a diagonal matrix {(for example, consider the entries where $(i_1,\ldots,i_d) = (j_1,\ldots,j_d)$)}. Thus, $\langle g, h \rangle_{\mathcal D} \ne {\rm vec}(S)^T {\rm vec}(T)$ for all $\mathcal D$.
\end{proof}

We next provide explicit formulas for the even moments of various data distributions.

\medskip

\noindent\textbf{Rotationally invariant $\mathcal{D}$.}  Assume that the random vector $x \in \mathbb{R}^n$ factors as $x = \rho y$ where $\rho \in \mathbb{R}_{\geqslant 0}$ is a random radius and $y \in S^{n-1}$ is a uniformly random direction vector on the unit sphere.  Equivalently, $\mathcal{D} = \mathcal{D}_{\rho} \otimes \mathcal{U}(S^{n-1})$, where $\mathcal{D}_{\rho}$ is some probability distribution on $\mathbb{R}_{\geqslant 0}$ and $\mathcal{U}(S^{n-1})$ is the uniform probability distribution on $S^{n-1}$. Then,
        \begin{equation}
            {M}_{\mathcal{D},2d} = \mathbb{E}_{x \sim \mathcal{D}} [x^{\otimes 2d}] = \mathbb{E}_{\substack{\rho \sim \mathcal{D}_{\rho} \\ y \sim \mathcal{U}}} [(\rho y)^{\otimes 2d}] = \mathbb{E}_{\rho \sim \mathcal{D}_{\rho}} [\rho^{2d}] \,  \mathbb{E}_{y \sim \mathcal{U}}[y^{\otimes 2d}].
        \end{equation}
        This shows ${M}_{\mathcal{D}, 2d}$ is the same for all rotationally invariant $\mathcal{D}$, up to non-negative scale.  In particular, it suffices to consider $x \sim \mathcal{N}(0, I)$.  For this, we fix $z \in \mathbb{R}^n$ arbitrarily and compute
        \begin{equation}
        \small    \left\langle \mathbb{E}_{x \sim \mathcal{N}(0,I)}[x^{\otimes 2d}], z^{\otimes 2d} \right\rangle_F =  \mathbb{E}_{x \sim \mathcal{N}(0,I)}[(z^T x)^{2d}] = \mathbb{E}_{\lambda \sim \mathcal{N}(0, \|z\|_2^2)}[\lambda^{2d}] = \|z\|_2^{2d} \mathbb{E}_{\tilde{\lambda} \sim \mathcal{N}(0,1)}[\tilde{\lambda}^{2d}].
        \end{equation}
        On the other hand, the RHS equals $\langle 
        \textup{sym}(I^{\otimes d}), z^{\otimes 2d} \rangle$, 
        up to a scale independent of $z$ (note that $I^{\otimes d}$ is not a symmetric tensor in $(\RR^n)^{\otimes 2d}$; see the example below).  It follows  ${M}_{\mathcal{D}, 2d} \propto {M}_{\mathcal{N}(0,I), 2d} \propto 
        \textup{sym}(I^{\otimes d})$, and the scalings are 
        non-negative.

\medskip

\noindent{\bf $\mathcal{D}$ with i.i.d.\
        coordinates.} Here $x \in \RR^{{n}}$, 
        with i.i.d.\ coordinates according to some 
        probability distribution $\pi$ on $\mathbb{R}$.  
        Equivalently, $\mathcal{D} = \pi^{\otimes {n}}$.
        Now for each $s \geq 0$, denote by $\mu_{\pi, s}$ the $s$-th moment of $\pi$, \ie,  $\mu_{\pi, s} = \mathbb{E}_{y \sim \pi}  [y^{s}] \in \mathbb{R}$.  Also, given  $I \in [n]^{2d}$, we let $\#(I, s)$ be the number of elements of $[n]$ that occur exactly $s$ times in $I$.  Then,
        \begin{equation}
            \left({M}_{\mathcal{D},2d}\right)_{I} = \left( \mathbb{E}_{x \sim \mathcal{D}}[x^{\otimes 2d}] \right)_{I} = \Pi_{s=0}^{2d} \, \mu_{\pi,s}^{\#({I}, s)}.
        \end{equation}
        Moments of  distributions with independent but not necessarily identical coordinates are derived in \cite{zhang2023moment,alexandr2023moment}.

\noindent{\bf Centered, colored Gaussians $\mathcal{D}$.}  Here $x \sim \mathcal{N}(0, \Sigma)$ for some covariance matrix $\Sigma \in \mathbb{R}^{n \times n}$.
Reasoning as in the case of rotationally invariant $\mathcal{D}$ shows $M_{\mathcal{N}(0,\Sigma),2d} \propto \textup{sym}(\Sigma^{\otimes d})$. Indeed, let $\Sigma = AA^T$ be a Cholesky factorization with $A \in \mathbb{R}^{n \times n}$, so that $x = Ay$ where $y \sim \mathcal{N}(0,I)$.  Fix $z \in \mathbb{R}^n$ arbitrarily and compute
\begin{equation}
    \left\langle \mathbb{E}_{x \sim \mathcal{N}(0,\Sigma)}[x^{\otimes 2d}], z^{\otimes 2d} \right\rangle_F =  \mathbb{E}_{y \sim \mathcal{N}(0,I)}[(z^T Ay)^{2d}] = \mathbb{E}_{\lambda \sim \mathcal{N}(0, \|A^T z\|_2^2)}[\lambda^{2d}].
\end{equation}
The RHS is proportional to $\| A^T z \|_2^{2d} = \langle AA^T, zz^T \rangle_F^{d} = \langle \textup{{sym}}(\Sigma^{\otimes d}), z^{\otimes 2d} \rangle_F$, as desired. 
Formulas for the moment tensors of non-centered Gaussians may be found in \cite{pereira2022tensor}.

\medskip
        
\noindent{\bf Mixtures $\mathcal{D}$}.  Let $\mathcal{D} = \sum_{i=1}^{t} \gamma_i \mathcal{D}_i$ be a convex combination of  distributions $\mathcal{D}_i$. Then,
\begin{equation}
    {M}_{\mathcal{D},2d} = \mathbb{E}_{x \sim \mathcal{D}}[x^{\otimes 2d}] = \sum_{i=1}^t \gamma_i \mathbb{E}_{x \sim \mathcal{D}_i}[x^{\otimes 2d}] = \sum_{i=1}^t \gamma_i {M}_{\mathcal{D}_i, 2d}.
\end{equation}
Many properties of the moments of various mixture models are  developed in  \cite{pereira2022tensor,zhang2023moment,alexandr2023moment}.

\begin{example} Assume $n=d=2$. A distribution-induced inner product on  the space {of}  $2 \times 2$ matrices is described by a $4 \times 4$ matrix of $4$-th moments:
\[
{\rm mat}(M_{\mathcal D, 4}) =
        \begin{pmatrix}
          \colorbox{yellow!30}{$\mathbb E[x_1^4]$} & \colorbox{red!30}{$\mathbb E[x_1^3x_2]$}& \colorbox{red!30}{$\mathbb E[x_1^3x_2]$}&\colorbox{green!30}{$\mathbb E[x_1^2x_2^2]$}  \\
         \colorbox{red!30}{$\mathbb E[x_1^3x_2]$} &\colorbox{green!30}{$\mathbb E[x_1^2x_2^2]$} & \colorbox{green!30}{$\mathbb E[x_1^2x_2^2]$}& \colorbox{orange!30}{$\mathbb E[x_1x_2^3]$}\\
         \colorbox{red!30}{$\mathbb E[x_1^3x_2]$} &\colorbox{green!30}{$\mathbb E[x_1^2x_2^2]$} &\colorbox{green!30}{$\mathbb E[x_1^2x_2^2]$} & \colorbox{orange!30}{$\mathbb E[x_1x_2^3]$}\\
          \colorbox{green!30}{$\mathbb E[x_1^2x_2^2]$}&\colorbox{orange!30}{$\mathbb E[x_1x_2^3]$} & \colorbox{orange!30}{$\mathbb E[x_1x_2^3]$}& \colorbox{pink!30}{$\mathbb E[x_2^4]$}
        \end{pmatrix},
\]
where we use colors to highlight identical entries. If $\mathcal D$ is a standard Gaussian distribution, then this specializes to
\begin{align*}
{\rm mat}(M_{\mathcal N(0,I), 4})=\begin{pmatrix}
        3 & 0 & 0 &1\\
        0 & 1 & 1 & 0\\
        0 & 1 & 1 &0\\
        1 & 0 & 0 & 3
    \end{pmatrix}.
\end{align*}
We observe that this matrix is proportional to the matricization of ${\rm sym}(I^{\otimes 2})$, where each entry with index $(i,j,k,l)$ is given by $\delta_{ij}\delta_{kl} + \delta_{ik}\delta_{jl} + \delta_{il} \delta_{jk}$.
In the next section, we also give a coordinate-free expression for distribution-induced inner products when $n=2$ and $\mathcal D$ has i.i.d. and centered coordinates (\Cref{prop:iid-dist}).
\end{example}

\subsection{Critical points of the distance function}

Let us now take a step back and describe some general aspects of the distance function of points to smooth manifolds that will be used to study the landscape of teacher-student problems for shallow polynomial networks.  Due to the generality of the setup, results in this subsection may also be of independent interest.

Let $X \subseteq \RR^n$ be a manifold or variety of dimension $d$.  
For example, we may think of $X$ as the function space of feasible student models. Let $\PD_n$ denote real $n \times n$  positive-definite matrices.
For each $t \in \RR^n$ and $\Sigma \in \PD_n$, consider the optimization problem
\begin{equation} \label{eq:most-general-problem}
    \min_{s \in X} L_{t, \Sigma}(s), \qquad  L_{t, \Sigma}(s) := \| s - t \|_{\Sigma}^2,
\end{equation}
where $\|v\|_\Sigma^2 := v^\top \Sigma v$.
Problem \eqref{eq:most-general-problem} seeks the nearest point on $X$ to $t$, when distances are measured according to $\Sigma$.  As before, we call $s$ and $t$ the student and teacher, respectively. 
\par
For fixed $t, \Sigma$, the first-order critical points of \eqref{eq:most-general-problem} are 
\begin{equation}
\Crit (L_{t, \Sigma}|_{X}) := \{ s \in {X_{\rm reg}} : d L_{t,\Sigma}(T_{s}X)=0 \},
\end{equation}
{where $X_{\rm reg}$ denotes the set of smooth points on $X$, and $dL_{t,\Sigma}(T_s X) = 0$ means that the differential of $L_{t,\Sigma}$ at $s$ vanishes in all directions tangent to $X$.} Since almost all functions  are Morse functions, $\Crit (L_{t,\Sigma}|_{X})$ is typically a finite set. Using local parameterizations, we can also associate a Hessian signature to each critical point $s \in \Crit (L_{t,\Sigma}|_{X})$.  More precisely, fix a chart $p : U \rightarrow V$, where $U \subseteq \RR^d$ and $V \subseteq X$ are open neighborhoods respectively of $0$ and $s$, such that $p(0) = s$; then the signature and rank of $H(L\circ p)(0)$ where $H$ denotes the (usual) Hessian matrix does not depend on the choice of chart. We sometimes use $H_{L,X}$ to denote any Hessian matrix associated with a chart, focusing solely on its rank or signature.
\par
We consider the size of $\Crit (L_{t,\Sigma}|_{X})$ together with the Hessian signature of all of its elements as the ``qualitative information'' characterizing the critical points for Problem~\eqref{eq:most-general-problem}. As $t$, $\Sigma$ vary this qualitative information will change. To describe how $t$ and $\Sigma$ affect this change, we introduce the sets 
\[
\begin{aligned}
&Z_X:= \{(s,t,\Sigma) \in X \times \RR^n \times \PD_n \colon s \in \Crit (L_{t, \Sigma}|_{X}) \text{ s.t. } \det H_{L_{t,\Sigma}}(s) = 0\},\\[.2cm]
&F_X :=
\{(t,\Sigma) \in \RR^n \times \PD_n \colon \exists s \in X \text{ s.t. } (s,t,\Sigma) \in Z_X \},\\[.2cm]
&F_{X,\Sigma} := \{t \in \RR^n \colon (t,\Sigma) \in F_X\}.
\end{aligned}
\]
The set $Z_X$ consists of triples $(s,t,\Sigma)$ of student, teacher, and inner product, such that $s$ is a degenerate critical point for $L_{t, \Sigma}|_{X}$. The other sets are derived from $Z_X$. Classically, $F_{X,\Sigma}$ is known as the \emph{focal locus} of $X$ (typically with {$\Sigma$ the identity}). When $X$ is an algebraic variety, its Zariski closure is also called the \emph{ED discriminant}~\cite{draisma2013}.  A point $t \in \RR^n$ is called a \emph{focal point} of $(X,s)$ of multiplicity $m\ge 1$ for the distance $\Sigma$ if $(s,t,\Sigma) \in Z_{X}$ and the Hessian $H_{L_{t,\Sigma}}(s)$ has nullity $m$. The following result is adapted from~\cite[Chapter 6]{milnor2016morse}.

\begin{proposition}\label{prop:hessian-differential} Consider a fixed $\Sigma \in \PD_n$ and let
$N_{X,\Sigma} := \{(s,v) \in X \times \RR^n \colon \langle v, w\rangle_\Sigma = 0\,\, \forall w \in T_s X\}$ be the total space of the normal bundle to $X$. A point $t \in \RR^n$ is a focal point of $(X,s)$ of multiplicity $m$ if and only if the differential of the ``endpoint map'' $e_{X,\Sigma}: N_{X,\Sigma} \rightarrow \RR^n, (s,v) \mapsto s+v$ has nullity $m$ at $(s,t-s)$. In particular, $F_{X,\Sigma} = {\rm Br}(e_{X,\Sigma})$, where {\rm Br} denotes the branch locus (\Cref{subsec:functional-space}).
\end{proposition}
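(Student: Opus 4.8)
{\it Proof proposal.} The plan is to relate the Hessian of $L_{t,\Sigma}$ along $X$ at a critical point $s$ to the differential of the endpoint map $e_{X,\Sigma}$, and then read off the statement about focal points and the branch locus. First I would set up coordinates: fix a local chart $p: U \to V \subseteq X$ with $p(0) = s$ as in the paragraph preceding the statement, and also choose a local trivialization of the normal bundle $N_{X,\Sigma}$ near $(s, t-s)$, say by picking a smooth frame $\nu_1(q), \ldots, \nu_{n-d}(q)$ of $\Sigma$-normal vectors along $X$. This gives a chart for $N_{X,\Sigma}$ in which the endpoint map reads $(q, c) \mapsto p(q) + \sum_k c_k \nu_k(q)$, and the point $(s,t-s)$ corresponds to $(0, c^\ast)$ where $\sum_k c^\ast_k \nu_k(0) = t-s$.

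Next I would compute both objects. For the Hessian: writing $L(q) := \|p(q) - t\|_\Sigma^2$, differentiating once gives $\partial_i L(q) = 2 \langle p(q) - t, \partial_i p(q)\rangle_\Sigma$, which vanishes at $q=0$ precisely because $s \in \Crit(L_{t,\Sigma}|_X)$, i.e.\ $t - s$ is $\Sigma$-normal to $T_s X$. Differentiating again and evaluating at $q = 0$,
\[
\tfrac{1}{2} H(L\circ p)(0)_{ij} = \langle \partial_i p(0), \partial_j p(0)\rangle_\Sigma + \langle s - t, \partial_i \partial_j p(0)\rangle_\Sigma,
\]
where the first term is the $\Sigma$-induced first fundamental form (positive definite, hence invertible) and the second is the shape-operator contribution in the direction $s-t$. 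For the endpoint map: its Jacobian at $(0,c^\ast)$ has block form, with the $c$-derivatives giving the columns $\nu_k(0)$ (spanning the normal space, hence linearly independent), and the $q$-derivatives giving $\partial_i p(0) + \sum_k c^\ast_k \partial_i \nu_k(0)$. The key linear-algebra step is then to show that the corank of this $n \times n$ Jacobian equals the corank of the $d \times d$ matrix above. I would do this by pairing the $q$-columns of the Jacobian $\Sigma$-orthogonally against $T_s X$ and against the normal space: the normal components of $\partial_i p(0) + \sum_k c^\ast_k \partial_i \nu_k(0)$ are absorbed by the (full-rank) $\nu_k(0)$ columns via column operations, reducing the kernel computation to the tangential components, and $\langle \partial_i p(0) + \sum_k c^\ast_k \partial_i \nu_k(0), \partial_j p(0)\rangle_\Sigma$ equals exactly $\tfrac12 H(L\circ p)(0)_{ij}$ once one uses $\langle \partial_i \nu_k(0), \partial_j p(0)\rangle_\Sigma = -\langle \nu_k(0), \partial_i\partial_j p(0)\rangle_\Sigma$ (from differentiating $\langle \nu_k(q), \partial_j p(q)\rangle_\Sigma \equiv 0$) and $\sum_k c^\ast_k \nu_k(0) = t-s$. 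Hence the differential of $e_{X,\Sigma}$ has nullity $m$ at $(s,t-s)$ iff $H_{L_{t,\Sigma}}(s)$ has nullity $m$, which is the first assertion. The final claim $F_{X,\Sigma} = \operatorname{Br}(e_{X,\Sigma})$ is then immediate: by definition $t \in F_{X,\Sigma}$ iff there is some $s$ with $(s,t,\Sigma) \in Z_X$, i.e.\ $s$ is a critical point of $L_{t,\Sigma}|_X$ with degenerate Hessian; but $s$ being critical is equivalent to $(s, t-s) \in N_{X,\Sigma}$ with $e_{X,\Sigma}(s,t-s) = t$, and degeneracy of the Hessian is — by what was just shown — equivalent to $(s,t-s)$ being a critical point of $e_{X,\Sigma}$. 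So the $t$'s in $F_{X,\Sigma}$ are exactly the images under $e_{X,\Sigma}$ of its critical locus, which is $\operatorname{Br}(e_{X,\Sigma})$.

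The main obstacle I anticipate is the bookkeeping in the column-operation step that identifies the two coranks: one must be careful that the manipulation only uses the $\nu_k(0)$ columns (which are genuinely extra, since $N_{X,\Sigma}$ has total dimension $n$) and does not accidentally alter the tangential block, and one must correctly handle the fact that $\Sigma$ is a general inner product rather than the identity — this is exactly why differentiating the normality relations $\langle \nu_k(q), \partial_j p(q)\rangle_\Sigma \equiv 0$ (rather than the Euclidean ones) is essential, and why the shape-operator term appears with the correct $\Sigma$-pairing. Everything else is a direct transcription of the classical Morse-theoretic computation in \cite[Chapter 6]{milnor2016morse}, carried out with $\|\cdot\|_\Sigma$ in place of the Euclidean norm; since $\Sigma \in \PD_n$ is a fixed positive-definite form, all the nondegeneracy facts used (invertibility of the first fundamental form, $\nu_k(0)$ spanning the normal space) go through verbatim.
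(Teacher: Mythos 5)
Your proposal is correct and follows essentially the same route as the paper's proof: parameterize $N_{X,\Sigma}$ by a chart of $X$ together with a normal frame, compute the Jacobian of the endpoint map, pair its columns against tangent and normal directions, and use the differentiated normality relation $\langle \nu_k, \partial_j p\rangle_\Sigma \equiv 0$ to identify the resulting $d\times d$ block with (half) the Hessian of the squared distance. The only cosmetic differences are that the paper first reduces to $\Sigma = \mathrm{Id}_n$ without loss of generality while you carry the $\Sigma$-pairing throughout, and that you spell out the ``branch locus'' consequence explicitly; both are fine.
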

\begin{proof} Without loss of generality we assume $\Sigma = {\rm Id}_n$. Let $x(u) \in X$ be a local parameterization of $X$ with $u \in \RR^d$ and consider $n-d$ vector fields $w_1(u),\ldots,w_{n-d}(u)$ in $\RR^n$ that form an orthonormal frame of the normal bundle to $X$. We parameterize $N_{X,\Sigma}$ as $(s,v) = \left(x(u), \sum_{m=1}^{n-d} \gamma_m w_m(u)\right)$ with coordinates $(u,\gamma) \in \RR^d \times \RR^{n-d}$. If $t = s+v$, we have that
\begin{equation}\label{eq:partial-derivatives}
\left\{
\begin{aligned}
&\frac{\partial t}{\partial u_i} = \frac{\partial x}{\partial u_i} + \sum_{m=1}^{n-d} \gamma_{{m}} \frac{\partial w_m}{\partial u_{{i}}}, \qquad &&i=1,\ldots d,\\[.2cm]
& \frac{\partial t}{\partial \gamma_l} = w_l, &&l=1,\ldots,n-d.
\end{aligned}
\right.
\end{equation}

Taking the inner product of these $n$ vectors with the linearly independent vectors $\frac{\partial x}{\partial u_1},\ldots,\frac{\partial x}{\partial u_{d}},$ $w_1,\ldots,w_{n-d}$ we obtain an $(n\times n)$ matrix as follows:
\begin{equation}
\setlength\arraycolsep{12pt}
\begin{pmatrix}
\frac{\partial x}{\partial u_i} \cdot \frac{\partial x}{\partial u_j} + \sum_{m=1}^{n-d} \gamma_m \frac{\partial w_m}{\partial u_i} \cdot \frac{\partial x}{\partial u_j} & \sum_{m=1}^{n-d} \gamma_m \frac{\partial w_m}{\partial u_i} \cdot w_l\\[.4cm]
\bf{0} & {\rm Id}_{n-d}
\end{pmatrix}.
\end{equation}
The span of the vectors in~\eqref{eq:partial-derivatives} has dimension equal to the rank of this matrix, so it is equal to the rank of the upper-left block. We now observe that
\[
0 = \frac{\partial}{\partial u_i} \left(w_m \cdot \frac{\partial x}{\partial u_j}\right)= \frac{\partial w_m}{\partial u_i} \cdot \frac{\partial x}{\partial u_j} + w_m \cdot \frac{\partial^2 x}{\partial u_i \partial u_j},
\]
so the rank of the upper-left block is the rank of
\begin{equation}\label{eq:fundamental forms}
\frac{\partial x}{\partial u_i} \cdot \frac{\partial x}{\partial u_j} - \sum_{m=1}^{n-d} \gamma_m \, w_m \cdot \frac{\partial^2 x}{\partial u_i \partial u_j}.
\end{equation}
Finally, if $L(u)= \|x(u) - t\|^2$, we have that
\begin{equation}\label{eq:hessian}
\frac{\partial^2 L}{\partial u_i \partial u_j} = 2 \frac{\partial}{\partial u_i} \left(\frac{\partial x}{\partial u_j} \cdot (x - t)\right) = 2 \left(\frac{\partial x}{\partial u_i}\cdot \frac{\partial x}{\partial u_{{j}}} - \frac{\partial^2 x}{\partial u_i \partial u_j} \cdot (t - x) \right).
\end{equation}
Thus, if $t = x(u) + \sum_{m=1}^{n-d} \gamma_m w_m(u)$, then~\eqref{eq:fundamental forms} and~\eqref{eq:hessian} are the same up to the factor of $2$, and the nullity of the differential of the endpoint map $e_{X,\Sigma}$ at $(s,t-s)$ is the same as that of the Hessian $H_{L_{t,\Sigma}}(s)$.
\end{proof}

\begin{remark}\label{rmk:principal-curvatures} As shown in the proof of Proposition~\ref{prop:hessian-differential}, if $t = s + \gamma v$ where $v$ is a unit vector orthogonal to $X$, then the Hessian of the distance function $H_{L_{t,\Sigma}}(s)$ can be written as $G - \gamma (v \cdot E)$ with $G_{ij} := \frac{\partial x}{\partial u_i}\cdot \frac{\partial x}{\partial u_{{j}}}$ and $(v \cdot E)_{ij} := v \cdot \frac{\partial^2 x}{\partial u_i \partial u_j}$. The bilinear forms $G$ and $(v \cdot E)$ are called respectively the \emph{first fundamental form} and the \emph{second fundamental form in the direction of $v$}. If local coordinates are chosen so that $G = {\rm I}_d$, then the eigenvalues of $(v \cdot E)$ are called the \emph{principal curvatures} of $X$ at $s$ in the normal direction of $v$. In this setting, it is easy to see that $t$ is a focal point if and only if $\gamma = 1/k$ where $k$ is a principal curvature.
\end{remark}

\begin{theorem} \label{thm:disc-1} Let $X \subset \RR^n$ be a smooth manifold, and assume that $X$ is either compact or is a cone in $\RR^n$ without the vertex. If $(t, \Sigma) \in (\RR^n \times \PD_n) \setminus F_X$, then there exists an open neighborhood $\mathcal{U} \subseteq \RR^n \times \PD_n$ of $(t, \Sigma)$ such that:
\begin{enumerate}
    \item All critical points for $(t', \Sigma') \in \mathcal U$ can be locally parameterized by smooth functions. That is, there exist smooth functions $s_1, \ldots, s_k : \mathcal{U} \rightarrow X$ such that $\Crit (L_{t', \Sigma'}|_{X}) = \left\{s_1(t', \Sigma'), \ldots, s_k(t', \Sigma')\right\}$ for all $(t', \Sigma') \in \mathcal{U}$.
    \item For each critical point $s_i$ considered above, the Hessian $H_{L_{t',\Sigma}}(s_i)$ is non-degenerate and its signature is constant over $(t',\Sigma') \in \mathcal U$.
\end{enumerate}
\end{theorem}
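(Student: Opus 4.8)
The plan is to realize the entire family of problems~\eqref{eq:most-general-problem} as the fibers of a single smooth map between manifolds of equal dimension, so that \Cref{thm:disc-1} follows from the inverse function theorem together with a properness argument. Concretely, I would first form the total space $\mathcal N := \{(s,v,\Sigma)\in X\times\RR^n\times\PD_n : \langle v,w\rangle_\Sigma=0 \text{ for all } w\in T_sX\}$ of the vector bundle over $X\times\PD_n$ whose fiber over $(s,\Sigma)$ is the $\Sigma$-orthogonal complement of $T_sX$; this is a smooth manifold with $\dim\mathcal N = \dim X + \dim\PD_n + (n-\dim X) = \dim(\RR^n\times\PD_n)$. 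On it I would define the \emph{universal endpoint map} $E(s,v,\Sigma) := (s+v,\Sigma)$. Two observations make everything work: (i) $s\in\Crit(L_{t',\Sigma'}|_X)$ if and only if $(s,t'-s,\Sigma')\in E^{-1}(t',\Sigma')$, so the fibers of $E$ track exactly the critical sets; and (ii) since the $\Sigma$-component of $E$ is a submersion onto $\PD_n$, $dE$ is an isomorphism at $(s,v,\Sigma)$ if and only if its restriction to $\ker d(\mathrm{proj}_{\PD_n})$ is, and that restriction is precisely the differential of the endpoint map $e_{X,\Sigma}$ of \Cref{prop:hessian-differential}. Hence, by that proposition, $dE$ is an isomorphism at every point of $E^{-1}(t,\Sigma)$ if and only if $(t,\Sigma)\notin F_X$.

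Granting $(t,\Sigma)\notin F_X$, the inverse function theorem then shows that each $\zeta\in E^{-1}(t,\Sigma)$ is isolated, so $E^{-1}(t,\Sigma)$ is discrete. The next step is to prove it is \emph{finite} and that $E$ is \emph{proper over a neighborhood} of $(t,\Sigma)$. For compact $X$ this is quick: $\Crit(L_{t,\Sigma}|_X)$ is closed in the compact $X$, hence compact and, being discrete, finite; moreover the normal vectors $t'-s$ stay bounded as $(t',\Sigma')$ ranges over a compact neighborhood, so the relevant part of $\mathcal N$ is compact. For $X$ a cone minus its vertex I would use that the radial direction $s$ lies in $T_sX$, so at a critical point $t-s\perp_\Sigma s$, which forces $\|s\|_\Sigma^2=\langle s,t\rangle_\Sigma\le\|s\|_\Sigma\|t\|_\Sigma$ and hence $\|s\|_\Sigma\le\|t\|_\Sigma$; combining this a priori bound with the hypothesis that $X$ is closed away from the vertex, a limiting argument excluding accumulation of critical points at the vertex yields compactness of the critical set and local properness of $E$. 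A convenient way to organize the cone case is to pass to the compact link $Y := X\cap\{\,\|\cdot\|_\Sigma=1\,\}$ and observe that critical points of $L_{t,\Sigma}|_X$ correspond to critical points of the linear functional $\langle\cdot,t\rangle_\Sigma$ on $Y$ with positive value, $\rho=\langle y,t\rangle_\Sigma$.

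Once finiteness and local properness are established, I would finish as follows. Writing $E^{-1}(t,\Sigma)=\{\zeta_1,\dots,\zeta_k\}$, choose pairwise-disjoint open $W_i\ni\zeta_i$ on which $dE$ is everywhere an isomorphism, so $E|_{W_i}$ is a diffeomorphism onto open $\mathcal U_i\ni(t,\Sigma)$; then use properness to pass to a connected (ball) neighborhood $\mathcal U\subseteq\bigcap_i\mathcal U_i$ with $E^{-1}(\mathcal U)\subseteq\bigsqcup_i W_i$. For $(t',\Sigma')\in\mathcal U$ the fiber meets each $W_i$ exactly once and lies entirely in $\bigsqcup_i W_i$, so $s_i(t',\Sigma'):=\pi_X\bigl((E|_{W_i})^{-1}(t',\Sigma')\bigr)$ are smooth and exhaust $\Crit(L_{t',\Sigma'}|_X)$, which is part (1). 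For part (2), fix a single chart $p$ around $s_i(t,\Sigma)$ and shrink $\mathcal U$ so that it serves as a chart for all $s_i(t',\Sigma')$, $(t',\Sigma')\in\mathcal U$; the Hessian $H(L_{t',\Sigma'}\circ p)(0)$ then varies smoothly in $(t',\Sigma')$, and by \Cref{prop:hessian-differential} its nullity equals that of $dE$ at the corresponding point of $W_i$, which is $0$ throughout. So the Hessian is nondegenerate everywhere on $\mathcal U$, and since $\mathcal U$ is connected the number of negative eigenvalues (which cannot change without an eigenvalue crossing zero) is constant, giving constant signature.

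I expect the \textbf{main obstacle} to be precisely the finiteness-and-local-properness step, that is, showing $E^{-1}(t,\Sigma)$ is compact and $E$ is proper near $(t,\Sigma)$; this is where the ``compact or cone'' hypothesis is indispensable. For compact $X$ it is routine, but in the cone case one must leverage the bound $\|s\|_\Sigma\le\|t\|_\Sigma$ together with a careful argument — using that $X$ is closed in $\RR^n\setminus\{0\}$ — that the critical points of the distance function cannot accumulate at the vertex, i.e.\ that on the link $Y$ the critical points of $\langle\cdot,t\rangle_\Sigma$ in $\{\langle\cdot,t\rangle_\Sigma>0\}$ do not accumulate on the zero set of that functional. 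Everything else is a bookkeeping application of the inverse function theorem and of \Cref{prop:hessian-differential}, which supplies the exact identification between degeneracy of the Hessian and failure of $dE$ to be an isomorphism.
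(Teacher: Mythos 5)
Your proposal is correct and follows essentially the same route as the paper: both define the extended (``universal'') endpoint map $(s,v,\Sigma)\mapsto(s+v,\Sigma)$, use \Cref{prop:hessian-differential} to translate $(t,\Sigma)\notin F_X$ into invertibility of its differential along the fiber, apply the inverse function theorem plus the bound $\|s\|_\Sigma\le\|t\|_\Sigma$ (from $\langle t-s,s\rangle_\Sigma=0$ in the cone case) to get finitely many critical points, and conclude by a stack-of-records argument and continuity of Hessian eigenvalues. Your extra care about local properness and non-accumulation of critical points at the cone vertex is a refinement of, not a departure from, the paper's argument, which simply asserts that critical students are confined to a compact set.
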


\begin{proof}
\emph{i)} Consider an arbitrary critical point $s \in \Crit (L_{t, \Sigma}|_{X})$. According to Proposition~\ref{prop:hessian-differential}, the fact that $(t, \Sigma) \not \in F_X$ implies that the endpoint map $e_{X,\Sigma}:(s,v) \mapsto s+v$ is a local diffeomorphism at $(s,t-s)$. Moreover, the same is true for the ``extended'' endpoint map $e_{X}: (s,v,\Sigma) \mapsto (s+v,\Sigma)$, since the last component is the identity. It now follows from the Inverse Function Theorem that $e_X$ can be locally inverted in a neighborhood of $(s,t-s,\Sigma)$, so $s$ can be smoothly parameterized by $t$ and $\Sigma$. Finally, we observe that if $X$ is compact or a cone, then there exists a neighborhood $\mathcal U$ of $(t,\Sigma)$ such that the number of critical points $\Crit (L_{t', \Sigma'}|_{X})$ is locally constant for $(t',\Sigma') \in \mathcal U$, and {all} critical points for $(t',\Sigma') \in \mathcal U$ can be smoothly parameterized by $(t', \Sigma')$. To see this, we note that under our assumptions we can find a neighborhood $\mathcal V$ of $(t,\Sigma)$ such that $e_X^{-1}(\mathcal V) \subset\mathcal K$ where $\mathcal K$ is compact (in other words, critical students are bounded): this is clear when $X$ is compact, while if $X$ is a cone it is true since $s \in \Crit (L_{t, \Sigma}|_{X})$ implies $\langle t-s,s\rangle_\Sigma = 0$ so $\|s\|_\Sigma ^2\le \|s\|_\Sigma ^2 + \|t-s\|^2_\Sigma =  \|t\|^2_\Sigma$. Now since $(t,\Sigma)$ is a regular value of the extended endpoint map, $e_{X}^{-1}(t,\Sigma) = \{(s_1,v_1,\Sigma),\ldots,(s_k,v_k,\Sigma)\}$ is a finite set (it is discrete and contained in $\mathcal K$). If we consider pairwise disjoint neighborhoods $U_1,\ldots,U_k \subset N_{X}$ of $(s_i, v_i, \Sigma)$ that are mapped diffeomorphically to $V_1,\ldots,V_k \subset \RR^n \times \PD_n$, we can then let $\mathcal U = V_1 \cap \ldots \cap V_k \setminus e_{X}(\mathcal K \setminus (U_1 \cup \ldots \cup U_k))$.
\par
\emph{ii)} Since by construction, $\mathcal U \subset (\RR^n \times \PD_n) \setminus F_X$ we have $\det H_{L_{t',\Sigma}}(s_i) \ne 0$ for all critical students $s_i$, so the result follows the from {the} continuity of the eigenvalues of the Hessian.
\end{proof}

{This result provides a tool to study the qualitative behavior of the optimization landscape as the data varies.} Indeed,
when $X$ is (the smooth locus of) a nonlinear algebraic variety, the complex Zariski closure of the extended focal locus $F_X$ is a hypersurface~\cite[Theorem 1]{catanese2000focal}. In particular, there exists a \emph{single polynomial equation} $P(t,\Sigma) = 0$ in both the teacher and inner product that must be satisfied when there is a transition in the qualitative properties of the landscape (equivalently, when this equation is not satisfied, the qualitative landscape remains locally unchanged). Moreover, if the inner product $\Sigma = \Sigma_{\mathcal D}$ is induced by a distribution $\mathcal D$ as above, then we also obtain a polynomial $Q(t, {\bm \mu}):=P(t, \Sigma_{\mathcal D})$ in the teacher and moments $\bm \mu$ of $\mathcal D$. The equation $Q(t, {\bm \mu})=0$ then precisely defines all ``unstable data'' for the optimization landscape. We refer to $P$ and $Q$ as the \emph{teacher-metric discriminant} and the \emph{teacher-data discriminant}, respectively. {More broadly, both are instances of what we call a \emph{data discriminant}: loci in data spaces that reflect qualitative changes in the optimization of the training loss.}
We also note that \Cref{thm:disc-1} can in some cases be applied to nonsmooth function spaces stratum-by-stratum, as we do in Section 4 for determinantal varieties which are stratified by matrix rank.

The next result from~\cite{milnor2016morse} shows that the signature of the Hessian of the distance function has an intuitive geometric description in terms of the focal locus. We will use this fact to easily compute the signature of the Hessian of the loss function for quadratic networks {in \Cref{sec:quadratic}}.

\begin{theorem}\label{thm:focal_points}
For any $t \in \RR^n \setminus F_{X, \Sigma}$ and $s \in \Crit(L_{t,\Sigma}|_{X})$, the \emph{index} of $L_{t,\Sigma}|_{X}$ at $s$ (\ie, number of negative eigenvalues of the Hessian $H_{L_{t,\Sigma}}$) is equal to the number of focal points of $(X,s)$ which lie in the segment {from} $s$ {to} $t$, each counted with multiplicity.
\end{theorem}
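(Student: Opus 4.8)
The plan is to reduce to the Euclidean case $\Sigma = \mathrm{Id}_n$ and then read off the statement from the second-fundamental-form description of the Hessian that was established in the proof of Proposition~\ref{prop:hessian-differential} and recorded in Remark~\ref{rmk:principal-curvatures}. For the reduction, observe that the linear isomorphism $x \mapsto \Sigma^{1/2} x$ carries $X$ to a smooth manifold $\Sigma^{1/2}X$, carries $\langle\cdot,\cdot\rangle_\Sigma$ to the standard inner product, and therefore preserves critical points, Hessian signatures, focal points together with their multiplicities, and the segment from $s$ to $t$ (linear maps send segments to segments); compare the analogous reduction at the start of the proof of Proposition~\ref{prop:hessian-differential}. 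So we may assume $\Sigma = \mathrm{Id}_n$ and abbreviate $L_t := L_{t,\mathrm{Id}}$. Note that no global hypothesis on $X$ is needed here: the statement is pointwise at the smooth point $s$.

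Fix $s \in \Crit(L_t|_X)$. Criticality means $t - s \perp T_s X$, so write $t = s + \ell v$ with $v$ a unit normal vector at $s$ and $\ell = \|t - s\| > 0$ (if $\ell = 0$ there is nothing to prove). Choose a local chart $x(u)$, $u \in \RR^d$, with $x(0) = s$, and linearly adjust the coordinates so that the first fundamental form satisfies $G_{ij}(0) = \delta_{ij}$. By formula~\eqref{eq:hessian}, the Hessian of $L_t \circ x$ at $u = 0$ equals, up to the positive scalar $2$, the matrix $\mathrm{Id}_d - \ell\, S_v$, where $S_v$ is the symmetric matrix with entries $(S_v)_{ij} = v \cdot \frac{\partial^2 x}{\partial u_i \partial u_j}(0)$, i.e.\ the second fundamental form of $X$ at $s$ in the direction $v$. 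Let $k_1 \le \dots \le k_d$ denote its eigenvalues, the principal curvatures of $X$ at $s$ along $v$. Then the eigenvalues of $H_{L_{t,\Sigma}}(s)$ are the numbers $1 - \ell k_i$, and since $t \notin F_{X,\Sigma}$ this Hessian is nondegenerate, so no $k_i$ equals $1/\ell$. Hence the index of $L_t|_X$ at $s$ equals $\#\{i : 1 - \ell k_i < 0\} = \#\{i : k_i > 1/\ell\}$, counted with multiplicity.

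It remains to match this count with the focal points of $(X,s)$ on the segment. Every focal point of $(X,s)$ lies on a normal line to $X$ at $s$, and the segment $[s,t]$ is contained in the line $s + \RR v$ with $v$ normal; so a focal point of $(X,s)$ on this segment has the form $s + \gamma v$ with $\gamma \in [0,\ell]$. By Remark~\ref{rmk:principal-curvatures} — equivalently, by applying Proposition~\ref{prop:hessian-differential} with $s + \gamma v$ playing the role of the teacher — the point $s + \gamma v$ is a focal point precisely when $\det(\mathrm{Id}_d - \gamma S_v) = 0$, that is, $\gamma = 1/k_i$ for some positive principal curvature $k_i$, and its multiplicity is $\dim\ker(\mathrm{Id}_d - \gamma S_v)$, the dimension of the corresponding eigenspace of $S_v$. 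The value $\gamma = 0$ never occurs ($s$ is not a focal point, since its endpoint map is a local diffeomorphism at the zero section, as seen from~\eqref{eq:partial-derivatives}), and $\gamma = \ell$ is excluded because $t \notin F_{X,\Sigma}$; so the focal points on the segment are exactly those with $1/k_i \in (0,\ell)$, and summing their multiplicities gives $\#\{i : k_i > 1/\ell\}$ with multiplicity, which is the index computed above.

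This is largely bookkeeping on top of the machinery already in place; the substantive input is Proposition~\ref{prop:hessian-differential}. The one place that needs a little care is the matching of \emph{multiplicities}: that the algebraic multiplicity of a principal curvature $k$ coincides with the nullity of the endpoint map at the associated focal point $s + (1/k)v$, which is obtained by invoking Proposition~\ref{prop:hessian-differential} at the candidate focal point rather than at $t$. One should also check that the reduction to $\Sigma = \mathrm{Id}$ respects multiplicities and the geometry of the segment, and that the two endpoints $s$ and $t$ contribute nothing.
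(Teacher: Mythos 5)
Your proof is correct and follows essentially the same route as the paper's: normalize to $\Sigma=\mathrm{Id}$ and to a chart with identity first fundamental form, use the Hessian formula from Proposition~\ref{prop:hessian-differential} to write $H_{L_{t,\Sigma}}(s)$ as $2(\mathrm{Id}-\ell S_v)$, and match the index $\#\{i: k_i>1/\ell\}$ with the focal points $s+(1/k_i)v$ on the segment, counted with eigenvalue multiplicity. You simply spell out details the paper leaves implicit (the $\Sigma^{1/2}$ reduction, the exclusion of the endpoints, and the multiplicity matching via Proposition~\ref{prop:hessian-differential} at the candidate focal point), which is fine.
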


\begin{proof} In the proof of \Cref{prop:hessian-differential} we noted that $\frac{\partial^2 L}{\partial u_i \partial u_j} =  2 \left(\frac{\partial x}{\partial u_i}\cdot \frac{\partial x}{\partial u_{{j}}} - \frac{\partial^2 x}{\partial u_i \partial u_j} \cdot (t - s) \right)$, where $x(u)$ is a local parameterization of $X$. 
Without loss of generality we can assume that $\frac{\partial x}{\partial u_i}\cdot \frac{\partial x}{\partial u_{{j}}}$ is the identity. If $t-s = \gamma w$, with $\gamma \in \RR$ and $\|w\|=1$, then the index of the Hessian is the number of eigenvalues $\frac{\partial^2 x}{\partial u_i \partial u_j} \cdot w$ (or principal curvatures, cf. Remark~\ref{rmk:principal-curvatures}) which are greater than $1/\gamma$. On the other hand, $t$ is a focal point if and only if $1/\gamma$ is an eigenvalue, and its multiplicity is equal to the multiplicity of $1/\gamma$ as an eigenvalue.
\end{proof}

\subsection{Examples}
\label{subsec:examples}

We briefly present some computations of discriminants performed using Macaulay2.\footnote{Further details, including the code and outputs of these computations, can be found at~\url{https://github.com/ElaPolak/Optimization-Landscape-of-Teacher-Student-Problems}.}

\begin{figure}[t]
    \centering
    \begin{minipage}[c]{0.33\textwidth}
        \centering
        \includegraphics[width=\textwidth]{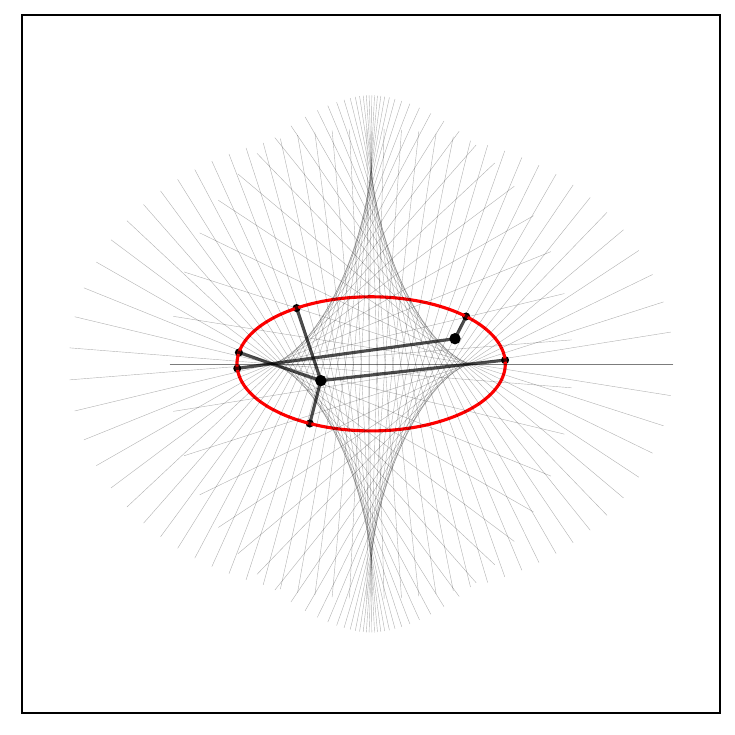} 
        \label{fig:fig1}
    \end{minipage}
    \hfill
    \begin{minipage}[c]{0.26\textwidth}
        \centering
        \includegraphics[width=\textwidth]{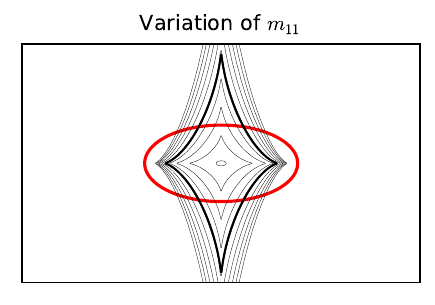}
        \label{fig:fig2a}
        \vspace{-1.5em} 
        
        \includegraphics[width=\textwidth]{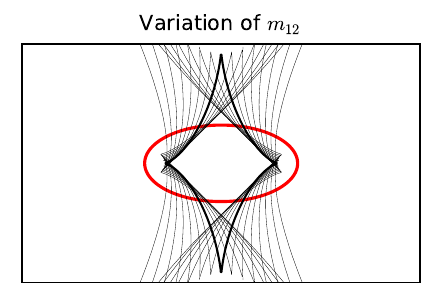} 
        \label{fig:fig2b}
    \end{minipage}
    \hfill
    \begin{minipage}[c]{0.35\textwidth}
        \centering
        \includegraphics[width=\textwidth]{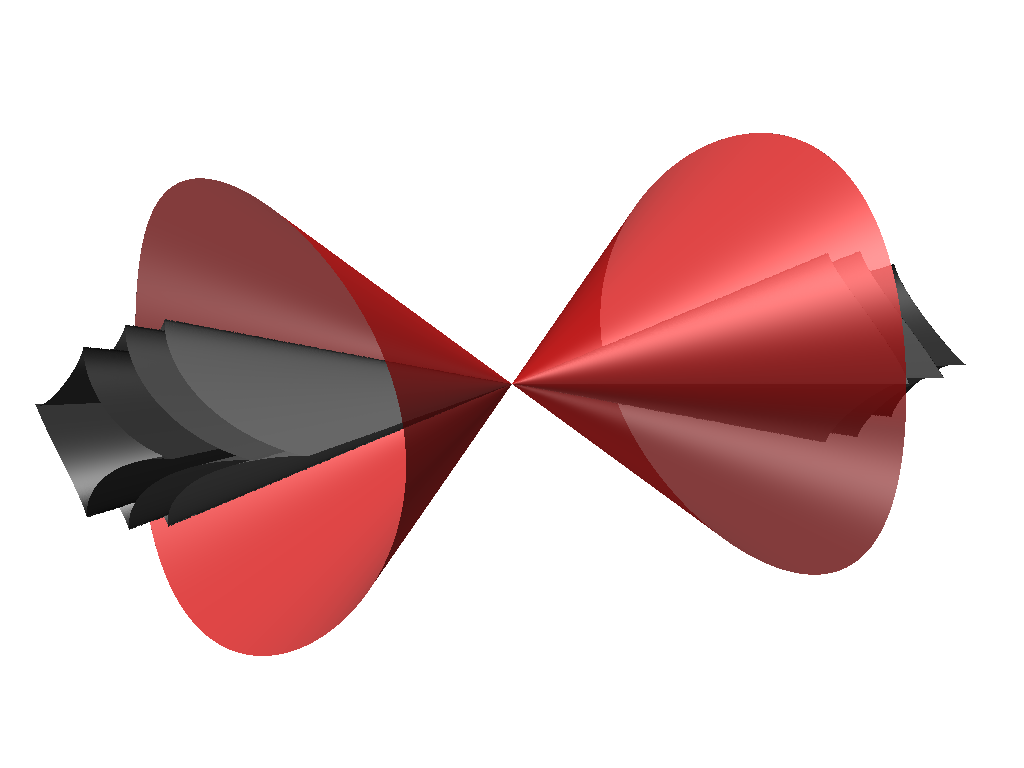}
        \label{fig:fig3}
    \end{minipage}
    \caption{Visualizations of discriminants. \emph{Left:} The focal locus of an ellipse with two 'teacher' points and the corresponding critical points for the associated teacher-student problem. \emph{Center:} Focal curves encoded by the teacher-metric discriminant $P(x,y,m_{11}, m_{12}, m_{22}$). The top figure illustrates curves for $m_{22}=1, m_{01} = 0$ and $m_{11} = t$ (varying $t$) , while the bottom figure  for $m_{11}=m_{22} = 1$ and $m_{10} = t$ (varying $t$); \emph{Right:} A 3D plot of the determinantal variety $X = \{(t_{00}, t_{01}, t_{11}) \colon t_{00}t_{11} - t_{01}^2 = 0$\} with three distinct focal surfaces corresponding to $m_{0000}=m_{1111}=1$, $m_{0001}=m_{0011}=m_{0111}=0$ and $m_{0101}=1,2,4$ (three surfaces).}
    \label{fig:discriminant-examples}
\end{figure}

\begin{example}
Let $X = \{(x,y) \colon x^2/4 + y^2/1 = 1\} \subset \mathbb R^2$ be an ellipse. The classical focal locus is a diamond-shaped curve that is the envelope of all normal lines, and bounds regions where the teacher-student problem has 4 or 2 (real) critical points (\Cref{fig:discriminant-examples}, left). Varying the inner product $\Sigma = \begin{bmatrix} m_{11} & m_{12} \\ m_{12} & m_{22} \end{bmatrix} \in {\rm PD}_2$, this curve will change and we can describe all curves using a single polynomial equation $P(x,y, m_{11},m_{12}, m_{22})=0$ which characterizes unstable teacher-metric pairs. The polynomial $P$ has degree 12 (6 and 6 in the two groups of variables) and 152 terms.
\end{example}

\begin{example}\label{ex:2x2-discriminant}
Let $X$ be the set of $2 \times 2$ symmetric matrices of rank at most $1$. We parameterize symmetric matrices as $T = \begin{bmatrix} t_{00} & t_{01} \\ t_{01} & t_{11} \end{bmatrix}$ and inner products on this space as:
$$
\Sigma = \begin{bmatrix}
m_{0000} & m_{0001} & m_{0001} & m_{0011} \\
m_{0001} & m_{0101} & m_{0101} & m_{0011} \\
m_{0001} & m_{0101} & m_{0101} & m_{0011} \\
m_{0011} & m_{0111} & m_{0111} & m_{1111}
\end{bmatrix}.
$$
In this setting, the teacher-metric discriminant is a homogeneous polynomial \[P(t_{00}, t_{01}, t_{11}, m_{0000}, m_{0001}, m_{0011}, m_{0101}, m_{0111}, m_{1111})\] of bidegree $(6, 12)$ with $5938$ terms. The Frobenius inner product corresponds to $m_{0000} = m_{1111} = 1$, $m_{0101} = \frac{1}{2}$, and $m_{0001} = m_{0011} = m_{0111} = 0$. Under these conditions, the discriminant specializes to
$$
(t_{00}^2 + 4 t_{01}^2 - 2 t_{00} t_{11} + t_{11}^2)^3,
$$
which is (a power of) the discriminant of the characteristic polynomial of $T$, as noted in~\cite{draisma2013}. For distribution-induced inner products, $m_{ijkl} = \mathbb{E}[x_i x_j x_k x_l]$, and in particular $m_{0011} = m_{0101}$. Imposing this condition on $P$ reduces it to a polynomial with $2060$ terms. If we further assume a symmetric i.i.d. distribution and define $\mu_4 = m_{0000} = \mathbb{E}[x^4]$ and $\mu_2^2 = m_{0011} = m_{0111} = \mathbb{E}[x^2]^2$, {the discriminant specializes to a polynomial beginning as
\begin{align*}
&729 \mu_2^{24} t_{00}^{3} t_{11}^{3} 
+ 1458 \mu_2^{22} \mu_4 t_{00}^{4} t_{11}^{2} 
+ 1458 \mu_2^{22} \mu_4 t_{00}^{2} t_{11}^{4} 
- 2187 \mu_2^{20} \mu_4^{2} t_{00}^{4} t_{01}^{2} + \cdots
\end{align*}
The full expression is given in the appendix.}
Similar to the previous example, slices of the teacher-metric data discriminant can be visualized as varying focal loci (\Cref{fig:discriminant-examples}, right).
\end{example}

\section{Networks with Quadratic Activations}
\label{sec:quadratic}

{\textbf{Section overview.} We specialize our analysis to networks with quadratic activations, where the function space corresponds to symmetric matrices. This setting allows for sharper results on critical points and loss landscapes. We describe how parameter and function space critical points relate, characterize teacher-student losses as low-rank matrix approximation problems under distribution-induced norms, and present variations of Eckart-Young theorem for Frobenius and Gaussian norms, proved using geometric insights from the previous section. Finally, we show that non-Gaussian norms can lead to exponentially many critical points, highlighting the landscape's strong dependence on the data distribution.}

\vspace{.8em}

For $\mathcal W = (\alpha, W) \in \RR^{r} \times \RR^{r \times n}$, we write  
\begin{equation}
    f_{\mathcal W}(x) = \sum_{i=1}^r \alpha_i \, (w_i\cdot x)^2 = 
    \left\langle \sum_{i=1}^r \alpha_i\, w_i^{\otimes 2}, x^{\otimes 2} \right\rangle_{\!\!F}
    = x^T  \tau_r(\mathcal W) x,
\end{equation}
where $\tau_r(\mathcal W) = W^\top diag(\alpha) W \in \Sym^2(\RR^n)$ is a symmetric matrix. The results from Section~\ref{sec:shallow-polynomial-networks} were largely focused on the landscape in the thick or filling regime, which for $d=2$ both correspond to $r \ge n$. Our goal is now to describe in detail the situation for $r < n$.

\subsection{Critical points of smooth loss functions}
\label{subsec:critical-quadratic}

We begin by specializing and refining some of the general results on the critical points of smooth loss functions from Section~\ref{sec:shallow-polynomial-networks} to the case of $d=2$. In the following, we write $\mathcal S(r;n)$ for the set of $n\times n$ symmetric matrices with rank exactly $r$. The set $\mathcal S(r;n)$ is a smooth manifold of dimension $\frac{1}{2}r(2n-r+1)$ and the embedded tangent space at a point $S \in \mathcal S(r;n)$ is $T_{S}\mathcal S(r;n) = \{SX + X^\top S \colon X \in \RR^{n \times n}\}$~\cite{helmke1995}. 

\begin{proposition}
\label{prop:gradient-function} If $\ell: {\rm Sym}^2(\RR^n) \rightarrow \RR$ is any smooth function, then $S \in \mathcal S(r;n)$ is a critical point for $\ell|_{\mathcal S(r;n)}$ if and only if $\nabla \ell(S) \cdot S = 0$.
\end{proposition}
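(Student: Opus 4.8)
The plan is to translate the first-order optimality condition on the manifold $\mathcal S(r;n)$ into the stated equation using the explicit description of the tangent space. Recall from the text that $T_S\mathcal S(r;n) = \{SX + X^\top S \colon X \in \RR^{n\times n}\}$. By definition $S$ is a critical point of $\ell|_{\mathcal S(r;n)}$ if and only if the differential $d\ell(S)$ annihilates every tangent vector, i.e.\ $\langle \nabla\ell(S),\, SX + X^\top S\rangle_F = 0$ for all $X \in \RR^{n\times n}$. Since $\nabla\ell(S)$ is symmetric and the Frobenius inner product is invariant under transposition, $\langle \nabla\ell(S),\, SX\rangle_F = \langle \nabla\ell(S),\, X^\top S\rangle_F$, so the condition collapses to $\langle \nabla\ell(S),\, SX\rangle_F = 0$ for all $X$, which by the cyclic/adjoint property of the trace form is $\langle \nabla\ell(S)\, S^\top,\, X\rangle_F = \langle S\,\nabla\ell(S),\, X\rangle_F = 0$ for all $X$. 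As $X$ ranges over all of $\RR^{n\times n}$, this forces $S\,\nabla\ell(S) = 0$, i.e.\ $\nabla\ell(S)\cdot S = 0$ (using symmetry of both matrices, $S\nabla\ell(S) = 0 \iff \nabla\ell(S) S = 0$).

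First I would fix the convention that $\nabla\ell(S) \in \Sym^2(\RR^n)$ is the (symmetric) gradient with respect to the Frobenius inner product, so that $d\ell(S)[H] = \langle \nabla\ell(S), H\rangle_F$ for symmetric perturbations $H$; this is the natural identification since $\ell$ is defined on $\Sym^2(\RR^n)$ and tangent vectors to $\mathcal S(r;n)$ are symmetric. Then I would carry out the short computation above: expand the criticality condition over the parameterization $X \mapsto SX + X^\top S$ of the tangent space, use symmetry of $\nabla\ell(S)$ and of $S$ to merge the two terms, and read off $S\nabla\ell(S)=0$ from nondegeneracy of the Frobenius pairing on $\RR^{n\times n}$. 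For the converse, if $S\nabla\ell(S) = 0$ then every tangent vector is pairing-orthogonal to $\nabla\ell(S)$ by the same identities, so $S \in \Crit(\ell|_{\mathcal S(r;n)})$.

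The only genuinely delicate point is the passage from "$\langle S\nabla\ell(S), X\rangle_F = 0$ for all $X \in \RR^{n\times n}$" to "$S\nabla\ell(S) = 0$": here $X$ ranges over \emph{all} square matrices, not just symmetric ones, and $S\nabla\ell(S)$ need not be symmetric a priori, so the nondegeneracy is being used on the full matrix space $\RR^{n\times n}$ rather than on $\Sym^2(\RR^n)$ — which is exactly why the tangent space was written with an arbitrary $X$ in the first place, and this is fine. I would also note the harmless equivalence $S\nabla\ell(S) = 0 \iff \nabla\ell(S)S = 0$ for the two symmetric matrices $S$ and $\nabla\ell(S)$, which is what lets the conclusion be written as $\nabla\ell(S)\cdot S = 0$ in the statement. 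No compactness, regularity, or rank-stratification subtleties arise because $\mathcal S(r;n)$ is already a smooth manifold and we work pointwise at $S$.
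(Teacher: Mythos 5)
Your proposal is correct and follows essentially the same route as the paper: both use the tangent space description $T_S\mathcal S(r;n)=\{SX+X^\top S\}$ and the cyclic/transpose invariance of the Frobenius trace pairing to show that $d\ell(S)$ annihilates the tangent space exactly when $\nabla\ell(S)\,S=0$ (the paper phrases this as computing $T_S\mathcal S(r;n)^\perp=\{A\colon A\cdot S=0\}$, which is the same computation with $A=\nabla\ell(S)$). Your explicit attention to the symmetric-gradient convention and to the fact that $X$ ranges over all of $\RR^{n\times n}$ in the nondegeneracy step is a fine clarification but not a different argument.
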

\begin{proof} 
We show that $T_{S}\mathcal S({r;n})^\perp=\{A  \in S^2(\RR^{{n}}) \colon A \cdot S = 0\}$. Indeed, if $A \cdot S= 0$, then from the cycle property of the trace operator $\langle A, SX + {X^T}S \rangle_F = tr(A S X) + tr(A X^\top S)  = 0$ for any $X \in \RR^{d \times d}$. Conversely, if $\langle A, SX + X^\top S \rangle_F=0$ for all $X$, then $A s_i = 0$ for all {columns} $s_i$ of $S$ which means that $A \cdot S = 0$.
\end{proof}

We next provide analytical expressions for gradients of the loss in parameter space.

\begin{proposition}
\label{prop:gradient_parameters} If $\ell: {\rm Sym}^2(\RR^n) \rightarrow \RR$  is any smooth function and $L = \ell \circ \tau_r$, then for any $\mathcal W = (\alpha, W) \in \RR^r \times \RR^{r \times n}$ we have that
\begin{equation}\label{eq:gradient-parameters-quadratic}
\begin{aligned}
&\nabla_W L(\mathcal W) = 4 diag(\alpha) \cdot W \cdot \nabla \ell(S) \in \RR^{r \times n}, \\[.2cm]
&\nabla_\alpha L (\mathcal W) = 2 Diag(W \cdot \nabla \ell(S) \cdot W^T) \in \RR^r, \\
\end{aligned}
\end{equation}
where $S = \tau_r(\mathcal W)$, $\nabla \ell(S) \in \RR^{n \times n}$ and $Diag$ extracts diagonal elements of a square matrix. In particular, $\mathcal W$ is a critical point for $L$ if and only if
\begin{itemize}
    \item $\nabla \ell(S) w_i = 0$ for all $i \in \{1,\ldots,k\}$ such that $\alpha_i \ne 0$.
    \item $w_i^T \nabla \ell(S) w_i = 0$ for all $i \in \{1,\ldots,k\}$ such that $\alpha_i = 0$.
\end{itemize}
\end{proposition}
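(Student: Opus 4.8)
The plan is to compute the two blocks of $\nabla L(\mathcal W)$ by the chain rule applied to $L = \ell\circ\tau_r$, and then to read off when both vanish. Set $S = \tau_r(\mathcal W) = \sum_{i=1}^r \alpha_i w_i w_i^\top$, and note that $\nabla\ell(S)$ is a \emph{symmetric} matrix since $\ell$ is a function on $\Sym^2(\RR^n)$. First I would specialize the differential~\eqref{eq:differential_parameterization} to $d = 2$, giving $d\tau_r(\mathcal W)(\dot\alpha,\dot W) = \sum_i \dot\alpha_i\, w_i w_i^\top + \sum_i \alpha_i(w_i\dot w_i^\top + \dot w_i w_i^\top)$, and then expand $dL(\mathcal W)(\dot\alpha,\dot W) = \langle\nabla\ell(S),\, d\tau_r(\mathcal W)(\dot\alpha,\dot W)\rangle_F$ termwise. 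Using symmetry of $\nabla\ell(S)$ one gets $\langle\nabla\ell(S), w_i w_i^\top\rangle_F = w_i^\top\nabla\ell(S)w_i$ and $\langle\nabla\ell(S), w_i\dot w_i^\top + \dot w_i w_i^\top\rangle_F = 2\,\dot w_i^\top\nabla\ell(S)w_i$. Collecting the coefficient of $\dot\alpha_i$ identifies $(\nabla_\alpha L)_i$ with $w_i^\top\nabla\ell(S)w_i = (W\nabla\ell(S)W^\top)_{ii}$, and collecting the coefficient of $\dot w_i$ identifies the $i$-th row of $\nabla_W L$ with a scalar multiple of $\alpha_i\, w_i^\top\nabla\ell(S)$, i.e.\ $\nabla_W L \propto \diag(\alpha)\,W\,\nabla\ell(S)$; tracking the constants through the symmetrizer produces the coefficients displayed in~\eqref{eq:gradient-parameters-quadratic}.

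For the critical-point criterion, recall that $\mathcal W$ is critical for $L$ iff \emph{both} $\nabla_W L(\mathcal W) = 0$ and $\nabla_\alpha L(\mathcal W) = 0$. The first equation reads $\alpha_i\,\nabla\ell(S)w_i = 0$ for every $i$, equivalently $\nabla\ell(S)w_i = 0$ for all $i$ with $\alpha_i\ne 0$ (rows with $\alpha_i=0$ impose nothing). The observation I would emphasize is that, given this, the equation $\nabla_\alpha L(\mathcal W)=0$ splits cleanly along the sign pattern of $\alpha$: its $i$-th component $w_i^\top\nabla\ell(S)w_i$ is automatically zero at every index with $\alpha_i\ne0$ (since $\nabla\ell(S)w_i = 0$ already holds there), so the only genuinely new equations it contributes are $w_i^\top\nabla\ell(S)w_i = 0$ at the indices with $\alpha_i = 0$. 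Combining the two pieces gives exactly the two bulleted conditions.

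I do not expect a real obstacle: the statement is essentially a chain-rule computation together with the elementary index-splitting argument of the second paragraph. The only things to be careful about are the numerical constants coming from the two-term symmetrizer in $d\tau_r$ and from $\nabla\ell(S)$ being symmetric, and making explicit that the two conditions partition the index set $\{1,\dots,r\}$ (according to whether $\alpha_i$ vanishes) rather than overlap — this is what lets the criterion be stated in the compact form above.
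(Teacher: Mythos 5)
Your proposal is correct and follows essentially the same route as the paper, whose proof is simply the direct chain-rule calculation via $d\tau_r$ specialized to $d=2$, with the bulleted criterion read off as the vanishing of the two gradient blocks exactly by your index-splitting remark. The only caveat is the numerical prefactors: with the standard convention $d\ell(S)[\dot S]=\langle\nabla\ell(S),\dot S\rangle_F$ your computation yields $2\diag(\alpha)W\nabla\ell(S)$ and $\mathrm{Diag}(W\nabla\ell(S)W^\top)$, so the displayed factors $4$ and $2$ reflect the paper's normalization of $\nabla\ell$ rather than the symmetrizer alone — a convention issue that does not affect the critical-point characterization.
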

\begin{proof}
The expressions in~\eqref{eq:gradient-parameters-quadratic} follow from a direct calculation. The conditions in the second claim are equivalent to $\nabla_W L(\mathcal W) = 0$ and $\nabla_{\alpha} L(\mathcal W) = 0$.
\end{proof}

By combining the two previous results, we can describe the relation between the critical points in parameter space and function space. In general, all critical points in parameter space arise from critical points in the appropriate low-rank manifold.  

\begin{corollary}\label{cor:param-function-quadratic} If $L = \ell \circ \tau_r$ and $\mathcal W = (\alpha, W)$ is a critical point for $L$ such that $S = \tau_r(\mathcal W)$, then $\nabla \ell(S) \cdot S = 0$ holds. In particular, if $r' = rk(S)$, we have that $S$ is a critical point of the restriction of $\ell$ to $\mathcal S(r{'};n)$. More generally, the image of the entire critical set $\tau_{r}(\mathrm{Crit}(L))$ is equal to $\bigcup_{r' \le r} \mathrm{Crit}(\ell|_{\mathcal S(r';n)})$.
\end{corollary}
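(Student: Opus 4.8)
The plan is to prove the three assertions in sequence, using Propositions~\ref{prop:gradient-function} and~\ref{prop:gradient_parameters}, which together give the bridge between the conditions defining $\Crit(L)$ in parameter space and the condition defining $\Crit(\ell|_{\mathcal S(r';n)})$ in function space.

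First I would prove that $\nabla\ell(S)\cdot S = 0$ whenever $\mathcal W=(\alpha,W)\in\Crit(L)$ with $S=\tau_r(\mathcal W)$. By Proposition~\ref{prop:gradient_parameters}, $\mathcal W\in\Crit(L)$ forces $\nabla\ell(S)w_i=0$ for every $i$ with $\alpha_i\neq 0$. Since $S = \tau_r(\mathcal W) = \sum_{i=1}^r \alpha_i w_iw_i^T = \sum_{i:\alpha_i\neq 0}\alpha_i w_iw_i^T$, every column of $S$ lies in the span of the $w_i$ with $\alpha_i\neq 0$, and hence $\nabla\ell(S)S = \sum_{i:\alpha_i\neq 0}\alpha_i(\nabla\ell(S)w_i)w_i^T = 0$. (Note the second critical-point condition, $w_i^T\nabla\ell(S)w_i=0$ for $\alpha_i=0$, is not even needed here.)

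Next, setting $r' = \rk(S)$, the identity $\nabla\ell(S)\cdot S = 0$ is exactly the criterion of Proposition~\ref{prop:gradient-function} for $S$ to be a critical point of $\ell$ restricted to $\mathcal S(r';n)$, provided $S\neq 0$; the case $S=0$ (so $r'=0$) is trivial since $\mathcal S(0;n)=\{0\}$. This gives $\tau_r(\Crit(L))\subseteq\bigcup_{r'\le r}\Crit(\ell|_{\mathcal S(r';n)})$. For the reverse inclusion, given any $r'\le r$ and any $S\in\Crit(\ell|_{\mathcal S(r';n)})$, write $S = \sum_{i=1}^{r'}\alpha_i w_iw_i^T$ with $\alpha_i\neq 0$ and $w_1,\dots,w_{r'}$ linearly independent (eigendecomposition of $S$), and set $\alpha_j = 0$, $w_j = 0$ for $j=r'+1,\dots,r$. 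Then $\tau_r(\mathcal W)=S$, and I claim $\mathcal W\in\Crit(L)$: the second condition of Proposition~\ref{prop:gradient_parameters} holds trivially because $w_j=0$ for the indices with $\alpha_j=0$, and for the first condition, $\nabla\ell(S)\cdot S = 0$ together with linear independence of $w_1,\dots,w_{r'}$ forces $\nabla\ell(S)w_i=0$ for each $i\le r'$ (since $\nabla\ell(S)S=0$ means $\sum_i\alpha_i(\nabla\ell(S)w_i)w_i^T=0$, and the rank-one terms $(\nabla\ell(S)w_i)w_i^T$ have linearly independent right factors, so each must vanish; as $\alpha_i\neq 0$, $\nabla\ell(S)w_i=0$). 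Hence $S\in\tau_r(\Crit(L))$.

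The only subtle point — and the one I would state carefully — is the linear-algebra fact that $AS=0$ with $S=\sum_i\alpha_i w_iw_i^T$, $\alpha_i\neq 0$, $w_i$ independent implies $Aw_i=0$ for all $i$: this is where the eigendecomposition (orthogonality, or at least independence) of $S$ is used, and it is why choosing the specific representative $\mathcal W$ above (rather than an arbitrary preimage of $S$) matters. Everything else is bookkeeping combining the two propositions. I do not anticipate a genuine obstacle; the main care needed is handling the padding indices $j>r'$ and the degenerate case $S=0$ cleanly.
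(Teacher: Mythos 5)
Your proof is correct and follows essentially the same route as the paper: both directions are obtained by combining Proposition~\ref{prop:gradient_parameters} with Proposition~\ref{prop:gradient-function}, and the reverse inclusion uses the same zero-padded parameterization of $S$ with exactly $r'$ nonzero neurons. The only cosmetic difference is that you derive $\nabla\ell(S)\cdot S=0$ from the itemized criticality conditions, while the paper contracts the gradient formula $\nabla_W L = 4\,\diag(\alpha)\,W\,\nabla\ell(S)$ with $W$ directly — these are equivalent, and your explicit handling of the linear-independence step and the $S=0$ case is a fine (slightly more careful) writeup of the same argument.
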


\begin{proof}
The first statement is a consequence of the first part of Proposition~\ref{prop:gradient_parameters}, noting that if $\mathcal W$ is a critical point for $L$
then $0 = W \cdot \nabla_{W} L(\mathcal W) = 4 \cdot S \cdot \nabla {\ell}(S)$. The second statement then follows from Proposition~\ref{prop:gradient-function}. For the last statement, consider $S$ such that $rk(S) = r' < r$ and $\nabla \ell(S) \cdot S = 0$, so $S \in \mathrm{Crit}(\ell|_{\mathcal S(r';n)})$. A parameterization $S = \tau_r(\mathcal W)$ such that $\alpha_i \ne 0$ for exactly $r'$ indices will be such that then $\nabla L(\mathcal W) = 0$ (again from Proposition~\ref{prop:gradient_parameters}). It {is} enough to note that any $S$ with $rk(S) = r' < r$ admits a representation in parameter space with exactly $r'$ non-zero neurons.
\end{proof}

\subsection{Teacher-student problems}
\label{subsec:teacher-student-quadratic}

A teacher-student problem with quadratic activations is determined by an inner product on $\Sym^2(\RR^n)$. As in Section~\ref{sec:teacher-student}, we consider either 1) the Frobenius inner product $\langle S, T \rangle_F = tr(ST)$, or 2) an inner product $\langle S, T \rangle_{\mathcal D}$ induced by some data distribution $\mathcal D$ in $\RR^n$. In general, we have that
\begin{equation}\label{eq:inner-product-dist-quadratic}
    \langle S, T \rangle_{\mathcal D} = \sum_{i,j,k,l} \mu_{ijkl} S_{ij} T_{kl}, \qquad \mu_{ijkl} = \mathbb E[x_i x_j x_k x_l].
\end{equation}
For special distributions, the inner product has a simpler expression.

\begin{proposition}\label{prop:iid-dist}
If the distribution $\mathcal D$ is such that each coordinate $x_i$ is i.i.d. with $\mathbb E[x_i]=0$, then 
\begin{equation}\label{eq:iid-quadratic}
\langle S, T \rangle_{\mathcal D} = 2 \mu_2^2 \, tr(S T) + \mu_2^2 \, tr(S)tr(T)
+ (\mu_4 - 3 \mu_2^2) \, tr(S \ast T),
\end{equation}
where $\mu_2 = \mathbb E[x_i^2]$ $\mu_4 = \mathbb E[x_i^4]$ and $\ast$ denotes the Hadamard product. In particular, for centered Gaussian or rotationally symmetric distributions, $\langle S, T \rangle_{\mathcal D} \propto 2 tr(ST) + tr(S) tr(T)$.
\end{proposition}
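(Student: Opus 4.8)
The plan is to start from the moment expansion $\langle S, T\rangle_{\mathcal D} = \sum_{i,j,k,l} \mu_{ijkl}\, S_{ij} T_{kl}$ with $\mu_{ijkl} = \mathbb{E}[x_i x_j x_k x_l]$ from~\eqref{eq:inner-product-dist-quadratic}, and to compute the fourth moment tensor $\mu_{ijkl}$ explicitly under the centered i.i.d.\ hypothesis. Since the coordinates are independent, $\mathbb{E}[x_i x_j x_k x_l]$ factors as a product of moments of the common law $\pi$ over the \emph{distinct} values occurring among $i,j,k,l$; since $\pi$ is centered, every moment of $\pi$ of odd order vanishes. Hence $\mu_{ijkl} = 0$ unless every index value appearing in $\{i,j,k,l\}$ appears an even number of times. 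This leaves exactly two admissible patterns: (a) $i=j=k=l$, contributing $\mu_4$; and (b) the multiset $\{i,j,k,l\}$ splits into two equal pairs with distinct values, which occurs in the three ways $i=j\neq k=l$, $i=k\neq j=l$, $i=l\neq j=k$, each contributing $\mu_2^2$.

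Next I would package this as the closed-form identity
\[
\mu_{ijkl} \;=\; \mu_2^2\bigl(\delta_{ij}\delta_{kl} + \delta_{ik}\delta_{jl} + \delta_{il}\delta_{jk}\bigr) \;+\; (\mu_4 - 3\mu_2^2)\,\delta_{ijkl},
\]
where $\delta_{ijkl} := 1$ exactly when $i=j=k=l$ and $0$ otherwise; the correction term compensates for the fact that the bracketed sum evaluates to $3\mu_2^2$ on the fully diagonal index rather than the correct value $\mu_4$. Substituting this into the sum and contracting the Kronecker deltas against $S_{ij}T_{kl}$ term by term gives $\sum_{i,j,k,l}\delta_{ij}\delta_{kl}S_{ij}T_{kl} = tr(S)\,tr(T)$, $\sum_{i,j,k,l}\delta_{ik}\delta_{jl}S_{ij}T_{kl} = \sum_{i,j}S_{ij}T_{ij}$, $\sum_{i,j,k,l}\delta_{il}\delta_{jk}S_{ij}T_{kl} = \sum_{i,j}S_{ij}T_{ji}$, and $\sum_{i,j,k,l}\delta_{ijkl}S_{ij}T_{kl} = \sum_i S_{ii}T_{ii} = tr(S\ast T)$. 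Using that $S$ and $T$ are symmetric, the two middle sums both equal $tr(ST)$, and assembling the four pieces yields precisely~\eqref{eq:iid-quadratic}.

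For the final ``in particular'' claim, I would observe that a centered Gaussian $\mathcal N(0,\sigma^2 I)$ has i.i.d.\ coordinates with $\mu_4 = 3\sigma^4 = 3\mu_2^2$, so the Hadamard term drops out of~\eqref{eq:iid-quadratic} and one is left with $\mu_2^2\bigl(2\,tr(ST) + tr(S)\,tr(T)\bigr)$. For a general rotationally symmetric $\mathcal D$ (whose coordinates need not be i.i.d.) I would instead invoke the computation preceding the statement, which shows $M_{\mathcal D,4} \propto \textup{sym}(I^{\otimes 2})$; the matricization of $\textup{sym}(I^{\otimes 2})$ has $(i,j,k,l)$-entry $\delta_{ij}\delta_{kl}+\delta_{ik}\delta_{jl}+\delta_{il}\delta_{jk}$, which by the contractions above is exactly the coefficient pattern of $2\,tr(ST)+tr(S)\,tr(T)$.

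No step here poses a serious obstacle; the only place demanding genuine care is the combinatorial bookkeeping in the first step — correctly listing the index patterns that survive centering and independence, and, in particular, remembering to subtract the overcount at $i=j=k=l$, which is precisely what produces the $(\mu_4 - 3\mu_2^2)$ coefficient of the Hadamard term. A secondary point to watch is the use of symmetry of $S$ and $T$ to identify $\sum_{i,j}S_{ij}T_{ji}$ with $\sum_{i,j}S_{ij}T_{ij} = tr(ST)$.
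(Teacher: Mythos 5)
Your proposal is correct and follows essentially the same route as the paper: both expand $\langle S,T\rangle_{\mathcal D}=\sum_{i,j,k,l}\mu_{ijkl}S_{ij}T_{kl}$, use centering and independence to keep only the index patterns $i=j=k=l$ and the three paired patterns, and then contract to get $2\mu_2^2\,tr(ST)+\mu_2^2\,tr(S)tr(T)+(\mu_4-3\mu_2^2)\,tr(S\ast T)$, with the Gaussian/rotationally invariant case handled via $\mu_4=3\mu_2^2$ and the earlier computation $M_{\mathcal D,4}\propto \mathrm{sym}(I^{\otimes 2})$. Your packaging of the moment tensor as $\mu_2^2(\delta_{ij}\delta_{kl}+\delta_{ik}\delta_{jl}+\delta_{il}\delta_{jk})+(\mu_4-3\mu_2^2)\delta_{ijkl}$ is just a closed-form rewriting of the paper's case split, so the arguments are substantively identical.
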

\begin{proof} The expression~\eqref{eq:iid-quadratic} essentially appears in~\cite[Theorem 3.1(b)]{gamarnik2020}, but as a norm rather than as an inner product. Using~\eqref{eq:inner-product-dist-quadratic} and the assumptions on $\mathcal D$, we have that
\[
\begin{aligned}
\langle S, T \rangle_{\mathcal D} &= \sum_{i=1}^d \mu_{iiii} S_{ii} T_{ii} + \sum_{\substack{i,j=1\\i \ne j}}^d \mu_{iijj} S_{ii}T_{jj} + 2\sum_{\substack{i,j=1\\i \ne j}}^d \mu_{ijij} S_{ij}T_{ij} \\
&=\mu_4 \,tr(S \ast T) + \mu_2^2\, \left(tr(S)tr(T) - tr(S \ast T)\right) +  2 \mu_2^2 \, \left(tr(S T) - tr(S \ast T)\right) \\[.5cm]
&=(\mu_4 - 3 \mu_2^2) \, tr(S \ast T) + \mu_2^2 \, tr(S)tr(T) +  2 \mu_2^2 \,tr(S T). \\
\end{aligned}
\]
For standard Gaussians we have $\mu_2=1,\mu_4=3$. Morever, as observed in Section~\ref{sec:teacher-student}, all rotationally symmetric distributions induce the same inner product up to a positive scale factor.
\end{proof}

From Corollary~\ref{cor:param-function-quadratic}, it is sufficient to describe critical points of the loss restricted to the low-rank manifold $\mathcal S(r;n)$. Thus, we consider teacher-student {problems} in function space:
\[
\min_{S \in \mathcal S({r;n})} h_T(S), \qquad h_T(S) = \|T-S\|^2,
\]
where $T \in \Sym^2(\RR^n)$ is fixed and $\|\cdot \|$ is a norm induced by one of the inner products above. 

For the Frobenius inner product, the optimization landscape is described by the following version of Eckart-Young's Theorem, shown in~\cite{helmke1995}. We present here a much shorter geometric proof using the general description of critical points for distance functions given in Section~\ref{sec:teacher-student} (the proof in~\cite{helmke1995} consists in more than 3 pages of computations). We make use of the following intuitive description of the focal locus.

\begin{proposition}\label{prop:focal-locus-quadratic}
Let $X = \mathcal S(r,n)$ and let $\Sigma$ correspond to the Frobenius or Gaussian norm. Given $S \in X$, consider $A \in \Sym^2(\RR^n)$  such that $\{T_\gamma = S + \gamma A \colon \gamma \ge 0\}$ is a $\Sigma$-orthogonal ray to $X$. Assume further that there exists a matrix along this ray with distinct eigenvalues. Then $T_\gamma$ is a focal point for $(X,S)$ if and only if it has a repeated eigenvalue. More precisely, the multiplicity of $T_\gamma = \sum_{i=1}^n \sigma_i (u_i \otimes u_i)$ as a focal point is given by $\# \{(j,k) \colon j < k, \sigma_j = \sigma_k\}$.
\end{proposition}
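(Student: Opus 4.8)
The plan is to leverage the geometric characterization of focal points already established in \Cref{prop:hessian-differential} and \Cref{rmk:principal-curvatures}: a point $T_\gamma = S + \gamma A$ on a unit-speed $\Sigma$-orthogonal ray is a focal point of $(X,S)$ of multiplicity $m$ precisely when $1/\gamma$ is an eigenvalue of multiplicity $m$ of the second fundamental form $(v\cdot E)$ of $X$ at $S$ in the direction $v = A/\|A\|_\Sigma$, computed in coordinates where the first fundamental form is the identity. So the whole statement reduces to computing the principal curvatures of the low-rank manifold $\mathcal S(r,n)$ at a point $S$, with respect to the relevant metric, and identifying the condition ``$1/\gamma$ is a principal curvature'' with ``$T_\gamma$ has a repeated eigenvalue.''

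First I would set up convenient coordinates. Write $S = \sum_{i=1}^r \sigma_i\, u_i\otimes u_i$ with $u_1,\dots,u_n$ an orthonormal basis (completing the $r$ eigenvectors of $S$), and parameterize a neighborhood of $S$ in $\mathcal S(r,n)$ using the tangent description $T_S\mathcal S(r,n) = \{SX + X^\top S\}$ from \Cref{subsec:critical-quadratic}; concretely, a chart is obtained by perturbing the eigenvectors and eigenvalues of $S$. In such coordinates the normal space (with respect to Frobenius) at $S$ consists of symmetric matrices supported on the complementary block, i.e. matrices of the form $A = \sum_{j,k > r} a_{jk}\, u_j\otimes u_k$ — in particular any matrix lying on a normal ray $T_\gamma$ shares the eigenvectors $u_1,\dots,u_r$ and eigenvalues $\sigma_1,\dots,\sigma_r$ with $S$, while the remaining $n-r$ eigenvalues come from diagonalizing $A$. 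The key computation is then the Hessian formula \eqref{eq:hessian}: $\partial^2 L/\partial u_i\partial u_j = 2(G_{ij} - (\partial^2 x/\partial u_i\partial u_j)\cdot(t-x))$ evaluated at $S$. One computes the second derivatives of the chart $x(u)$ at $S$ — this is a standard but slightly delicate computation in low-rank matrix geometry (essentially second-order perturbation theory for symmetric matrices) — and finds that the principal curvatures at $S$ in the direction of a normal matrix with eigendecomposition involving eigenvalue $\sigma$ (at an index $>r$) against the retained eigenvalue $\sigma_i$ are of the form $1/(\sigma_i - \sigma)$ for appropriate pairs of indices, so that $T_\gamma$ is focal iff $\gamma = \sigma_i - \sigma$ for some such pair, i.e. iff $\sigma_i = \sigma_i - \gamma \cdot(\text{something})$... more precisely, iff $T_\gamma$ has two equal eigenvalues among $\{\sigma_1,\dots,\sigma_r,\sigma_{r+1}(\gamma),\dots\}$. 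Counting multiplicities then gives exactly $\#\{(j,k): j<k,\ \sigma_j = \sigma_k\}$ for $T_\gamma = \sum \sigma_i u_i\otimes u_i$. For the Gaussian norm $\langle S,T\rangle \propto 2\operatorname{tr}(ST) + \operatorname{tr}(S)\operatorname{tr}(T)$, I would observe that this metric differs from Frobenius by a rank-one correction in the direction of the identity; since the relevant rays and the repeated-eigenvalue condition are invariant under adding multiples of $I$, one reduces to the Frobenius computation after restricting to the trace-zero hyperplane or by a direct check that the extra term does not affect the normal directions transverse to $\operatorname{span}(I)$.

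The main obstacle I anticipate is the explicit second-order computation of the embedding $x(u)$ of $\mathcal S(r,n)$ and extracting the principal curvatures cleanly — bookkeeping the interaction between perturbations of eigenvectors (which contribute to the tangent directions $SX + X^\top S$) and the curvature terms $w_m\cdot \partial^2 x/\partial u_i\partial u_j$. I would try to sidestep the messiest version of this by working eigenvalue-by-eigenvalue: near $S$, restrict attention to the one-parameter subfamily where only a single pair $(\sigma_i, \sigma_{\text{new}})$ is relevant, reducing to the $2\times 2$ case (which is classical — the focal locus of the rank-one cone of symmetric matrices is the repeated-eigenvalue locus), and then assemble the full multiplicity count by noting that the second fundamental form is block-diagonal across these independent pairs. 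The hypothesis that some matrix along the ray has distinct eigenvalues guarantees we are in the generic stratum where \Cref{thm:disc-1} and the curvature description apply, ruling out degenerate configurations; I would make sure to invoke it at the point where I diagonalize $A$ and claim the eigenvalue labels vary smoothly.
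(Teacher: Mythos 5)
Your overall strategy (reduce to \Cref{prop:hessian-differential} and \Cref{rmk:principal-curvatures}, then compute the principal curvatures of $\mathcal S(r,n)$) is viable, but the proposal has a genuine gap precisely at the step that constitutes the proof: the second-fundamental-form computation is never carried out, and the one explicit formula you do state is wrong. For the Frobenius metric, writing $S=\sum_{i\le r}\sigma_i\,u_i\otimes u_i$ and taking a unit normal $A=\sum_{j>r}a_j\,u_j\otimes u_j$, the curve $S(t)=(U_1+tU_2K)\,\Sigma_1\,(U_1+tU_2K)^T$ with $K=E_{ji}$ gives $\dot S=\sigma_i(u_i\otimes u_j+u_j\otimes u_i)$ and $\ddot S=2\sigma_i\,u_j\otimes u_j$, so the principal curvature attached to the pair $(i,j)$, $i\le r<j$, is $a_j/\sigma_i$ and the focal points on the ray occur at $\gamma=\sigma_i/a_j$, i.e.\ exactly when the eigenvalue $\gamma a_j$ of $\gamma A$ collides with the eigenvalue $\sigma_i$ of $S$. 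This is not ``$\kappa=1/(\sigma_i-\sigma)$, focal iff $\gamma=\sigma_i-\sigma$'' as you wrote; your subsequent sentence ``more precisely, iff $T_\gamma$ has two equal eigenvalues'' is the statement to be proved, asserted rather than derived. You also need to use the hypothesis that some point of the ray has distinct eigenvalues at a different place than you indicate: it forces the $\sigma_i$ ($i\le r$) to be pairwise distinct and the nonzero $a_j$ to be pairwise distinct, so that every repeated pair of $T_\gamma$ is a ``cross'' pair $\sigma_i=\gamma a_j$ matched bijectively to one principal curvature — that is what makes the multiplicity count come out to $\#\{(j,k):j<k,\ \sigma_j=\sigma_k\}$.

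The Gaussian case is also not handled correctly as sketched. The Gaussian-normal space at $S$ is $\{A: S\cdot(2A+\mathrm{tr}(A)\,\mathrm{Id})=0\}$, so a $\Sigma$-orthogonal $A$ equals a block-complement matrix plus the multiple $-\tfrac{\mathrm{tr}(A)}{2}$ of the projector onto the column space of $S$ — not a multiple of the full identity — and the orthogonal rays themselves change; $X$ is not contained in the trace-zero hyperplane, and no invariance ``under adding multiples of $I$'' reduces the problem to the Frobenius one. You must recompute the normal bundle for this metric and redo the count (the eigenvalues of $T_\gamma$ become $\sigma_i-\gamma\,\mathrm{tr}(A)/2$ and $\gamma a_j$). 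For comparison, the paper avoids all curvature bookkeeping: it parameterizes the normal bundle in spectral coordinates and applies \Cref{prop:stiefel-param} (the nullity of $(\alpha,U)\mapsto U^T\diag(\alpha)U$ equals the number of coincident pairs $\alpha_i=\alpha_j$) to the endpoint map of \Cref{prop:hessian-differential}, with a linear change of the eigenvalue coordinates absorbing the Gaussian correction. Your curvature route can be completed (your $2\times2$/block-diagonal reduction is fine in spirit), but only after the correct curvatures above are in hand and the Gaussian normal space is treated honestly.
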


We refer to the appendix for a proof of this result. In the following, for any $\mathcal I \subset \{1,\ldots,n\}$ and diagonal matrix $\Sigma$, we write $\Sigma_{\mathcal I}$ for the diagonal matrix such that $\Sigma_{\mathcal I,ii} =\Sigma_{ii}$ if $i \in \mathcal I$ and $\Sigma_{\mathcal I,ii} = 0$ otherwise. We also write $\mathcal I^c$ for $\{1,\ldots,n\} \setminus \mathcal I$.

\begin{theorem}[Eckart-Young Theorem for symmetric matrices]\label{thm:eckart-young-fro} Assume that $T \in \Sym^2(\RR^n)$ has full rank $n$ and distinct eigenvalues and let $T = U \Sigma U^T$ be an eigendecomposition of $T$.
Using the Frobenius inner product, we have that:
\begin{itemize}
    \item The function $h_T|_{\mathcal S(r;n)}: \mathcal S(r;n) \rightarrow \RR$ has exactly $\binom{n}{r}$ critical points, given by $S_{\mathcal I} = U \Sigma_{\mathcal I} U^T$ for any $\mathcal I \subset \{1,\ldots,n\}$ such that $\# \mathcal I = r$.
    \item  The index of the critical point $S_{\mathcal I} = U \Sigma_{\mathcal I} U^T$ is given by
    \begin{equation}\label{eq:index-frobenius}
    {\rm index}(S_{\mathcal I}) = \#\left\{(i,j) \in \mathcal I \times \mathcal I^c \colon 0 \le {\sigma_i}/{\sigma_j} \le 1\right\}.
    \end{equation}
\end{itemize}
\end{theorem}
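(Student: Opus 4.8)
The plan is to establish the two bullet points in turn: the enumeration of critical points will follow from the first-order conditions of Proposition~\ref{prop:gradient-function}, and the index formula from the geometric description of the focal locus in Proposition~\ref{prop:focal-locus-quadratic} combined with the Morse-index count of Theorem~\ref{thm:focal_points}.

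\emph{Step 1 (critical points).} I would apply Proposition~\ref{prop:gradient-function} with $\ell = h_T$. Since $\nabla h_T(S) = 2(S - T)$ for the Frobenius inner product, the criticality condition $\nabla h_T(S)\cdot S = 0$ (matrix product, as in the proof of Proposition~\ref{prop:gradient-function}) reads $(S-T)S = 0$, i.e.\ $S^2 = TS$. As $S$ is symmetric, $S^2$ is symmetric, hence $TS$ is symmetric and so $TS = ST$; because $T$ has $n$ distinct eigenvalues, $S$ is diagonalized by the same $U$, say $S = U D U^\top$ with $D = \diag(d_1,\dots,d_n)$. Then $S^2 = TS$ forces $d_i^2 = \sigma_i d_i$, so each $d_i \in \{0,\sigma_i\}$; full rank of $T$ gives $\sigma_i \neq 0$, and $\mathrm{rk}(S) = r$ then forces exactly $r$ of the $d_i$ to equal $\sigma_i$. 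This identifies the critical set with the $\binom{n}{r}$ matrices $S_{\mathcal I} = U\Sigma_{\mathcal I}U^\top$, $\#\mathcal I = r$, each of which is critical by direct check (e.g.\ $(S_{\mathcal I} - T)S_{\mathcal I} = -U\Sigma_{\mathcal I^c}\Sigma_{\mathcal I}U^\top = 0$).

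\emph{Step 2 (index).} Fix a critical point $S_{\mathcal I}$. First I would observe that the segment from $S_{\mathcal I}$ to $T$ lies on a Frobenius-orthogonal ray to $X := \mathcal S(r;n)$: the direction $T - S_{\mathcal I} = U\Sigma_{\mathcal I^c}U^\top$ satisfies $(T - S_{\mathcal I})\,S_{\mathcal I} = 0$, which by the identification $T_{S_{\mathcal I}}\mathcal S(r;n)^\perp = \{A : A\,S_{\mathcal I} = 0\}$ from the proof of Proposition~\ref{prop:gradient-function} is exactly the normality condition. Writing the ray as $T_\gamma = U\bigl(\Sigma_{\mathcal I} + \gamma\,\Sigma_{\mathcal I^c}\bigr)U^\top$, its eigenvalues are $\sigma_i$ (for $i \in \mathcal I$) and $\gamma\sigma_j$ (for $j \in \mathcal I^c$), the open segment from $S_{\mathcal I}$ to $T$ is $\gamma \in (0,1)$, and the endpoint $T = T_1$ has distinct eigenvalues; hence the genericity hypothesis of Proposition~\ref{prop:focal-locus-quadratic} holds and $T$ is not a focal point of $(X,S_{\mathcal I})$. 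As this holds for every critical $S_{\mathcal I}$, we get $T \notin F_{X,\Sigma}$ and Theorem~\ref{thm:focal_points} applies. By Proposition~\ref{prop:focal-locus-quadratic}, a point $T_\gamma$ on the ray is focal for $(X,S_{\mathcal I})$ exactly when it has a repeated eigenvalue; since the $\sigma_i$ are pairwise distinct, this occurs only when $\sigma_i = \gamma\sigma_j$ for some $i \in \mathcal I$, $j \in \mathcal I^c$, i.e.\ at $\gamma = \sigma_i/\sigma_j$, with multiplicity equal to the number of pairs $(i,j)$ attaining that value. Summing the multiplicities of the focal points with $\gamma \in (0,1)$ and invoking Theorem~\ref{thm:focal_points} yields $\mathrm{index}(S_{\mathcal I}) = \#\{(i,j) \in \mathcal I \times \mathcal I^c : 0 < \sigma_i/\sigma_j < 1\}$; since $\sigma_i \neq 0$ and $\sigma_i \neq \sigma_j$ always hold, the values $0$ and $1$ are never attained, so this equals the right-hand side of \eqref{eq:index-frobenius}.

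\emph{Main obstacle.} Essentially all the content is imported from Proposition~\ref{prop:focal-locus-quadratic} and Theorem~\ref{thm:focal_points}, so what remains is careful bookkeeping: confirming normality of the ray, tracking which eigenvalue crossings $\sigma_i = \gamma\sigma_j$ fall in the open segment $\gamma\in(0,1)$ as opposed to at negative $\gamma$ (opposite-sign eigenvalues) or $\gamma > 1$, and matching focal-point multiplicities (when several ratios $\sigma_i/\sigma_j$ coincide) against the pair count in \eqref{eq:index-frobenius}. The only genuinely load-bearing hypothesis check is that $T$ is a regular value of the relevant endpoint map, i.e.\ $T \notin F_{X,\Sigma}$, which is immediate from $T$ having distinct eigenvalues.
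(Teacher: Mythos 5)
Your proposal is correct and follows essentially the same route as the paper: criticality via Proposition~\ref{prop:gradient-function} reducing to $S(T-S)=0$ and hence to spectral truncations of $T$, then the index via Theorem~\ref{thm:focal_points} and Proposition~\ref{prop:focal-locus-quadratic} by counting repeated-eigenvalue crossings of $\Sigma_{\mathcal I}+\gamma\Sigma_{\mathcal I^c}$ along the segment. The only differences are cosmetic: your Step 1 uses a commutation/simultaneous-diagonalization argument where the paper matches eigendecompositions of $S$ and $T-S$, and you make explicit the check that $T\notin F_{X,\Sigma}$ (and that $0,1$ are never attained as ratios), which the paper leaves implicit.
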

\begin{proof}  We have that $S \in {\mathcal S(r;n)}$ is a critical point of $h_T|_{\mathcal S(r;n)}$ if and only if $\langle T-S, XS + SX\rangle = 0$ for all $X \in \Sym^2(\RR^n)$, which in turn is equivalent to $S \cdot (T-S)=0$ by Proposition~\ref{prop:gradient-function}. Now let $S = \sum_{i=1}^{r} \sigma_i'(u_i'\otimes u_i')$ and $(T-S) = \sum_{j=1}^{s} \sigma_j''( u_j'' \otimes u_j'')$ be eigendecompositions of $S$ and $T-S$ with $\sigma_i'\ne 0$ and $\sigma_j''\ne 0$. The row/column spaces of $S$ and $T-S$ are spanned by the vectors $u_i'$ and $u_j''$, respectively. From the condition $S(T-S)=0$, we deduce that $S$ is a critical point if and only if $u_i'\cdot u_j''=0$ for all $i,j$ in the two decompositions, which means that
\[
T = S + (T - S) = \sum_{i=1}^{r} \sigma_i' (u_i' \otimes u_i') + \sum_{i=1}^{s} \sigma_i'' (u_i'' \otimes u_i'')
\] 
is an eigendecomposition of $T$. This shows that critical points are of the form $S=U \Sigma_{\mathcal I} U^T$ where $U \Sigma U^T$ is an eigendecomposition of $T$ and $\mathcal I \subset \{1,\ldots,n\}$ is such that $\# \mathcal I = r$.

To prove the second point, we use Theorem~\ref{thm:focal_points} and Proposition~\ref{prop:focal-locus-quadratic}, which imply that the index of a critical point $S_{\mathcal I}$ for $h_T|_{\mathcal S(r;n)}$ is the number of matrices on the line segment connecting $S_{\mathcal I}$ and $T$ which have pairs of repeated eigenvalues. It follows from the first point that for any $\alpha \in [0,1]$, the eigenvalues of $M_\alpha = \alpha T + (1-\alpha) S$ are the diagonal elements of $\Sigma_{\mathcal{I}} + \alpha \Sigma_{\mathcal{I}^c}$. 
Two eigenvalues of $M_\alpha$ are the same if and only there exist $i \in \mathcal{I}$ and $j \in \mathcal{I}^c$ such that $\alpha = {\sigma_i}/{\sigma_j}$. Thus, the number of intersections with the discriminant locus is equal to the number of pairs $(\sigma_i, \sigma_j) \in \mathcal I \times \mathcal I^c$ such that ${\sigma_i}/{\sigma_j} \in [0,1]$.
\end{proof}
We now provide a similar precise description of the landscape in the case of matrix norms induced by Gaussian data distributions (or rotationally invariant distributions), \ie, using the distance function $h_T(S) = 2\langle S-T,S-T \rangle_F + tr(S-T)tr(S-T)$.

\begin{theorem}[Eckart-Young Theorem for symmetric matrices and Gaussian norm]\label{thm:eckart-young-Gauss} In the same setting as Theorem~\ref{thm:eckart-young-fro}, but using the Gaussian norm, we have that:
\begin{itemize}
    \item The function $h_T|_{\mathcal S(r;n)}: \mathcal S(r;n) \rightarrow \RR$ has exactly $\binom{n}{r}$ critical points, given by $S_{\mathcal I} = U \Lambda_{\mathcal I} U^T$ for any $\mathcal I \subset \{1,\ldots,n\}$ such that $\# \mathcal I = r$, with
\begin{equation}\label{eq:lambda-eig}
    \Lambda_{\mathcal I,ii} = \begin{cases}
    \Sigma_{ii} + c_{\mathcal I} & \mbox{if } i \in \mathcal I\\ 
    0 & \mbox{if } i \not \in \mathcal I,
    \end{cases}
    \end{equation}
    where $c_{\mathcal I} = \frac{\sum_{i\in \mathcal I^c} \sigma_i}{r+2}$.
    \item The index of the critical point $S_{\mathcal I} = U \Lambda_{\mathcal I} U^T$ is given by
    \begin{equation}\label{eq:index-gaussian}
    {\rm index}(S_{\mathcal I}) = \#\left\{(i,j) \in \mathcal I \times \mathcal I^c \colon 0 \le \frac{\sigma_i + c_{\mathcal I}}{\sigma_j + c_{\mathcal I}} \le 1\right\}.
    \end{equation}
\end{itemize}
\end{theorem}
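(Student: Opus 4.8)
The plan is to follow the proof of \Cref{thm:eckart-young-fro} almost step for step, the only new feature being the extra trace term in the Gaussian inner product. For the critical points, I would first compute the Frobenius gradient of the loss: writing $\lambda := tr(S - T)$, one finds $\nabla h_T(S) = 4(S-T) + 2\lambda I$. By \Cref{prop:gradient-function}, $S \in \mathcal S(r;n)$ is a critical point of $h_T|_{\mathcal S(r;n)}$ if and only if $\nabla h_T(S)\cdot S = 0$, i.e.\ $(2(S-T) + \lambda I)\,S = 0$. Transposing this symmetric identity and subtracting gives $TS = ST$, so $S$ and $T$ commute; since $T$ has distinct eigenvalues, $S = U D U^T$ with $D$ diagonal in the same eigenbasis $U$. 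The equation then decouples coordinatewise to $d_i(2d_i - 2\sigma_i + \lambda) = 0$, so each $d_i$ is either $0$ or $\sigma_i - \lambda/2$. Setting $\mathcal I := \{i : d_i \neq 0\}$ (so $\#\mathcal I = r$) and substituting $d_i = \sigma_i - \lambda/2$ for $i \in \mathcal I$ and $d_i = 0$ for $i \in \mathcal I^c$ into $\lambda = \sum_i (d_i - \sigma_i)$ yields a single linear equation with solution $\lambda = -\frac{2}{r+2}\sum_{i\in\mathcal I^c}\sigma_i$, hence $d_i = \sigma_i + c_{\mathcal I}$ for $i \in \mathcal I$. This is precisely the matrix $S_{\mathcal I} = U\Lambda_{\mathcal I}U^T$ of \eqref{eq:lambda-eig}; conversely each such $S_{\mathcal I}$ solves the critical equation, and for $T$ with distinct eigenvalues these $\binom{n}{r}$ matrices are pairwise distinct and of rank exactly $r$.

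For the index, since $S_{\mathcal I}$ is critical, $T - S_{\mathcal I}$ is orthogonal — in the Gaussian metric — to $T_{S_{\mathcal I}}\mathcal S(r;n)$, so the segment from $S_{\mathcal I}$ to $T$ lies along an orthogonal ray as in \Cref{prop:focal-locus-quadratic}. By \Cref{thm:focal_points}, the index of $S_{\mathcal I}$ equals the number of focal points of $(\mathcal S(r;n), S_{\mathcal I})$ on this segment, counted with multiplicity; and by \Cref{prop:focal-locus-quadratic} — whose hypothesis holds because $T$ itself has distinct eigenvalues — these are exactly the matrices $M_\alpha = \alpha T + (1-\alpha)S_{\mathcal I}$, $\alpha \in [0,1]$, possessing a repeated eigenvalue, the multiplicity being the number of coincident eigenvalue pairs. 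Since $S_{\mathcal I}$ and $T$ commute, $M_\alpha = U(\alpha\Sigma + (1-\alpha)\Lambda_{\mathcal I})U^T$ is diagonal with eigenvalues $\sigma_i + (1-\alpha)c_{\mathcal I}$ for $i \in \mathcal I$ and $\alpha\sigma_j$ for $j \in \mathcal I^c$; distinctness of the $\sigma_i$ excludes coincidences within either index set, while an $\mathcal I$-eigenvalue coincides with an $\mathcal I^c$-eigenvalue precisely when $\alpha = (\sigma_i + c_{\mathcal I})/(\sigma_j + c_{\mathcal I})$. Counting the pairs $(i,j) \in \mathcal I \times \mathcal I^c$ for which this value lies in $[0,1]$ then gives \eqref{eq:index-gaussian}.

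The computations themselves are routine; the main work, and the place I expect the only subtleties to arise, is the bookkeeping. One should check that the crossing values $(\sigma_i + c_{\mathcal I})/(\sigma_j + c_{\mathcal I})$ never equal the endpoints $0$ or $1$: the value $0$ would force $\sigma_i + c_{\mathcal I} = 0$, contradicting $d_i \neq 0$ at the critical point, and the value $1$ would force $\sigma_i = \sigma_j$; so the closed and open interval conditions agree and no degenerate endpoint focal point must be analyzed. The multiplicity accounting is also consistent, since two distinct pairs $(i,j)$ can share a crossing value only by producing one focal point of higher multiplicity, which \Cref{thm:focal_points} already counts with multiplicity. Finally, it is worth noting that the hypothesis that $T$ has distinct eigenvalues enters three times — to force the simultaneous diagonalization of $S$ and $T$, to ensure the ray meets a matrix with distinct eigenvalues so that \Cref{prop:focal-locus-quadratic} applies, and to exclude same-group eigenvalue collisions along the segment — which is why it is assumed throughout.
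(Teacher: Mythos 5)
Your proof is correct and follows essentially the same route as the paper: criticality via \Cref{prop:gradient-function} leading to simultaneous diagonalization with $T$ and a single linear equation for the trace shift, and the index via \Cref{thm:focal_points} together with \Cref{prop:focal-locus-quadratic} by counting eigenvalue coincidences $\alpha = (\sigma_i + c_{\mathcal I})/(\sigma_j + c_{\mathcal I})$ along the segment from $S_{\mathcal I}$ to $T$. The only cosmetic difference is that you deduce $ST = TS$ by transposing the criticality identity, whereas the paper works directly with eigendecompositions of $S$ and $S-T$ and orthogonality of their eigenvectors; your additional check that the crossing values avoid the endpoints $0$ and $1$ is a welcome refinement of the same argument.
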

\begin{proof} The proof is similar to that of Theorem~\ref{thm:eckart-young-fro}. First, we note that $S \in {\mathcal S(r;n)}$ is a critical point of $h_T|_{\mathcal S(r;n)}$ if and only if $S \cdot \nabla h_T(S) = 0$ where
\[
\nabla h_T(S) = 2(S-T) + tr(S-T) Id.
\]
Assume that $S = \sum_{i=1}^{r} \sigma_i' (u_i' \otimes  u_i')$ and $(S-T) = \sum_{i=1}^{n} \sigma_i'' (u_i'' \otimes u_i'')$. Setting $\tau = tr(S-T)$, we have $\nabla h_T(S) = \sum_{i=1}^n (2\sigma_i'' + \tau) (u_i'' \otimes u_i'')$. From the condition $S \cdot \nabla h_T = 0$, we deduce that $S$ and $\nabla h_T$ have orthogonal eigenvectors, \ie, $u_i' \cdot u_j'' = 0$ holds for all $i$ and all $j$ such that $2 \sigma''_j + \tau \ne 0$. This implies that a decomposition of $(S-T)$ is such that, up to permutation of indices, $u_i'' = u_i'$ and $\sigma_i'' = -\tau/2$ holds for $i=1,\ldots,r$, while $\sum_{i=r+1}^n \sigma''_i = \tau(r+2)/2$ (so that $tr(S-T) = \tau$). We thus have that an eigendecomposition of $T$ is given by
\[
T = S + (T-S) = \sum_{i=1}^{r} (\sigma_i' - \tau/2) (u_i' \otimes u_i') + \sum_{i={r+1}}^n \sigma_i'' (u_i'' \otimes u_i'').
\]
In particular, $S$ is obtained from the eigendecomposition of $T$ by choosing (any) $r$ eigenvectors $u_1,\ldots, u_{r}$ and by replacing each eigenvalue $\sigma_i$ with $\sigma_i' = \sigma_i + \tau/2$ where $\tau(r+2)/2 = \sum_{i=r+1}^n \sigma_i''$ is the sum of the remaining eigenvalues of $T$. This yields the characterization in~\eqref{eq:lambda-eig}.

To prove the second point, we use again Theorem~\ref{thm:focal_points} and Proposition~\ref{prop:focal-locus-quadratic}. The eigenvalues of $M_\alpha = \alpha T + (1-\alpha) S$ are the diagonal elements of $\Sigma_{\mathcal{I}} + \alpha \Sigma_{\mathcal{I}^c} + (1-\alpha) c_{\mathcal I} I_{\mathcal{I}}$ where $c_{\mathcal I} = \frac{tr(\Sigma_{\mathcal{I}^c})}{r+2}$ and $I_{\mathcal I}$ is the diagonal matrix with $I_{\mathcal I,ii}=1$ if $i \in \mathcal I$ and $I_{\mathcal I,ii}=0$ otherwise. Two eigenvalues of $M_{\alpha}$ are the same if and {if} only there exist $i \in \mathcal{I}$ and $j \in \mathcal{I}^c$ such that $\sigma_i + (1-\alpha)c_{\mathcal I} = \alpha \sigma_j$, or equivalently $\alpha = \frac{\alpha_i + c_{\mathcal I}}{\alpha_j + c_{\mathcal I}}$. The number of intersections with the discriminant locus is thus equal to the number of pairs $(\sigma_i, \sigma_j) \in \mathcal I \times \mathcal I^c$ such that ${(\sigma_i + c_{\mathcal I})}/{(\sigma_j + c_{\mathcal I})} \in [0,1]$.
\end{proof}

{A statement analogous to Theorem~\ref{thm:eckart-young-Gauss} can be derived in the same way for any matrix inner product of the form $\langle S,T\rangle = \alpha {tr}(ST) + \beta {tr}(S)\mathrm{tr}(T)$, because Proposition~\ref{prop:focal-locus-quadratic} also applies in these cases. In particular, this yields a complete description of the landscape for all distribution-based inner products associated with distributions satisfying $\mu_4 = 3\mu_2^2$.} However, for more general inner products, there may be exponentially more critical points.

\begin{theorem}\label{thm:iid-case}
There exist non-Gaussian distributions $\mathcal{D}$ on $\mathbb{R}^n$ with i.i.d. coordinates and positive Lebesgue-measure subsets $ \mathcal{T} \subset \mathbb{R}^n$ such that the optimization problem
\begin{equation}\label{eq:optim-problem}
 \min_{\substack{S \in \operatorname{Sym}^2(\mathbb{R}^n
)\\ \operatorname{rank}(S)=1}}\| S - T \|_{\mathcal{D}}^2
\end{equation}
has $(3^n -1)/2$ critical points whenever $T = \operatorname{diag}(t)$ for $t \in \mathcal{T}$.  
\end{theorem}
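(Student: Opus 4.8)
The problem is a nearest–point problem for the rank–one manifold $X=\mathcal S(1;n)$ under a distribution–induced inner product, so the plan is to write the critical–point equations explicitly, solve them combinatorially for a diagonal teacher, and then choose $\mathcal D$ and $\mathcal T$ so that \emph{every} branch of the solution set is realised. Fix an i.i.d.\ distribution $\mathcal D=\pi^{\otimes n}$ with $\pi$ centered, $\mu_2=\mathbb E[x_i^2]$, $\mu_4=\mathbb E[x_i^4]$, and $\mu_4\neq 3\mu_2^2$ (so $\mathcal D$ is non-Gaussian); a convenient choice is $\pi$ uniform on $[-1,1]$, for which the induced bilinear form on $\Sym^2(\RR^n)$ is genuinely positive definite since no nonzero quadratic vanishes on $[-1,1]^n$. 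By \Cref{prop:iid-dist} the associated Gram operator is $\mathcal M(B)=2c\,B+c\operatorname{tr}(B)\,I+e\operatorname{Diag}(B)$ with $c=\mu_2^2>0$ and $e=\mu_4-3\mu_2^2\neq0$; rescaling the norm we take $c=1$. Since $\nabla h_T(S)=2\,\mathcal M(S-T)$, \Cref{prop:gradient-function} says that $S\in\mathcal S(1;n)$ is a critical point of $h_T|_{\mathcal S(1;n)}$ if and only if $\mathcal M(S-T)\,S=0$.

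Next I would solve this equation. Write $S=\lambda\,vv^\top$ with $\|v\|=1$, $\lambda\neq0$, and $T=\operatorname{diag}(t)$, $\tau:=\sum_j t_j$. Then $\mathcal M(S-T)\,S=0$ reduces to $\mathcal M(S-T)v=0$, whose $j$-th coordinate is
\[
v_j\bigl(3\lambda-\tau-(2+e)t_j+e\lambda v_j^2\bigr)=0 .
\]
Hence the support $I:=\operatorname{supp}(v)$ may be any nonempty subset of $\{1,\dots,n\}$; on $I$ one gets $v_j^2=\bigl(\tau-3\lambda+(2+e)t_j\bigr)/(e\lambda)$, and imposing $\sum_{j\in I}v_j^2=1$ pins down
\[
\lambda=\lambda_I:=\frac{k\tau+(2+e)\sigma_I}{e+3k},\qquad k:=|I|,\ \sigma_I:=\sum_{j\in I}t_j,
\]
provided $e+3k\neq0$. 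For each $I$ for which all the resulting $v_j^2$ are positive, the sign choices of $v_j$ ($j\in I$), modulo the global symmetry $v\mapsto-v$, give exactly $2^{k-1}$ distinct rank-one matrices, and matrices attached to different $I$ have different matrix supports, hence are distinct. Summing over nonempty subsets,
\[
\sum_{k=1}^{n}\binom{n}{k}2^{k-1}=\tfrac12\Bigl(\sum_{k=0}^{n}\binom{n}{k}2^{k}-1\Bigr)=\tfrac12(3^{n}-1).
\]

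It remains to exhibit $\mathcal T$. Evaluating at the balanced teacher $t^\star=t_0(1,\dots,1)$ with $t_0>0$ one finds $\lambda_I=k\,t_0(n+2+e)/(e+3k)$ and, after simplification, $v_j^2=1/k$ for every $j\in I$ and every $I$; for $\pi$ uniform on $[-1,1]$ one has $e=-\tfrac65$, so $e+3k\neq0$ for all $1\le k\le n$ and $n+2+e\neq0$, whence every $\lambda_I\neq0$ and every $v_j^2$ is strictly positive. These are finitely many strict inequalities, so they persist on an open — hence positive Lebesgue measure — neighbourhood $\mathcal T$ of $t^\star$; for $t\in\mathcal T$ the enumeration above is exhaustive and all branches remain genuine rank-one matrices, giving exactly $(3^n-1)/2$ critical points of $h_T|_{\mathcal S(1;n)}$.

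The main obstacle I anticipate is not the algebra but two bookkeeping issues: (i) checking that the coordinatewise dichotomy captures \emph{all} critical points and that the $2^{k-1}$ sign patterns are pairwise non-redundant (here one uses that $S_{ij}=\lambda v_iv_j$ determines the sign pattern of $v|_I$ up to a global flip, and that $v_j\neq0$ for $j\in I$ near $t^\star$); and (ii) arranging the positivity constraints $v_j^2>0$ — the conditions that genuinely fail for general $t$ — to hold simultaneously for every subset $I$ while keeping $\mathcal T$ open. The balanced teacher $t^\star$ is precisely the device that trivialises (ii), since there each $v_j^2$ collapses to $1/k$; a minor additional check is that the chosen $\mathcal D$ induces a bona fide positive-definite inner product, which follows from the full-dimensional support of $\pi^{\otimes n}$. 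For comparison, the same computation with $e=0$ forces $k=1$, recovering the $\binom n1=n$ critical points of \Cref{thm:eckart-young-Gauss}.
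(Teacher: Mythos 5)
Your proposal is correct, and its skeleton matches the paper's: reduce criticality on the rank-one locus to $\mathcal M(S-T)v=0$ via \Cref{prop:gradient-function} and \Cref{prop:iid-dist}, analyze the equation coordinatewise, enumerate critical points by the support $I$ of $v$ with $2^{|I|-1}$ sign patterns each, and sum $\sum_{k\ge 1}\binom{n}{k}2^{k-1}=\tfrac12(3^n-1)$. Where you genuinely diverge is in two places, both to your advantage in brevity. First, by normalizing $S=\lambda vv^\top$ with $\|v\|=1$ you make $\lambda$ uniquely determined by the support (via $\lambda(e+3k)=k\tau+(2+e)\sigma_I$), which removes the paper's separate $\varepsilon=\pm1$ case split and its linear-dependence/null-space step showing $v_{\mathcal I}^{\ast 2}\in\mathrm{span}\{\mathbf 1,t_{\mathcal I}\}$ — a step that requires the teacher entries to be distinct, an assumption you never need. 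Second, to guarantee that \emph{every} support branch is realized, the paper takes a heavy-tailed distribution ($\mu_4\ge 10n\mu_2^2$) and the explicit teacher region of positive entries with ratio at most $2$, verifying positivity of $\beta+\gamma t_i$ by quantitative estimates; you instead fix the uniform distribution on $[-1,1]$ (so $e=-6/5$, negative excess kurtosis) and exploit the symmetry of the balanced teacher $t_0\mathbf 1$, where every branch collapses to $v_j^2=1/k>0$, and then invoke openness of the finitely many strict conditions to get a positive-measure $\mathcal T$. The paper's route buys an explicit, quantitative family of admissible $(\mathcal D,\mathcal T)$; yours is shorter, shows the exponential count also occurs for platykurtic data and near maximally symmetric teachers, but leaves $\mathcal T$ non-explicit. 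One cosmetic point: your closing remark that $e=0$ ``forces $k=1$'' is only accurate for teachers with distinct entries (at the balanced teacher the Gaussian problem degenerates), but this aside plays no role in the argument.
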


\Cref{thm:iid-case} is proven in the appendix.  This stands in  contrast to the Gaussian and Frobenius cases for low-rank approximation, where the corresponding optimization problem has only $n$ critical points generically. The number $(3^n -1)/2$ in \Cref{thm:iid-case} coincides with the ``generic ED degree'' reported in~\cite[Example 5.6]{draisma2013}.
{A numerical experiment illustrating these results is presented in the appendix.}

\section{Conclusions} 

We investigated various aspects of shallow networks with polynomial activations. We described the function space of these models and explored the relationship between network width and the loss landscape. Additionally, we examined teacher-student problems, relating them to low-rank tensor approximations and showing how certain discriminants encode the relation between data and optimization. We also focused on networks with quadratic activations, presenting variations of the Eckart-Young Theorem for teacher-student problems under different norms.

Several directions could be pursued in future research. First, we hope to further explore the teacher-metric discriminant and its applications in machine learning. It would also be desirable to close the gap between negative and positive results on landscape and width, 
beyond the cases of $d=2$ or $n=2$ which were mainly considered in this work. Teacher-student problems could be investigated in detail in more diverse settings, such as with higher activation degrees or distributions beyond Gaussians (e.g., mixtures of Gaussians). More broadly, we have suggested the possible applicability of our analysis to networks with non-polynomial activations, but such connections remain to be developed.

\paragraph{Acknowledgments.} J.K. and E.P. were supported partially by NSF DMS 2309782, NSF CISE-IIS 2312746  and DE SC0025312, as well as start-up grants and a JTO fellowship from the University of Texas at Austin.
YA was partially supported by the Israel Science Foundation (grant No. 724/22).
\vspace{.5cm}

\small{
\noindent\textit{Yossi Arjevani:} The Hebrew University (\texttt{yossi.arjevani@gmail.com}) \\[2pt]
\textit{Joan Bruna:} New York University (\texttt{bruna@cims.nyu.edu}) \\[2pt]
\textit{Joe Kileel:} University of Texas at Austin (\texttt{jkileel@math.utexas.edu}) \\[2pt]
\textit{Elzbieta Polak:} University of Texas at Austin (\texttt{epolak@utexas.edu}) \\[2pt]
\textit{Matthew Trager:} AWS AI Labs, work done outside of Amazon (\texttt{matthewtrager@gmail.com})
}

\bibliographystyle{plain}
\bibliography{references}
\newpage

\appendix
{
\section{Numerical Experiment}
We present a numerical experiment to illustrate the results of Section~\ref{sec:quadratic}. We fix a teacher quadratic model \(\mathcal{V} = (\beta, V) \in \mathbb{R}^5 \times \mathbb{R}^{5 \times 5}\) defined by  
\[
\beta = \begin{bmatrix} -4 & -2 & 1 & 3 & 5 \end{bmatrix}, \qquad V = I - \tfrac{2}{5} J,
\]
where \(I \in \mathbb{R}^{5 \times 5}\) is the identity matrix and \(J \in \mathbb{R}^{5 \times 5}\) is the all-ones matrix. Since \(V\) is orthogonal, the eigenvalues of the teacher matrix \(T = V^\top \operatorname{diag}(\beta) V \in \operatorname{Sym}^2(\mathbb{R}^5)\) are precisely \(\beta\). 
We generate a dataset \(\{(x_i, y_i)\}_{i=1}^N \subset \mathbb{R}^5 \times \mathbb{R}\) of size \(N = 50,000\), where inputs \(x_i \in \mathbb{R}^5\) are sampled from a standard Gaussian distribution, and outputs are computed as \(y_i = f_{\mathcal{V}}(x_i) = x_i^\top T x_i\). This dataset is used to train a student model \(f_{\mathcal{W}}\) of width \(r = 3\) by minimizing the mean squared error loss $L(\mathcal{W}) = \frac{1}{N} \sum_{i=1}^N |y_i - f_{\mathcal{W}}(x_i)|^2$.
We use stochastic gradient descent (SGD) with a batch size of 256, a learning rate of \(10^{-4}\), and train for 500 epochs, with Pytorch's default initialization of linear layers. The experiment is repeated 1000 times, fixing the teacher model \(\mathcal{V}\) across all trials.\\
From Corollary~\ref{cor:param-function-quadratic}, the critical points in parameter space correspond to the union of the critical points in function space of the functional loss restricted to the smooth loci of the low-rank manifolds \(\mathcal{S}(r, 5)\) for \(r \leq 3\), i.e. $\tau_r(\operatorname{Crit}(L)) = \bigcup_{r \leq 3} \operatorname{Crit}(h_T|_{\mathcal{S}(r, 5)})$.
Here, the functional loss is given by the empirical risk squared norm \(h_T(S) = \|T - S\|^2_{ERM}\)~\eqref{eq:product_ERM}, which in the large-sample limit converges to the Gaussian norm \(h_T(S) = 2\langle S - T, S - T \rangle_F + \operatorname{tr}(S - T)^2\). Within each manifold \(\mathcal{S}(r, 5)\), all critical points and their indices are then completely described by Theorem~\ref{thm:eckart-young-Gauss}. In our case, this results in a total of \(\binom{5}{3} + \binom{5}{2} + \binom{5}{1} = 25\) non-zero critical points. Computing the indices of these critical points using~\eqref{eq:index-gaussian}, we find that there are \(2 + 2 + 2 = 6\) local minima across the manifolds \(\mathcal{S}(3, 5)\), \(\mathcal{S}(2, 5)\), and \(\mathcal{S}(1, 5)\), respectively. Note, however, that these low-rank minima are not necessarily minima in parameter space, though any minimum in parameter space must correspond to one of them. For example, in the notation of Theorem~\ref{thm:eckart-young-Gauss}, the global minimum corresponds to the index set $\mathcal I = \{1,4,5\}$, which defines the student matrix $S_{\mathcal I} = V^{\top} {\rm diag}(\gamma_{\mathcal I}) V$, where
\[
\gamma_{\mathcal I} = \begin{bmatrix} -4.2 & 0 &  0 & 2.8  & 4.8 \end{bmatrix}.
\]
The values of \(\gamma_{\mathcal{I}}\) are computed by adding \(c_{\mathcal{I}} = \frac{-2 + 1}{5} = -0.2\) to the selected eigenvalues of the teacher matrix.\\
Figure~\ref{fig:eigenvalue_histogram_data} shows the eigenvalue histograms of the student models at the end of training, which align closely with the eigenvalues of the theoretically predicted minima. The average Frobenius distance between trained students and their nearest theoretical minima was approximately \(8 \times 10^{-2}\). We also performed direct optimization of the Gaussian norm (without sampling data) with a learning rate of \(10^{-3}\) and 10,000 iterations. The resulting eigenvalue distributions (Figure~\ref{fig:eigenvalue_histogram_norm}) again match the predicted minima and agree closely with the sample-based results.  Figure~\ref{fig:nearest_minima_counts} shows the frequency with which each theoretical minimum is closest to the trained student models. Only 4 of the 6 minima were actually reached in our trials. Finally, we repeated the experiment, replacing the standard Gaussian distribution used to generate inputs with a coordinate-wise uniform distribution in ([-1, 1]). Using the exact same training parameters, the final student models were significantly more dispersed, in line with Theorem~\ref{thm:iid-case}: with a tolerance of \(0.05\) Frobenius norm, there were 42 distinct students at the end of training, compared to only 5 in the Gaussian setting.
}

\begin{figure}[h!]
    \centering
    \begin{subfigure}[b]{0.45\linewidth}
        \includegraphics[width=\linewidth]{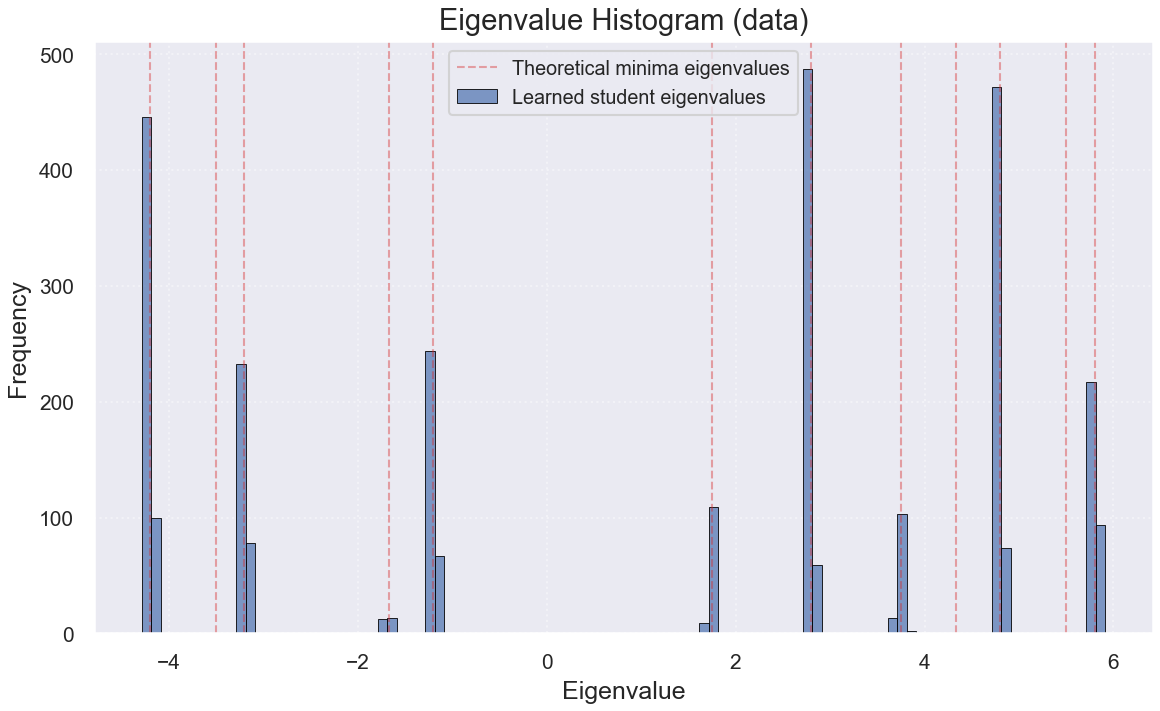}
        \caption{Eigenvalue histogram for final students trained with sample-based mean squared error loss.}
        \label{fig:eigenvalue_histogram_data}
    \end{subfigure}
    \hfill
    \begin{subfigure}[b]{0.45\linewidth}
        \includegraphics[width=\linewidth]{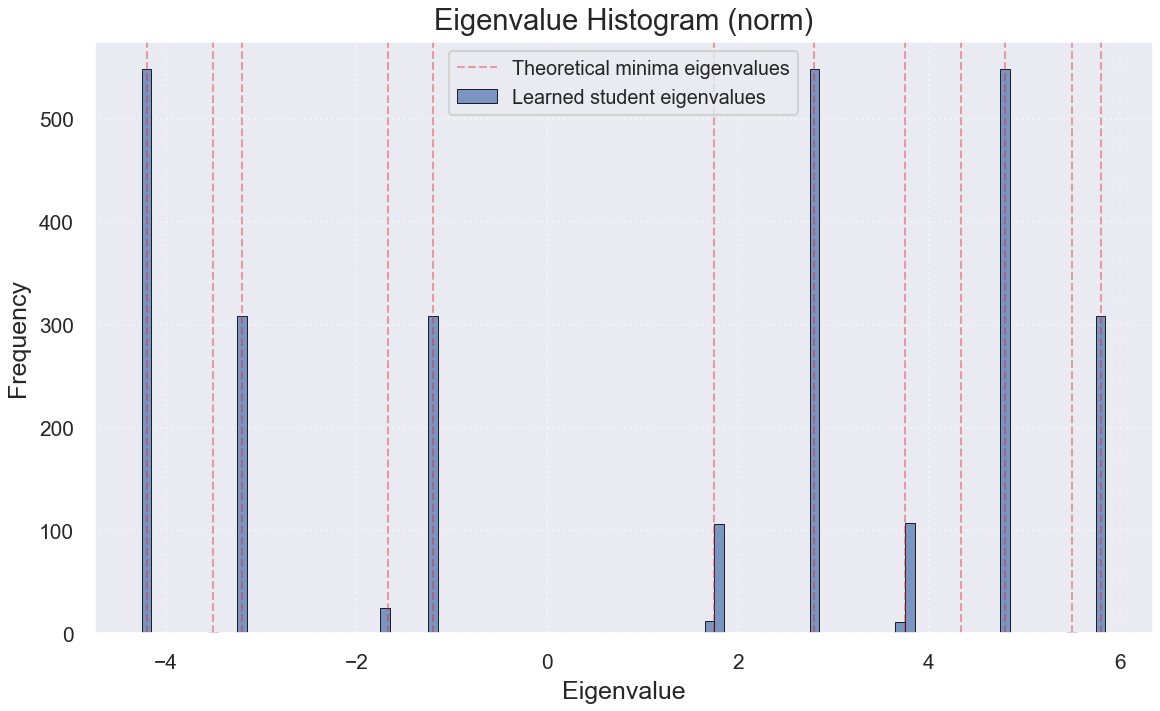}
        \caption{Eigenvalue histogram for final students trained with Gaussian square norm minimization.}
        \label{fig:eigenvalue_histogram_norm}
    \end{subfigure}
    \vspace{0.5cm}
    \begin{subfigure}[b]{0.9\linewidth}
        \centering
        \includegraphics[width=0.6\linewidth]{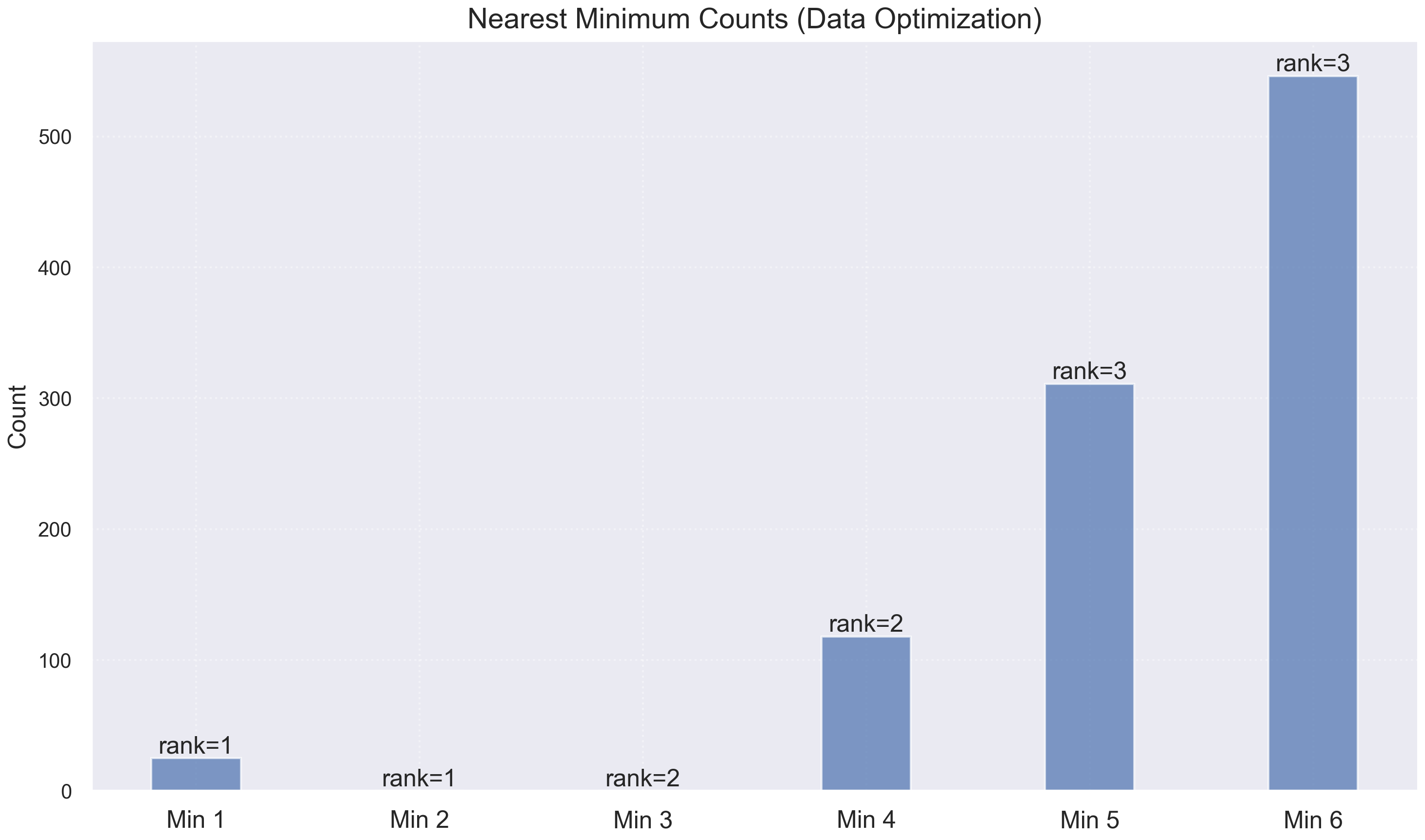}
        \caption{Counts of nearest minima reached by student models, where `Min 6' corresponds to the global minimum.}
        \label{fig:nearest_minima_counts}
    \end{subfigure}
    \caption{{(a)–(b) Eigenvalue histograms of student models after training, overlaid with the predicted minima. Both sample-based and norm-based optimization yield nearly identical spectra that closely match the eigenvalues of the predicted minima. (c) Frequencies of minima reached (sample-based setting; results are nearly identical for the norm-based setting)}}
    \label{fig:results}
\end{figure}

\section{Proofs and Additional Results}

For quadratic networks, we prove 
the following topological description of the fiber.

\begin{theorem}\label{thm:fiber_quadratic} Assume $S \in 
\Sym^2(\RR^n)$ has signature $(s_+, s_-, s_0)$. The number of connected 
components of the fiber
\[
\tau_r^{-1}(S) = \{(\alpha,W) \colon W^T \diag(\alpha) W = S\}\subset \RR^r \times \RR^{r \times n},
\]
is as follows:
\begin{enumerate}
\item if $\rk(S) > r$, then $\tau_r^{-1}(S)$ is empty;
\item if $\rk(S) = r$ and $s_+=0$ or $s_- = 0$, then $\tau_r^{-1}(S)$ has $2$ connected components;
\item if $\rk(S) = r$, $s_+>0$, {and}  $s_- > 0$, then $\tau_r^{-1}(S)$ has $4 \cdot \binom{r}{s_+} = 4 \cdot \binom{r}{s_-}$ connected components;
\item if $\rk(S) < r$, then $\tau_r^{-1}(S)$ is connected.
\end{enumerate}
\end{theorem}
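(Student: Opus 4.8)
The overall plan is to normalize $S$ by a congruence, collapse the kernel directions by an explicit deformation retraction, and then analyze the resulting fiber over a nonsingular diagonal sign matrix in the two regimes $r=\rk(S)$ and $r>\rk(S)$ separately.

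\emph{Reductions.} The case $\rk(S)>r$ is immediate, since $\rk(\tau_r(\mathcal W))=\rk(W^\top\diag(\alpha)W)\le\rk(W)\le r$. So assume $\rho:=\rk(S)\le r$, with signature $(s_+,s_-,s_0)$. By Sylvester's law $S=B^\top\hat J B$ with $B\in GL_n(\RR)$ and $\hat J=\diag(I_{s_+},-I_{s_-},0_{s_0})$, and since $(\alpha,W)\mapsto(\alpha,WB^{-1})$ is a homeomorphism carrying $\tau_r^{-1}(S)$ onto $\tau_r^{-1}(\hat J)$, we may assume $S=\hat J$. Writing $w_i=(w_i',w_i'')$ with $w_i''\in\RR^{s_0}$, the identity $\sum_i\alpha_i w_iw_i^\top=\hat J$ forces $\sum_i\alpha_i w_i''w_i''^\top=0$ and $\sum_i\alpha_i w_i'w_i''^\top=0$; hence $H_t(\alpha,W):=\bigl(\alpha,[\,w_i'\mid(1-t)w_i''\,]_i\bigr)$ stays in the fiber for all $t$ and is a strong deformation retraction of $\tau_r^{-1}(\hat J)$ onto $\{w_i''=0\ \forall i\}$, which is homeomorphic to $\tau_r^{-1}(J)$ inside $\Sym^2(\RR^m)$, where $m:=\rho$ and $J:=\diag(I_{s_+},-I_{s_-})$. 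Thus it suffices to compute the number of connected components of $\tau_r^{-1}(J)$, distinguishing $r=m$ (items 2--3) from $r>m$ (item 4).

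\emph{Tight case $r=m$.} Here $\rk(W)=m$ and every column is active (a vanishing $\alpha_i$ or $w_i$ would drop $\rk(S)$ below $m$), so $\{w_i\}$ is a basis; comparing the inertia of $S$ with that of $\sum_{\alpha_i>0}\alpha_iw_iw_i^\top$ and $\sum_{\alpha_i<0}|\alpha_i|w_iw_i^\top$ shows the sign vector of $\alpha$ has exactly $s_+$ positive entries, and this sign pattern is locally constant on the fiber (driving an $\alpha_i$ to $0$ drops the rank). Hence $\tau_m^{-1}(J)$ is a disjoint union of $\binom{m}{s_+}$ pieces, pairwise homeomorphic via coordinate permutations; on one piece the substitution $u_i:=\sqrt{|\alpha_i|}\,w_i$ defines a map onto $O(s_+,s_-)=\{U: U^\top J U=J\}$ with fibers $\cong\RR_{>0}^m$ (connected) and a continuous section $U\mapsto\bigl((1,\dots,1,-1,\dots,-1),U\bigr)$, so it is a bijection on $\pi_0$. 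Since $\pi_0(O(p,q))\cong\pi_0(O(p))\times\pi_0(O(q))$ has $2$ elements if $p=0$ or $q=0$ and $4$ elements if $p,q\ge1$, the fiber has $2$ components when $s_+=0$ or $s_-=0$, and $4\binom{m}{s_+}=4\binom{r}{s_+}=4\binom{r}{s_-}$ components when $s_+,s_->0$, proving items 2 and 3.

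\emph{Slack case $r>m$.} We must show $\tau_r^{-1}(J)$ is connected, which I would do by constructing, for an arbitrary point, a path in the fiber to the standard point $\mathcal W_\star=\bigl((1,\dots,1,-1,\dots,-1,0,\dots,0),(e_1,\dots,e_m,0,\dots,0)\bigr)$, in stages: (i) rescale each active column along $(\alpha_i,w_i)\mapsto\bigl(\operatorname{sign}(\alpha_i)|\alpha_i|^{1-t},\,|\alpha_i|^{t/2}w_i\bigr)$ (which fixes $\alpha_iw_iw_i^\top$) to obtain $\alpha_i\in\{-1,0,1\}$, and shrink inactive columns to $(0,0)$; (ii) using an inactive ("spare") column as scratch space — copy some $w_j$ into it while its weight is $0$, then transfer weight along the now-parallel pair via $\alpha_j\mapsto\alpha_j-t\lambda$, $\alpha_{\text{spare}}\mapsto t\lambda$ — reduce to exactly $m$ active columns carrying the standard basis. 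The feature distinguishing this regime from $r=m$ is that in the presence of a spare column the sign of an $\alpha_i$ can be moved through $0$ \emph{without} dropping $\rk(S)$ below $m$; this simultaneously connects the $\binom{m}{s_+}$ sign patterns and realizes reflections, collapsing the up-to-four components of $O(s_+,s_-)$ seen in the tight case (concretely, routing a column's weight through a spare column and back effects $w_i\mapsto-w_i$ composed with a basis transposition). The main obstacle is the bookkeeping: organizing these weight-transfer and rotation paths so that they provably remain in the fiber and terminate at $\mathcal W_\star$ for every starting configuration, in particular when all $r$ columns are initially active, where one first uses a linear dependence among the rows of $W$ to free up a column. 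Equivalently, one can run an induction on $r$ with base case $r=m+1$, the inductive step reducing to the claim that every point of $\tau_r^{-1}(J)$ connects to one with $w_r=0$.
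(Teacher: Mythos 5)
Your reduction and your handling of cases 1--3 are correct: normalizing $S$ by congruence, retracting away the kernel block via the linear homotopy $w_i''\mapsto(1-t)w_i''$, splitting the full-rank fiber according to the (locally constant) sign pattern of $\alpha$, and identifying $\pi_0$ of each piece with $\pi_0(O(s_+,s_-))$ through a surjection with connected fibers and a continuous section is sound. It is a mildly streamlined variant of the paper's argument, which keeps $S$ general, proves a lemma forcing exactly $s_+$ positive and $s_-$ negative entries of $\alpha$, and identifies each sign-pattern piece with $O(s_+,s_-)$ via a free and transitive action; both routes give $2$ resp.\ $4\binom{r}{s_+}$ components.

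The gap is case 4, which is the hard part of the theorem and where your text is a plan rather than a proof. All of your moves (normalizing $\alpha_i$ to $\pm1$, transferring weight between parallel columns, flipping a sign or permuting columns by routing weight through a spare column) presuppose an inactive column, so everything reduces to the step you defer as ``bookkeeping'': joining, inside the fiber, a configuration with all $r$ columns active (all $\alpha_i\ne 0$, $w_i\ne 0$, $r>m$) to one with a zero column. ``Use a linear dependence among the rows of $W$ to free up a column'' is not an argument: given $v\ne 0$ with $v^\top W=0$, the constraint $W^\top\diag(\alpha)W=J$ is quadratic, and one cannot simply slide a row along the dependence. The paper produces the zero row by moving $W$ along a path $M(t)W$ in the isometry group $O(\alpha_+,\alpha_-)=\{M\colon M^\top\diag(\alpha)M=\diag(\alpha)\}$, and this requires two genuinely nontrivial inputs absent from your sketch: (i) $O(\alpha_+,\alpha_-)$ has up to four connected components, so one can only carry a prescribed row to $v$ or $-v$, and the target row position must be chosen according to the sign of $v^\top\diag(\alpha)v$; and (ii) when $v$ is isotropic, $v^\top\diag(\alpha)v=0$, no isometry sends a row to a multiple of $v$ at all, and the paper instead uses the explicit shear $W(t)=W-(t/y_i)\,y\,w_i^\top$ with $y=\diag(\alpha)v$, verified by hand to stay in the fiber. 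Without an argument here, your induction ``every point of $\tau_r^{-1}(J)$ connects to one with $w_r=0$'' does not start, and connectivity in case 4---the only case whose answer differs qualitatively from the tight case---is unproved. The rest of your case-4 outline (weight transfer, sign flips through a spare column, connecting the resulting minimal configurations to the standard one) matches the paper's strategy and can be completed as you indicate, but only once the freeing-up step is supplied.
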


\begin{lemma}\label{lemma:signature_fiber} Let $S \in \Sym^2(\RR^n)$ be as in Theorem~\ref{thm:fiber_quadratic} and consider $(\alpha, W) \in \tau_r^{-1}(S)$. If $\alpha_+, \alpha_-$ are the number of (strictly) positive and negative elements in $\alpha$ respectively, then $\alpha_+ \ge s_+$, $\alpha_- \ge s_-$.
\end{lemma}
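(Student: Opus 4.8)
The plan is to exploit the fact that $S = W^\top \diag(\alpha) W$ exhibits $S$ as a congruence-type factorization and to invoke Sylvester's law of inertia in a relative form. First I would split the index set $\{1,\dots,r\}$ into $I_+ = \{i : \alpha_i > 0\}$, $I_- = \{i : \alpha_i < 0\}$, $I_0 = \{i : \alpha_i = 0\}$, with $|I_\pm| = \alpha_\pm$. Writing $W_+$, $W_-$ for the submatrices of $W$ with rows indexed by $I_+$, $I_-$ respectively (and discarding the rows in $I_0$, which contribute nothing), we get
\[
S = W_+^\top D_+ W_+ - W_-^\top |D_-| W_- = P - N,
\]
where $D_+ = \diag(\alpha_i)_{i \in I_+}$, $|D_-| = \diag(|\alpha_i|)_{i \in I_-}$, and $P := W_+^\top D_+ W_+ \succeq 0$, $N := W_-^\top |D_-| W_- \succeq 0$ are positive semidefinite of ranks at most $\alpha_+$ and $\alpha_-$ respectively.

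The key step is then: \emph{if $S = P - N$ with $P, N \succeq 0$, then the number of positive eigenvalues of $S$ is at most $\rk(P)$, and the number of negative eigenvalues of $S$ is at most $\rk(N)$.} To see the first inequality I would argue by contradiction using the min-max (Courant–Fischer) characterization: suppose $S$ has $s_+ > \rk(P)$ positive eigenvalues, so there is a subspace $U$ of dimension $s_+$ on which $x^\top S x > 0$ for all nonzero $x \in U$. Since $\rk(P) < s_+ = \dim U$, the kernel of $P$ meets $U$ nontrivially; pick $0 \neq x \in U \cap \ker P$. Then $x^\top S x = x^\top P x - x^\top N x = - x^\top N x \le 0$, contradicting $x^\top S x > 0$. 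The argument for $N$ and $s_-$ is symmetric (apply the same reasoning to $-S = N - P$). Combining this with $\rk(P) \le \alpha_+$ and $\rk(N) \le \alpha_-$ gives $s_+ \le \alpha_+$ and $s_- \le \alpha_-$, as desired.

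I do not expect a serious obstacle here; the only point requiring a little care is the dimension count ensuring $U \cap \ker P \neq \{0\}$ — this is just $\dim U + \dim \ker P = s_+ + (n - \rk P) > n$ when $s_+ > \rk P$ — and the fact that restricting to the rows with $\alpha_i \neq 0$ does not change the product $W^\top \diag(\alpha) W$. Everything else is Sylvester's law of inertia packaged in the inequality above, which is standard but worth stating explicitly since it is exactly the mechanism that forces the signature constraint.
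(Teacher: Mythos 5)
Your proof is correct and follows essentially the same route as the paper: the paper's subspace $U=\{v: (Wv)_i=0 \text{ for all } i \text{ with } \alpha_i>0\}$ is exactly your $\ker P$, and both arguments rest on the same dimension count showing that a putative positive eigenspace of dimension exceeding $\alpha_+$ would have to meet this subspace, where the quadratic form is $\le 0$. Your write-up just makes the $S = P - N$ splitting and the Courant--Fischer step more explicit than the paper does.
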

\begin{proof} It is enough to observe that $W^T \diag(\alpha) W$ can have at most $\alpha_+$ strictly positive eigenvalues. Indeed, $U = \{v \in \RR^n\,\, \colon \,\, (W v)_i = 0, \,\, \forall i \,\, s.t. \,\, \alpha_i > 0\}$ is a vector space of dimension at least $n-\alpha_+$ where the restriction of the quadratic form $v \mapsto v^T W^T \diag(\alpha) W v$ is negative semi-definite. The same argument shows that $W^T \diag(\alpha) W$ can have at most $\alpha_-$ strictly negative eigenvalues.
\end{proof}

\begin{proof}[Proof of Theorem~\ref{thm:fiber_quadratic}] If $\rk(S) > r$ then $\tau_r^{-1}(S)$ is empty since any matrix of the form $W^T \diag(\alpha) W$ has rank at most $r$. Assume now $\rk(S) = r$. If $(\alpha,W) \in \tau_r^{-1}(S)$, then all elements of $\alpha \in \RR^r$ are non-zero, and by Lemma~\ref{lemma:signature_fiber} exacly $s_+$ elements are positive and $s_-$ elements are negative. Equivalently, if we decompose the parameter space based on the sign of the elements of $\alpha$:
\begin{equation}\label{eq:decomposition_fiber}
\RR^r \times \RR^{r \times n} = \bigsqcup_{\sigma \in \{0,1,-1\}^r} \mathcal U_{\sigma}, \qquad \mathcal U_{\sigma} = \{(\alpha,W) \colon sgn(\alpha_i) = \sigma_i \},
\end{equation}
then $\tau_r^{-1}(S) \cap \mathcal U_{\sigma}$ is not empty if and only if $|\{i \colon \sigma_i = 1\}| = s_+$, $|\{i \colon \sigma_i = -1\}| = s_-$, and $|\{i \colon \sigma_i = 0\}| = 0$. This yields $\binom{r}{s_+} = \binom{r}{s_-}$ vectors $\sigma$ such that $\tau_r^{-1}(S) \cap \mathcal U_{\sigma}$ is not empty. Because $\mathcal U_{\sigma}$ with $\sigma_i \ne 0$ are disconnected, the connected components of $\tau_r^{-1}(S)$ are the union of the connected components of each $\tau_r^{-1}(S) \cap \mathcal U_{\sigma}$. We can thus describe the connected components of the fiber $\tau_r^{-1}(S)$ with the constraint that the signs of $\alpha_i$ are fixed. It is easy to see that we can in fact further assume that $\alpha_i = \pm 1$, since there is always a continuous path within the fiber $\tau_r^{-1}(S)$ that ``normalizes'' each $\alpha_i$:
\[
\alpha_i(t) = (1-t) \alpha_i  + t \frac{\alpha_i}{|\alpha_i|}, \quad w_i(t) =  \frac{w_i}{\sqrt{|\alpha_i(t)|}}, \quad W(t)^T \diag(\alpha(t)) W(t) = S.
\]
Writing $\mathcal V_{\sigma} = \{(\alpha,W) \colon \alpha_i = \sigma_i\}$ (so $\mathcal V_{\sigma} \subset \mathcal U_{\sigma}$), we focus on the connected components of $\tau_r^{-1}(S) \cap \mathcal V_{\sigma}$(formally, $\tau_r^{-1}(S) \cap \mathcal V_{\sigma}$ is a deformation retract of $\tau_r^{-1}(S) \cap \mathcal U_{\sigma}$, so it has the same number of connected components). By symmetry, we may assume that the first $s_+$ elments of $\alpha$ are $1$ and the last $s_-$ elements are $-1$. We consider the set
\begin{equation}\label{eq:indefinite_group}
O(s_+,s_-) = \{M \colon M^T \diag(\sigma) M = \diag(\sigma) \} \subset \RR^{n \times n}.
\end{equation}
If $s_+=0$ or $s_-=0$, this is the orthogonal group, otherwise it is the
``indefinite'' orthgonal group. The group $O(s_+,s_-)$ acts on $\tau_r^{-1}(S)
\cap \mathcal V_{\sigma}$ according to $(M,(\sigma,W)) \mapsto (\sigma, MW)$
since $W^T M^T \diag(\sigma) M W = W^T \diag(\sigma) W$. We claim that this
action is \emph{transitive} (any point in $\tau_r^{-1}(S) \cap \mathcal
V_{\sigma}$ can be mapped to any other) and \emph{free} (only the identity
fixes any point). Indeed, if $W$ and $W'$ are matrices in $\RR^{r \times n}$
such that $W^T \diag(\sigma) W = W'^T \diag(\sigma) W' = S$, then $Row(W) =
Row(W') = Row(S)$, and there exists $M \in GL(r, \RR)$ such that $W' = M W$.
Such $M$ belongs to $O(s_+, s_-)$ since $W^T M^T \diag(\sigma) M W = W^T
\diag(\sigma) W$ implies $M^T \diag(\sigma) M = Id_{r}$ (by multiplying by the
pseudoiverses of $W^T$ and $W$). This proves that the action is transitive.
The action is also free since if $(\sigma, W) \in \tau_r^{-1}(S)$ then $rk(W)
= r$ and $M W = W$ implies $M = Id$. The fact that the action is transitive
and free means that the number of connected components of $\tau_r^{-1}(S) \cap
\mathcal V_{\sigma}$ is the same as the number of connected components of
$O(s_+,s_-)$. It is known that $O(s_+,s_-)$ has two connected components if
$s_+ = 0$ or $s_- = 0$, and four connected components
otherwise~\cite{knapp2013lie}. In the former case, the two connected
components correspond to the sign of the determinant, while in the latter case
the components arise from the combinations of signs of the determinants of the
two diagonal blocks of sizes $s_+ \times s_+$ and $s_- \times s_-$. This
proves cases $2$ and $3$.

We now assume that $rk(S) < r$. We say that $S = W^T \diag(\alpha) W$ is a
``minimal factorization'' of $S$ if $\alpha$ has exactly $\rk(S) = s_+ + s_-$
non-zero elements that are all $1$ or $-1$ and if $w_i=0$ for all $i$ such
that $\alpha_i=0$. In this setting, $\alpha$ must have exactly $s_+$ elements
that are $1$ and $s_-$ elements that are $-1$
(Lemma~\ref{lemma:signature_fiber}). 
We will first prove that for every element $(\alpha,W) \in \tau_r^{-1}(S)$, we
can find a path inside $\tau_r^{-1}(S)$ from $(\alpha,W)$ to a minimal
factorization. Then we will show that any two minimal factorizations of $S$
are path connected in $\tau_r^{-1}(S)$.

For the first claim, we observe as before that it is always possible to
rescale $\alpha_i$ so that all of its elements are $1,-1$ or $0$. We thus need
to show that we may transform $\alpha$ so it has exactly $\rk(S)$ non-zero
elements. We prove this inductively arguing that if $\rk(S)<r$ and $W^T
\diag(\alpha) W = S$, we can find a path inside $\tau_r^{-1}(S)$ from
$(\alpha,W)$ to $(\alpha',W')$ so that $\alpha_i' = w_i' = 0$ for some $i$.
The same argument can then be repeated for the remaining $r'=r-1$ vectors
$w_i$ and elements $\alpha_i$ and iterated until $r' = \rk(S)$. If
$\alpha_i = 0$, then we can simply scale the corresponding $w_i$ and,
similarly, if $w_i=0$ we can scale the corresponding $\alpha_i$. If $\alpha_i
\ne 0$ and $w_i\ne 0$ for all $i$, then necessarily $\rk(W) < r$, so there
exists $v \in \RR^r$ such that $v^T W = 0$. Assume that $\alpha$ has the first
$\alpha_+$ elements equal to $1$, and the last $\alpha_-$ elements equal to
$-1$ (with $\alpha_++\alpha_-=r$). If $\alpha_+=0$ or $\alpha_-=0$, then we
consider a path $(\alpha(t),W(t))$ where $\alpha(t) = \alpha$, $W(t) = M(t) W$
and $M(t)$ is a path in $SO(r)$ such that $M(0) = Id_r$ and the first row of
$M(1)$ is equal to $v/\|v\|$ where $v^T W = 0$ (this path exists because
$SO(r)$ is connected). By construction the first row $M(1)W$ is zero, and we
the can rescale the corresponding $\alpha_1$ to zero, as desired. If
$\alpha_+>0$ and $\alpha_->0$ then we apply essentially the same argument
using the connected components of the indefinite orthogonal group $O(\alpha_+,\alpha_-) = \{M
\in \RR^{r \times r} \colon M^T \diag(\alpha) M = \diag(\alpha) \}$.  However,
we must first consider the special situation that our chosen $v \in
\RR^{r} \setminus \{0\}$ that satisfies $v^T W = 0$ is such that $v^T
\diag(\alpha) v=0$ (\ie, it is an isotropic vector). If this is the case,
we construct a path as follows. Letting $y = \diag(\alpha) v$ and
assuming that $y_i$ is a non-zero element of $y$ (note $y\ne 0$), we set
$W(t) = W - (t/y_i) y w_i^T$. Observing that $\diag(\alpha) y =
v$ and that $y^T \diag(\alpha) y = v^T \diag(\alpha)^3 v = 0$ (because
$\diag(\alpha)
\diag(\alpha) = Id$), we have that
\[
\begin{aligned} &(W - (t/y_i) y w_i^T)^T \diag(\alpha) (W - (t/y_i) y w_i^T) =
W^T \diag(\alpha) W,
\end{aligned}
\] and the $i$-th row of $W - (1/y_i)y w_i^T$ is $0$, as desired. We finally
assume that $v \in \RR^{k}$ for which $v^T W = 0$ is such that $v^T
\diag(\alpha) v \ne 0$. We can normalize $v$ such that $v^T
\diag(\alpha) v = 1$ or $v^T \diag(\alpha) v = - 1$. If $v^T
\diag(\alpha) v = 1$, then we can find a path $M(t)$ in $O(\alpha_+,\alpha_-)$ such that 
$M(0) = Id_r$ and the first row of $M(1)$ is equal to $v$ or $-v$ (this is
true in light of the description of the connected components of
$O(\alpha_+,\alpha_-)$).\footnote{It is helpful to note that $M \in O(\alpha_+,\alpha_-)$ if
and only if $M^T \in O(\alpha_+,\alpha_-)$. Indeed: $M^T \diag(\alpha) M = \diag(\alpha) \Leftrightarrow M^T \diag(\alpha) M \diag(\alpha) M^T = M^T \Leftrightarrow M \diag(\alpha) M^T = \diag(\alpha)$.}  Similarly, if $v^T
\diag(\alpha) v = -1$, we can find a path $M(t)$ in $O(\alpha_+,\alpha_-)$ such that 
$M(0) = Id_r$ and the $(\alpha_+)+1$-th row of $M(1)$ is equal to $v$ or $-v$.
Either way, we have that $(\alpha,W(t)) = (\alpha, M(t) W)$ is a path in
$\tau_r^{-1}(S)$ such that $W(0) = W$ and and $W(1)$ has a zero row. This
concludes the proof of the existence of a path in $\tau_r^{-1}(S)$ from any
$(\alpha,W)$ to a minimal factorization of $S$.

We now prove that any two minimal factorizations of $S$ are path connected in
$\tau_r^{-1}(S)$ if $\rk(S) < r$. We first observe that any two minimal
factorzations $(\alpha,W)$, $(\alpha',W')$ with $\alpha=\alpha'$ are
connected. Indeed, if $W^+ \in \RR^{\alpha_+\times  n}$ and $W'^+ \in
\RR^{\alpha_+ \times n}$ are submatrices of $W$ and $W'$ corresponding to elements of
$\alpha$ that are $1$ (the same argument applies to $W^- \in
\RR^{\alpha_-\times  n}$ and $W'^- \in
\RR^{\alpha_- \times n}$ corresponding to to elements of
$\alpha$ that are $-1$), then we have that $Row(W^+)=Row(W'^+)$, so there
exist $T \in GL(\alpha_+)$ such that $W'^+ = T W^+$. If $\det(T)>0$, then we
can find a path in $GL(\alpha_+)$ from $Id_{\alpha_+}$ to $T$ that transforms
$W'^+$ to $W^+$ without affecting $S$. If $\det(T)<0$, then we use a zero row
of $W$ to change the sign of one row vector $w$ of $W^+$. More precisely, if
$j$ is such that $\alpha_j = 0, w_j=0$, then we consider a path that changes the
sign of $w_i$ while keeping $\alpha_i(t) w_i(t) w_i(t)^T + \alpha_j(t) w_j(t)
w_j(t)^T$ constant:
\begin{equation}\label{eq:flip_path}
\begin{aligned}
&\alpha_i(t)=\begin{cases} \sqrt{1-3t} \cdot \alpha_i & t \in [0,1/3] \\
0 & t \in [1/3,2/3]\\
(3t-2) \cdot \alpha_i& t \in [2/3,1]\\
\end{cases}, 
&& w_i(t)=\begin{cases} \sqrt{1-3t} \cdot w_i & t \in [0,1/3]\\
-\sqrt{3t-1} \cdot w_i & t \in [1/3,2/3]\\
- w_i \cdot \alpha_i& t \in [2/3,1]\\
\end{cases}\\[.3cm]
&\alpha_j(t)=\begin{cases} \sqrt{3t} \cdot \alpha_i & t \in [0,1/3] \\
\alpha_i & t \in [1/3,2/3]\\
\sqrt{3-3t} \cdot \alpha_i& t \in [2/3,1]\\
\end{cases}, 
&& w_j(t)=\begin{cases} \sqrt{3t} \cdot w_i & t \in [0,1/3]\\
w_i & t \in [1/3,2/3]\\
\sqrt{3-3t} \cdot \alpha_i& t \in [2/3,1].\\
\end{cases}\\
\end{aligned}
\end{equation}
If instead we assume $\alpha \ne \alpha'$, then we observe the existence of a
path in $\tau_r^{-1}(S)$ that changes $\alpha$ into $\alpha'$. Indeed, there
exists a path very similar to~\eqref{eq:flip_path} such that $\alpha_j(0)=0$,
$w_j(0)=0$, $\alpha_j(1)=\alpha_i$, $w_j(1)=w_i$ and $\alpha_i(1)=0$, $w_i(1)=
0$ (it is enough to consider the subpath of~\eqref{eq:flip_path} corresponding
to $t \in [0,1/3]$). In other words, this path exchanges a zero row and a
non-zero row. By composing similar paths, we can also exchange a row where the
corresponding element of $\alpha$ is $1$ and one where the corresponding
element of $\alpha$ is $-1$. Using these exchanges between rows, we can reduce
the number of indices where $\alpha$ and $\alpha'$ differ until
$\alpha=\alpha'$ holds. Finally, from the previous analysis, we deduce that
when $\rk(S) < r$ the fiber $\tau_r^{-1}(S)$ is connected. This concludes the
proof of the theorem.
\end{proof}

We remark that the fiber has many more connected components 
over matrices with mixed signature than over semi-definite 
ones. The number of connected components can be viewed as a 
local measure of complexity of the landscape.

\begin{proposition}\label{prop:stiefel-param} Let $V_r(\RR^n):=\{U \in \RR^{r \times n}: U U^T={\rm Id}_r\}$ (Stiefel manifold) and consider the map
\[
\rho_r: \RR^r \times V_r(\RR^n) \rightarrow \Sym^2(\RR^n), \quad (\alpha, U) \mapsto U^T \diag(\alpha) U.
\]
(Note this is a restriction of our parameterization map $\tau_r$). Then the nullity of the differential $d \rho_r$ at $(\alpha, U)$ is equal to $m:=\#\{(i,j) \colon i < j, \alpha_i = \alpha_j\}$. If $m=0$, then $\rho_r$ is a local parameterization of $\mathcal S(r,n)$.
\end{proposition}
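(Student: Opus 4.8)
The plan is to compute $\ker d\rho_r(\alpha,U)$ directly in coordinates adapted to the frame $U$. First I would complete $U$ to an orthogonal matrix: choose $U^\perp\in\RR^{(n-r)\times n}$ with orthonormal rows spanning $(\operatorname{Row}U)^\perp$, so that $Q:=\begin{bmatrix}U\\ U^\perp\end{bmatrix}\in O(n)$. Conjugation by $Q$ identifies $\Sym^2(\RR^n)$ with block matrices $\begin{bmatrix}S_{11}&S_{12}\\ S_{12}^\top&S_{22}\end{bmatrix}$, $S_{11}\in\Sym^2(\RR^r)$, $S_{22}\in\Sym^2(\RR^{n-r})$, $S_{12}\in\RR^{r\times(n-r)}$. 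A tangent vector to $\RR^r\times V_r(\RR^n)$ at $(\alpha,U)$ is a pair $(\dot\alpha,\dot U)$ with $\dot\alpha\in\RR^r$ free and $\dot U U^\top+U\dot U^\top=0$; decomposing $\dot U=BU+CU^\perp$ with $B=\dot U U^\top\in\RR^{r\times r}$ and $C=\dot U(U^\perp)^\top\in\RR^{r\times(n-r)}$ (which uses $U^\top U+(U^\perp)^\top U^\perp=I_n$), the Stiefel constraint reduces exactly to $B^\top=-B$, while $C$ is unconstrained.

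Next I would substitute this into $d\rho_r(\alpha,U)(\dot\alpha,\dot U)=U^\top\diag(\dot\alpha)U+\dot U^\top\diag(\alpha)U+U^\top\diag(\alpha)\dot U$ and conjugate by $Q$. Using $UU^\top=I_r$, $U(U^\perp)^\top=0$, $U^\perp(U^\perp)^\top=I_{n-r}$ and $B^\top=-B$, a short calculation yields
\[
Q\,d\rho_r(\alpha,U)(\dot\alpha,\dot U)\,Q^\top=\begin{bmatrix}\diag(\dot\alpha)+[\diag(\alpha),B]&\diag(\alpha)\,C\\[2pt] C^\top\diag(\alpha)&0\end{bmatrix},
\]
with $[\,\cdot\,,\cdot\,]$ the matrix commutator. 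Hence $(\dot\alpha,\dot U)\in\ker d\rho_r(\alpha,U)$ iff $\diag(\alpha)C=0$ and $\diag(\dot\alpha)+[\diag(\alpha),B]=0$. Since $[\diag(\alpha),B]_{ij}=(\alpha_i-\alpha_j)B_{ij}$ has vanishing diagonal, the second equation forces $\dot\alpha=0$ together with $(\alpha_i-\alpha_j)B_{ij}=0$ for all $i,j$; that is, $B$ is skew-symmetric with $B_{ij}=0$ whenever $\alpha_i\neq\alpha_j$, a space of dimension exactly $\#\{(i,j):i<j,\ \alpha_i=\alpha_j\}=m$.

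To finish I would restrict to the case relevant for parameterizing $\mathcal S(r,n)$, namely when $\rho_r(\alpha,U)=U^\top\diag(\alpha)U$ has rank exactly $r$, i.e.\ all $\alpha_i\neq 0$: then $\diag(\alpha)C=0$ forces $C=0$, so $\ker d\rho_r(\alpha,U)=\{(0,BU):B^\top=-B,\ B_{ij}=0\text{ for }\alpha_i\neq\alpha_j\}$ has dimension $m$, giving the nullity claim. (For $\alpha$ with $k_0>0$ zero entries the same computation leaves the corresponding $k_0$ rows of $C$ free, contributing an extra $k_0(n-r)$; but then $\rho_r(\alpha,U)\notin\mathcal S(r,n)$.) If moreover $m=0$, then $d\rho_r(\alpha,U)$ is injective, and injectivity of the differential persists on a neighborhood, so $\rho_r$ is an immersion near $(\alpha,U)$; since $\dim(\RR^r\times V_r(\RR^n))=r+\big(rn-\binom{r+1}{2}\big)=rn-\binom r2=\dim\mathcal S(r,n)$ and, $\alpha$ being invertible, $\rho_r$ sends a neighborhood of $(\alpha,U)$ into the smooth manifold $\mathcal S(r,n)$, the Inverse Function Theorem shows $\rho_r$ restricts to a diffeomorphism onto an open subset of $\mathcal S(r,n)$.

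The hard part will be the bookkeeping in the conjugated differential — correctly resolving the Stiefel tangent space as $BU+CU^\perp$ and recognizing that $\diag(\dot\alpha)$, being diagonal, can only be cancelled against the zero-diagonal commutator $[\diag(\alpha),B]$; this is exactly what kills $\dot\alpha$ and isolates the ``$\alpha_i=\alpha_j$'' count. Everything else is routine matrix algebra.
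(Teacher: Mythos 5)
Your proof is correct and follows essentially the same route as the paper's: the paper's reduction to $U = [\mathrm{I}_r \; 0]$ is exactly your conjugation by the orthogonal matrix assembled from $U$ and $U^\perp$, and in both cases the kernel is read off from the diagonal block $\diag(\dot\alpha) + [\diag(\alpha),B]$ with $B$ skew-symmetric. You are in fact more careful than the paper on one point: the off-diagonal block $\diag(\alpha)C$, which the paper's proof does not discuss, contributes an extra $(n-r)\cdot\#\{i \colon \alpha_i = 0\}$ to the nullity, so the formula ``nullity $=m$'' as literally stated requires all $\alpha_i \neq 0$ (or $r=n$) --- harmless in the paper's applications, where the relevant eigenvalues are nonzero, and your parenthetical handles this correctly. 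Your concluding immersion-plus-dimension-count argument for the local parameterization claim is also fine and simply spells out what the paper leaves implicit.
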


\begin{proof}  
 Writing $u_1,\ldots,u_r$ for the rows of $U$, we have that
\begin{equation}\label{eq:drho}
d \rho_r(\alpha, U) = \sum_{i=1}^r \dot \alpha_i (u_i \otimes u_i) + \sum_{i=1}^r \alpha_i (u_i \otimes \dot u_i + \dot u_i \otimes u_i).
\end{equation}
From the condition $U U^T={\rm Id}_r$, the tangent space of $V_r(\RR^n)$ is characterized by $\dot u_i^\top u_i = 0$ and $\dot u_i^\top u_j + \dot u_j^\top u_i = 0$ for all $i \ne j$ in $\{1,\ldots,r\}$. Assuming without loss of generality that $U=[{\rm I}_r \,\, 0]$, the image of~\eqref{eq:drho} is an $n \times n$ symmetric matrix with $\dot \alpha_i$ in the first $r$ entries of the diagonal and $\alpha_i(u_i \otimes \dot u_i)_{ij} + \alpha_j(\dot u_j \otimes u_j)_{ij} = \alpha_i (\dot u_i)_j + \alpha_j (\dot u_j)_i =  (\alpha_i - \alpha_j) (\dot u_i)_j$ in the $(i,j)$-th entry with $i \ne j$ in $\{1,\ldots,r\}$ {(since $u_i, u_j$ are rows of $U=[{\rm I}_r, 0]$ and $\dot u_i^\top u_j + \dot u_j^\top u_i = (\dot u_i)_j + (\dot u_j)_i = 0$)}. From this description, the claim follows easily.
\end{proof}

\begin{proof}[Proof of~\Cref{prop:focal-locus-quadratic}]
If $\Sigma$ corresponds to the Frobenius norm, then ${N_{X,\Sigma}}=\{(S,A) \colon S \in X, \, S\cdot A = 0\}$. Our assumptions on $S$ and $A$ imply that $S = \sum_{i=1}^r \sigma_i'(u_i' \otimes u_i')$ and $A = \sum_{j=1}^{n-r} \sigma_j'' (u_j'' \otimes u_j'')$ each have distinct nonzero eigenvalues. It follows now from Proposition~\ref{prop:stiefel-param} that $(\sigma', \sigma'', u', u'') \mapsto \left(\sum_{i=1}^r \sigma_i'(u_i' \otimes u_i'), \sum_{j=1}^{n-r} \gamma \sigma_j'' (u_j'' \otimes u_j'')\right)$ is a smooth local parameterization of $N_{X,\Sigma}$ in a neighborhood of $(S, \gamma A)$. Since $u', u''$ are orthogonal, another application of Proposition~\ref{prop:stiefel-param} implies that the nullity of the differential of the endpoint map $(S,\gamma A) \mapsto S + \gamma A$ is given by the number of pairs of repeated eigenvalues of $T_\gamma = S + \gamma A$.

If $\Sigma$ corresponds to the Gaussian norm, then $N_{X,\Sigma}=\{(S,A) \colon S \in X, \, S\cdot (2A + {\rm tr}(A) {\rm Id}) = 0\}$. Similar to the previous case, a smooth local parameterization of $N_{X,\Sigma}$ in a neighborhood of $(S, \gamma A)$ is given by $(\sigma', \sigma'', u', u'') \mapsto \left(\sum_{i=1}^r \sigma_i'(u_i' \otimes u_i'), - \sum_{i=1}^r \gamma c (u_i' \otimes u_i') + \sum_{j=1}^{n-r} \gamma \sigma_j'' (u_j'' \otimes u_j'')\right)$ where $c = \frac{1}{r+2}\sum_{j=1}^{n-r} \sigma_j''$ and $u'$ and $u''$ are orthogonal (see the proof of Theorem~\ref{thm:eckart-young-Gauss} for a derivation). It follows again from Proposition~\ref{prop:stiefel-param} the nullity of the differential of the endpoint map is the number of pairs of repeated eigenvalues of $T_\gamma = S + \gamma A$. Here we reduce ourselves to the previous setting by applying the invertible linear map $(\sigma', \sigma'') \mapsto (\sigma' - \gamma c, \sigma'')$ to obtain the eigenvalues of $T_\gamma$.
\end{proof}

\begin{proof}[Proof of~\Cref{thm:iid-case}]
We analyze the first-order criticality condition for \eqref{eq:optim-problem}.
By \Cref{prop:iid-dist}, the Euclidean gradient for i.i.d. inner products is 
\begin{equation}
\nabla \ell(S)  = 4 \mu_2^2 (S-T) + 2\mu_2^2 \operatorname{tr}(S-T)I + (2\mu_4 - 6 \mu_2^2) \operatorname{Diag}(S-T),
\end{equation}
where $\operatorname{Diag}(\cdot)$  zeros out the off-diagonal part of a matrix.  The criticality equations read
\begin{equation} \label{eq:iid-crit}
0 = 4 \mu_2^2 (S-T)S + 2\mu_2^2 \operatorname{tr}(S-T)S+ (2\mu_4 - 6 \mu_2^2) \operatorname{Diag}(S-T)S
\end{equation}
by \Cref{prop:gradient-function}.  
The real solutions to \eqref{eq:iid-crit} with the correct rank are the critical points.
Write $S = \varepsilon v v^{\top}$ for $v \in \mathbb{R}^n \setminus \{0\}$  and $\varepsilon \in \{-1, 1\}$.  
Substituting this and $T = \operatorname{diag}(t)$ into \eqref{eq:iid-crit}, the condition is
\begin{equation} \label{eq:simplied-rk1}
    \left( 6 \mu_2^2 {\|v\|^2}  - 2 \varepsilon \mu_2^2 \langle t, \mathbf{1} \rangle\right) vv^{\top} 
    + \left( 2\mu_4 - 6 \mu_2^2 \right) \operatorname{diag}(v^{\ast 2}) v v^{\top} 
    + \left(- 2 \mu_4 + 2 \mu_2^2 \right) \varepsilon \operatorname{diag}(t) v v^{\top}  = 0,
\end{equation}
where asterisk denotes entrywise products or powers, and $\mathbf{1}$ is the all-ones vector.  
Since $v \neq 0$, \eqref{eq:simplied-rk1} is equivalent to
 \begin{equation} \label{eq:better-simplified}
 \left( 3 \mu_2^2 {\|v\|^2}  -  \varepsilon \mu_2^2 \langle t, \mathbf{1} \rangle\right) v
    + \left( \mu_4 - 3 \mu_2^2 \right) v^{\ast 3} 
    + \left(-  \mu_4 +  \mu_2^2 \right) \varepsilon t \ast v  = 0.
\end{equation}
Automatically \eqref{eq:simplied-rk1}  holds at coordinates $i \in [n]$ where $v_i = 0$.
Thus we restrict \eqref{eq:simplied-rk1} to the support of $v$, that is, $\mathcal{I} = \{ i : v_i \neq 0\}$.  
Assuming $\mu_2 \neq 0$, \eqref{eq:simplied-rk1} implies  $v_{\mathcal{I}}, v^{\ast 3}_{\mathcal{I}}, t_{\mathcal{I}} \ast v_{\mathcal{I}}$ are linearly dependent, where subscripts indicate subvectors. 
Therefore for all $i, j, k \in \mathcal{I}$, 
\begin{equation} \label{}
    0 = \det \begin{pmatrix} v_i & v_i^3 & t_i v_i \\ v_j & v_j^3 & t_j v_j \\ v_k & v_k^3 & t_k v_k \end{pmatrix} = v_i v_j v_k \left( (t_j - t_k) v_i^2 + (-t_i + t_k) v_j^2 + (t_i - t_j) v_k^2 \right),
\end{equation}
or equivalently, 
\begin{equation} \label{eq:t-linear-system}
0 =  (t_j - t_k) v_i^2 + (-t_i + t_k) v_j^2 + (t_i - t_j) v_k^2.
\end{equation}

At this point, we constrain  $\mathcal{T}$ by assuming $t$ has distinct entries.
Now regard \eqref{eq:t-linear-system} as a $|\mathcal{I}|^3 \times |\mathcal{I}|$ linear system in $v_{\mathcal{I}}^{\ast 2}$. 
We claim the null space is spanned by $\mathbf{1}$ and $t_{\mathcal{I}}$.
Indeed, these lie in the null space.  
If $\mathcal{I} \leq 2$, they clearly span $\mathbb{R}^{\mathcal{I}}$ and thus the null space.  
If $\mathcal{I} \geq 3$, supposing $w \in \mathbb{R}^{\mathcal{I}}$ lies in the null space, fix any distinct $i, j \in \mathcal{I}$ and choose $b, c \in \mathbb{R}$ so that $w' := w - b \mathbf{1} - c t_{\mathcal{I}}$ has zero coordinates in positions $i$ and $j$.  From \eqref{eq:t-linear-system} it follows $(t_i - t_j) w'_k = 0$ for all $k \in \mathcal{I}$, whence $w' = 0$.  Again, $\mathbf{1}$ and $t_{\mathcal{I}}$ span the null space.

By the preceding paragraph, \eqref{eq:t-linear-system} implies there exist $\beta, \gamma \in \mathbb{R}$ such that
\begin{equation}\label{eq:v2-nice}
v_{\mathcal{I}}^{\ast 2} = \beta \mathbf{1} + \gamma t_{\mathcal{I}}.
\end{equation}
Plugging \eqref{eq:v2-nice} into \eqref{eq:simplied-rk1} by writing $v_{\mathcal{I}}^{\ast 3} = (\beta \mathbf{1} + \gamma t_{\mathcal{I}}) \ast v$, we obtain
\begin{equation} \label{eq:getting-good}
\left( 3 \mu_2^2 \| v \|^2 - \varepsilon \mu_2^2 \langle t, \mathbf{1} \rangle + \beta ( \mu_4 - 3 \mu_2^2) \right) v_{\mathcal{I}} + \left(  \varepsilon(-  \mu_4 + \mu_2^2) +  \gamma( \mu_4 - 3 \mu_2^2) \right)(t_{\mathcal{I}} \ast v_{\mathcal{I}}) = 0.
\end{equation}

Now break into cases.  Firstly suppose $|\mathcal{I}| \geq 2$.  By assumptions, the coefficients of $v_{\mathcal{I}}$ and $t_{\mathcal{I}} \ast v_{\mathcal{I}}$ in \eqref{eq:getting-good} must vanish:
\begin{align}
3 \mu_2^2 \| v \|^2 - \varepsilon \mu_2^2 \langle t, \mathbf{1} \rangle + \beta ( \mu_4 - 3 \mu_2^2) &= 0, \label{eq:big1} \\
\varepsilon(-  \mu_4 + \mu_2^2) +  \gamma( \mu_4 - 3 \mu_2^2) &= 0.  \label{eq:big2}
\end{align}
Taking inner product with $\mathbf{1}$, \eqref{eq:v2-nice} implies
\begin{equation} \label{eq:big3}
\|v\|^2 = \beta | \mathcal{I} | + \gamma \langle t_{\mathcal{I}}, \mathbf{1} \rangle,
\end{equation}
where we used $\|v\|^2 = \| v_{\mathcal{I}} \|^2$.
We regard \eqref{eq:big1}, \eqref{eq:big2}, \eqref{eq:big3} as a $3 \times 3$ linear system in $\|v\|^2, \beta, \gamma$.   Using \eqref{eq:big2} to obtain $\gamma$ and then \eqref{eq:big1}, \eqref{eq:big3} to solve for $\beta, \|v\|^2$, in particular  
\begin{equation} \label{eq:my-constants}
\gamma = \frac{\varepsilon (\mu_4 - \mu_2^2)}{\mu_4 - 3 \mu_2^2} \quad  \quad \, \text{and} \quad \quad \, 
\beta = \frac{\varepsilon \mu_2^2 \langle t, \mathbf{1} \rangle - 3 \mu_2^2 \gamma \langle t_{\mathcal{I}}, \mathbf{1}\rangle} {\mu_4 + 3(|\mathcal{I}| -1) \mu_2^2}
\end{equation}
assuming that the denominators do not vanish.
From \eqref{eq:v2-nice}, it is necessary that 
\begin{equation} \label{eq:inequality}
\beta + \gamma t_i \geq 0 \quad \quad \text{for all } i \in \mathcal{I}.
\end{equation}
If $\mu_4 \gg \mu_2^2$, then $\gamma \approx \varepsilon$ and $\beta \approx 0$ so $\beta + \gamma t_i \approx \varepsilon t_i$.  
Hence \eqref{eq:inequality}  holds if the coordinates of $t$ all have sign $\varepsilon$.  
More precisely, we constrain the distribution $\mathcal{D}$ to be so heavy-tailed that 
\begin{equation}\label{eq:key-assume}
\mu_4 \geq 10 n \mu_2^2 > 0
\end{equation}
holds.  We constrain teachers by setting
\begin{equation} \label{eq:another-key}
\mathcal{T} = \{ t' \in \mathbb{R}^n : \textup{ all coordinates of } t' \textup{ are distinct and positive and } 1 \leq \frac{\max_{i \in [n]} t'_i}{\min_{i \in [n]} t'_i}  \leq 2 \},
\end{equation}
and assuming $t \in \mathcal{T}$. 
Then, $1 \leq |\gamma| \leq \frac{9}{7}$ and by Cauchy-Schwarz, \eqref{eq:key-assume}, \eqref{eq:another-key},
\begin{equation}
|\beta| \leq \frac{\mu_2^2 \|t\| \sqrt{n} + 3 \mu_2^2(\frac{9}{7}) \|t \| \sqrt{n} }{\mu_4} \leq \frac{\mu_2^2 \|t\| \sqrt{n} (\frac{34}{7})}{10 n \mu_2^2} < \frac{1}{2} \frac{\|t\|}{ \sqrt{n}} \leq \min_{i \in [n]} t_i.
\end{equation}
It follows $\beta + \gamma t_i$ has sign $\varepsilon$ for all $i \in \mathcal{I}$.  
By \eqref{eq:inequality} and \eqref{eq:v2-nice},  $\varepsilon = 1$ and $v_i^2 = \beta + \gamma t_i$ for $i \in \mathcal{I}$.  
Thus, there exist $\tau_i \in \{1, -1, 0\}$ for $i \in [n]$ with $\tau_i \neq 0$ if and only if $i \in \mathcal{I}$ such that   
$v_i = \tau_i \sqrt{\beta + \gamma t_i} \in \mathbb{R} \setminus \{0\}$ where  $\beta, \gamma$ are as in \eqref{eq:my-constants}, and $S = \varepsilon v v^{\top} \in \mathbb{R}^{n \times n}$ has entries $\tau_i \tau_j \sqrt{(\beta + \gamma t_i)(\beta + \gamma t_j)}$.
Conversely, such $S$ indeed satisfies the criticality condition \eqref{eq:iid-crit}.
So depending on the choice of $\tau$, there is a critical point with support specified by $\mathcal{I}$.  
Choices $\tau$ and $-\tau$ give the same $S$ and otherwise the matrices $S$ are distinct.  
It follows there are $2^{|\mathcal{I}|-1}$ many critical points with support determined by $\mathcal{I}$.

The other case is  $|\mathcal{I}|=1$.  
Then, $v$ is a scalar multiple of $e_i$, where $\mathcal{I} = \{i\}$. 
Returning back to \eqref{eq:better-simplified}, it is easy to directly compute 
$S = \frac{\mu_2^2 \langle t, 1 \rangle - \mu_2^2 t_i + \mu_4 t_i}{\mu_4} e_i e_i^{\top}$.
This is real, nonzero and satisfies \eqref{eq:iid-crit}.  So again, there are $2^{|\mathcal{I}|-1}$ many critical points in this case.

Varying the support set in the preceding two paragraphs, we conclude that under assumptions \eqref{eq:key-assume} and \eqref{eq:another-key} the problem \eqref{eq:optim-problem} has the following number of critical points:
\begin{equation*} \label{eq:big-sum}
\frac{1}{2} \, \sum_{\substack{\mathcal{I} \subseteq \{1, \ldots, n\} \\ \mathcal{I} \neq \emptyset}} 2^{|\mathcal{I}|} \,\,\, = \,\,\, \frac{1}{2} \sum_{i=1}^n \binom{n}{i} 2^i \,\,\, = \,\,\, \frac{1}{2} (3^n - 1).
\end{equation*}
Here, the second equality is by the binomial theorem.
Finally, note that \eqref{eq:another-key} has positive {Lebesgue}-measure in $\mathbb{R}^n$.  The proof of \Cref{thm:iid-case} is complete.
\end{proof}

\section{Full polynomial from Example~\ref{ex:2x2-discriminant}}

\begin{align*}
&729 \mu_2^{24} t_{00}^{3} t_{11}^{3} + 1458 \mu_2^{22} \mu_4 t_{00}^{4} t_{11}^{2} + 1458 \mu_2^{22} \mu_4 t_{00}^{2} t_{11}^{4} - 2187 \mu_2^{20} \mu_4^{2} t_{00}^{4} t_{01}^{2} \\
&+ 972 \mu_2^{20} \mu_4^{2} t_{00}^{5} t_{11} + 486 \mu_2^{20} \mu_4^{2} t_{00}^{2} t_{01}^{2} t_{11}^{2} + 1458 \mu_2^{20} \mu_4^{2} t_{00}^{3} t_{11}^{3} - 2187 \mu_2^{20} \mu_4^{2} t_{01}^{2} t_{11}^{4} \\
&+ 972 \mu_2^{20} \mu_4^{2} t_{00} t_{11}^{5} + 216 \mu_2^{18} \mu_4^{3} t_{00}^{6} - 5184 \mu_2^{18} \mu_4^{3} t_{00}^{3} t_{01}^{2} t_{11} - 486 \mu_2^{18} \mu_4^{3} t_{00}^{4} t_{11}^{2} \\
&- 5184 \mu_2^{18} \mu_4^{3} t_{00} t_{01}^{2} t_{11}^{3} - 486 \mu_2^{18} \mu_4^{3} t_{00}^{2} t_{11}^{4} + 216 \mu_2^{18} \mu_4^{3} t_{11}^{6} + 3132 \mu_2^{16} \mu_4^{4} t_{00}^{4} t_{01}^{2} \\
&- 864 \mu_2^{16} \mu_4^{4} t_{00}^{5} t_{11} + 6912 \mu_2^{16} \mu_4^{4} t_{00} t_{01}^{4} t_{11} - 11448 \mu_2^{16} \mu_4^{4} t_{00}^{2} t_{01}^{2} t_{11}^{2} - 1377 \mu_2^{16} \mu_4^{4} t_{00}^{3} t_{11}^{3} \\
&+ 3132 \mu_2^{16} \mu_4^{4} t_{01}^{2} t_{11}^{4} - 864 \mu_2^{16} \mu_4^{4} t_{00} t_{11}^{5} - 216 \mu_2^{14} \mu_4^{5} t_{00}^{6} + 4608 \mu_2^{14} \mu_4^{5} t_{00}^{2} t_{01}^{4} \\
&+ 2880 \mu_2^{14} \mu_4^{5} t_{00}^{3} t_{01}^{2} t_{11} - 36 \mu_2^{14} \mu_4^{5} t_{00}^{4} t_{11}^{2} + 4608 \mu_2^{14} \mu_4^{5} t_{01}^{4} t_{11}^{2} + 2880 \mu_2^{14} \mu_4^{5} t_{00} t_{01}^{2} t_{11}^{3} \\
&- 36 \mu_2^{14} \mu_4^{5} t_{00}^{2} t_{11}^{4} - 216 \mu_2^{14} \mu_4^{5} t_{11}^{6} - 2034 \mu_2^{12} \mu_4^{6} t_{00}^{4} t_{01}^{2} - 4096 \mu_2^{12} \mu_4^{6} t_{01}^{6} \\
&+ 360 \mu_2^{12} \mu_4^{6} t_{00}^{5} t_{11} - 1536 \mu_2^{12} \mu_4^{6} t_{00} t_{01}^{4} t_{11} + 7620 \mu_2^{12} \mu_4^{6} t_{00}^{2} t_{01}^{2} t_{11}^{2} + 604 \mu_2^{12} \mu_4^{6} t_{00}^{3} t_{11}^{3} \\
&- 2034 \mu_2^{12} \mu_4^{6} t_{01}^{2} t_{11}^{4} + 360 \mu_2^{12} \mu_4^{6} t_{00} t_{11}^{5} + 72 \mu_2^{10} \mu_4^{7} t_{00}^{6} - 1536 \mu_2^{10} \mu_4^{7} t_{00}^{2} t_{01}^{4} \\
&- 960 \mu_2^{10} \mu_4^{7} t_{00}^{3} t_{01}^{2} t_{11} + 12 \mu_2^{10} \mu_4^{7} t_{00}^{4} t_{11}^{2} - 1536 \mu_2^{10} \mu_4^{7} t_{01}^{4} t_{11}^{2} - 960 \mu_2^{10} \mu_4^{7} t_{00} t_{01}^{2} t_{11}^{3} \\
&+ 12 \mu_2^{10} \mu_4^{7} t_{00}^{2} t_{11}^{4} + 72 \mu_2^{10} \mu_4^{7} t_{11}^{6} + 348 \mu_2^{8} \mu_4^{8} t_{00}^{4} t_{01}^{2} - 96 \mu_2^{8} \mu_4^{8} t_{00}^{5} t_{11} + 768 \mu_2^{8} \mu_4^{8} t_{00} t_{01}^{4} t_{11} \\
&- 1272 \mu_2^{8} \mu_4^{8} t_{00}^{2} t_{01}^{2} t_{11}^{2} - 153 \mu_2^{8} \mu_4^{8} t_{00}^{3} t_{11}^{3} + 348 \mu_2^{8} \mu_4^{8} t_{01}^{2} t_{11}^{4} - 96 \mu_2^{8} \mu_4^{8} t_{00} t_{11}^{5} \\
&- 8 \mu_2^{6} \mu_4^{9} t_{00}^{6} + 192 \mu_2^{6} \mu_4^{9} t_{00}^{3} t_{01}^{2} t_{11} + 18 \mu_2^{6} \mu_4^{9} t_{00}^{4} t_{11}^{2} + 192 \mu_2^{6} \mu_4^{9} t_{00} t_{01}^{2} t_{11}^{3} \\
&+ 18 \mu_2^{6} \mu_4^{9} t_{00}^{2} t_{11}^{4} - 8 \mu_2^{6} \mu_4^{9} t_{11}^{6} - 27 \mu_2^{4} \mu_4^{10} t_{00}^{4} t_{01}^{2} + 12 \mu_2^{4} \mu_4^{10} t_{00}^{5} t_{11} + 6 \mu_2^{4} \mu_4^{10} t_{00}^{2} t_{01}^{2} t_{11}^{2} \\
&+ 18 \mu_2^{4} \mu_4^{10} t_{00}^{3} t_{11}^{3} - 27 \mu_2^{4} \mu_4^{10} t_{01}^{2} t_{11}^{4} + 12 \mu_2^{4} \mu_4^{10} t_{00} t_{11}^{5} - 6 \mu_2^{2} \mu_4^{11} t_{00}^{4} t_{11}^{2} - 6 \mu_2^{2} \mu_4^{11} t_{00}^{2} t_{11}^{4} \\
&+ \mu_4^{12} t_{00}^{3} t_{11}^{3}
\end{align*}

\end{document}